\newcommand{\E}{\mathop{\mathbb{E}}}
\newcommand{\R}{\mathbb{R}}
\newcommand{\MF}{\mathcal{F}}
\newcommand{\MG}{\mathcal{G}}
\newcommand{\MA}{\mathcal{A}}
\newcommand{\MX}{\mathcal{X}}
\newcommand{\MY}{\mathcal{Y}}
\newcommand{\MD}{\mathcal{D}}
\newcommand{\ME}{\mathcal{E}}
\newcommand{\MC}{\mathcal{C}}
\newcommand{\MZ}{\mathcal{Z}}
\newcommand{\MN}{\mathcal{N}}
\newcommand{\MV}{\mathcal{V}}
\newcommand{\MH}{\mathcal{H}}
\DeclareMathOperator{\supp}{supp}
\DeclareMathOperator{\vol}{vol}
\DeclareMathOperator{\Var}{Var}
\newcommand{\Main}{\textsc{AVE-Main}\xspace}
\newcommand{\Chk}{\textsc{Check}\xspace}
\newcommand{\Identify}{\textsc{Identify}\xspace}
\newcommand{\Eliminate}{\textsc{Eliminate}\xspace}
\newcommand*{\defeq}{\stackrel{\text{def}}{=}}
\newtheorem{theorem}{Theorem}[section]
\newtheorem{lemma}[theorem]{Lemma}
\newtheorem{assumption}[theorem]{Assumption}
\newtheorem{corollary}[theorem]{Corollary}
\newtheorem{remark}[theorem]{Remark}
\theoremstyle{definition}
\newtheorem{definition}[theorem]{Definition}
\newcommand{\citep}{\cite}
\renewcommand{\tilde}{\widetilde}
\renewcommand{\hat}{\widehat}
\title{$\sqrt{n}$-Regret for Learning in Markov Decision Processes with Function Approximation and Low Bellman Rank}
\begin{document}
\author[1,2]{Kefan Dong\thanks{Accepted for presentation at the Conference on Learning Theory (COLT) 2020. Author names are listed in alphabetical order. Correspondence to: {\tt yuanz@illinois.edu}. Work done while Kefan Dong was a visiting student at UIUC. Kefan Dong and Yuan Zhou were supported by a Ye Grant.}}
\author[2]{Jian Peng}
\author[3]{Yining Wang}
\author[4]{Yuan Zhou}
\affil[1]{\normalsize Institute for Interdisciplinary Information Sciences, Tsinghua University, Beijing, China}
\affil[2]{\normalsize Department of Computer Science, University of Illinois Urbana-Champaign, Urbana, IL 61820, USA}
\affil[3]{\normalsize Warrington College of Business, University of Florida, Gainesville, FL 32611, USA}
\affil[4]{\normalsize Department of ISE, University of Illinois Urbana-Champaign, Urbana, IL 61820, USA}

\maketitle

\begin{abstract}
In this paper, we consider the problem of \emph{online} learning of Markov decision processes (MDPs) with very large state spaces.
Under the assumptions of realizable function approximation and low Bellman ranks, we develop an online learning algorithm that learns the optimal value function
while at the same time achieving very low cumulative \emph{regret} during the learning process.
Our learning algorithm, Adaptive Value-function Elimination (\textsf{AVE}), is inspired by the policy elimination algorithm proposed in \cite{Jiang2017ContextualDP}, known as \textsf{OLIVE}.
One of our key technical contributions in \textsf{AVE} is to formulate the elimination steps in \textsf{OLIVE} as \emph{contextual bandit} problems. This technique enables us to apply the active elimination and expert weighting methods from \cite{dudik2011efficient}, instead of the random action exploration scheme used in the original \textsf{OLIVE} algorithm, for more efficient exploration and better control of the regret incurred in each policy elimination step. To the best of our knowledge, this is the first $\sqrt{n}$-regret result for reinforcement learning in stochastic MDPs with general value function approximation.
\end{abstract}

\section{Introduction}

Consider a Markov Decision Process (MDP) $\mathcal M=(\mathcal X,\mathcal A,H,p,r)$ with state space $\mathcal X$, action space $\mathcal A$, horizon $H$,
transition probabilities $p:\mathcal X\times\mathcal A\to\Delta(\mathcal X)$\footnote{$\Delta(\mathcal X)$ denotes all probability distributions over $\mathcal X$.}
 and reward function $r:\mathcal X\times\mathcal A\to\mathbb R$.
For notational simplicity, we assume that $\mathcal X$ can be partitioned into disjoint subsets as $\mathcal X=\mathcal X_1\cup\cdots\mathcal X_H$, such that $\mathcal X_h\cap\mathcal X_{h'}=\emptyset$ if  $h\neq h'$.
A \emph{policy} $\pi:\mathcal X\to\Delta(\mathcal A)$ is a function that maps a state $x\in\mathcal X$ to a distribution over actions $a\in\mathcal A$.
The objective of policy learning is usually formulated as an optimization of finding $\pi$ that achieves as large the \emph{expected reward} as possible under $\mathcal M$, which is defined as
\begin{equation}
\textstyle
R(\pi) := \mathbb E\big[\sum_{h=1}^H r_h| r_h\sim r(x_h,a_h), a_h\sim\pi(x_h), x_h\sim p(x_{h-1},a_{h-1})\big].
\label{eq:reward-pi}
\end{equation}
The \emph{optimal policy} $\pi$ that maximizes Eq.~(\ref{eq:reward-pi}) is denoted as $\pi^*$.
Without further confusion, for \emph{deterministic} policies (i.e., policies whose $\pi(\cdot)$ is a singleton for all states) 
we abuse the notation $\pi(x)\in\mathcal A$ for the action the policy takes at state $x$.
We remark that the optimal policy $\pi^*$ can always be made deterministic.

When the full specification of the MDP $\mathcal M$ is known, a near-optimal policy $\pi$ can be computed via the Bellman equation and (approximate) dynamic programming,
and is quite well understood in the literature \cite{bertsekas1995dynamic,powell2007approximate}.
In practical scenarios, however, it is usually the case that either the transition probabilities $p$ or the reward function $r$ (or both)
are \emph{unknown}, which need to be estimated, either implicitly or explicitly, through samples or rollout trajectories.
Such learning/planning problems with unknown $p$ and $r$ are also referred to as \emph{reinforcement learning}
and encapsulate several important artificial intelligence applications such as computer games \citep{mnih2015human,bellemare2016unifying}, board games \citep{silver2016mastering,silver2017mastering},
robotic manipulation \citep{gu2017deep}, and many more.

In this paper, we consider the problem of learning near-optimal policy $\pi$ with unknown $p$ and $r$ from two perspectives:
the \emph{sample complexity} perspective, which seeks for the smallest number of realized trajectories (possibly obtained using different exploration policies)
in order to obtain a good policy with high probability, and the \emph{online learning} perspective which characterizes how the exploration policies themselves evolve and improve 
over time.
In the rest of this section, we lay out the basic assumptions and our main results and contributions. 
A more detailed \emph{technical} overview of our results is given in Sec.~\ref{sec:prelim}.

\subsection{Function approximation}

When the state space $\mathcal X$ is finite with small cardinality $|\mathcal X|$, all states $x$ can be enumerated in learning. This is known as the \emph{tabular} MDP setting, which has been extensively studied \citep{jin2018q, zanette2019tighter, azar2017minimax, strehl2006pac, azar2011speedy,even2003learning, sidford2018variance}.
In many real-world problems, however, $|\mathcal X|$ can be very large or even infinite.
For example, in the Go game, the total number of states could be as large as $2\times10^{170}$, clearly infeasible for any approach that attempts to enumerate them.

It is clear that, in order to handle MDPs with very large state spaces, aggressive compression of the state space is required for practical purposes.
In the literature, such compression is most naturally accomplished by the idea of \emph{function approximation}, 
which considers a finite class\footnote{The requirement that $\mathcal F$ is finite could be removed, as shown in Sec.~\ref{sec:infinite} later in this paper.} of functions
$\mathcal F=\{f:\mathcal X\times\mathcal A\to\mathbb R\}$
and restricts ourselves to policies $\Pi=\{\pi_f:f\in\mathcal F\}$ ``induced'' by certain function approximates, defined as
\begin{equation}
\pi_f(x) = \arg\max_{a\in\mathcal A}f(x,a).
\label{eq:pif}
\end{equation}
In essence, the complexity of the function class $\mathcal F$ captures all inherent structures in the MDP $\mathcal M$ with a very large state space.
In practice, the approximation function classes range from linear or low-degree polynomials in revenue management problems \citep{talluri1998analysis,adelman2007dynamic} to very complicated convolutional or recurrent neural networks for complex games \citep{mnih2015human,mnih2016asynchronous}.

To ensure a considered function approximation is appropriate, we impose the following \emph{realizability} assumption 
which guarantees the correspondence between the optimal policy $\pi^*$ and a function $f^*\in\mathcal F$:
\begin{assumption}[Realizability]\label{assumption:realizability}
For the optimal policy $\pi^*$, there exists $f^*\in\mathcal F$ such that for all $h\in[H]$, 
$x_h\in\mathcal X_h$ and $a_h\in\mathcal A$,
\begin{equation}
\textstyle
Q_h^{\pi^*}(x_h,a_h) := \mathbb E\big[\sum_{h'\geq h}r_{h'}\big| r_{h'}=r(x_{h'},a_{h'}), a_{h'}= \pi^*(x_{h'}), s_{h'}\sim p(x_{h'-1},a_{h'-1})\big] = f^*(x_h,a_h).
\label{eq:qh}
\end{equation}
\label{asmp:realizability}
\end{assumption}

We remark that Assumption \ref{asmp:realizability} is a \emph{monotonic} assumption, meaning that if it holds for function class $\mathcal F$
then it also holds for all $\mathcal F'\supseteq\mathcal F$.
While such monotonicity property is desirable, allowing us to use slightly more than necessary function approximators,
such property does \emph{not} hold for many ``completeness'' type conditions in the literature, as we remark in more details in Sec.~\ref{sec:related-work}.

\subsection{Bellman factorization and Bellman rank}
It is easy to verify from definition that the ``Q-function'' defined in Eq.~(\ref{eq:qh}) satisfies the celebrated \emph{Bellman's equation}, which states that for any timestep or layer $h$, state $x_h$ and action $a_h$, 
\begin{equation}
Q_h^{\pi^*}(x_h,a_h) := r(x_h,a_h) + \mathbb E_{x_{h+1}\sim p(\cdot\mid x_h,a_h)}\Big[\max_{a_{h+1}\in\mathcal A} Q_{h+1}^{\pi^*}(x_{h+1},a_{h+1})\Big],
\label{eq:bellman-recursion}
\end{equation}
with the boundary condition that $Q_{H+1}^{\pi^*}\equiv 0$.
With function approximation, Eq.~(\ref{eq:bellman-recursion}) can be simplified as
\begin{equation*}
\Delta(f^*,x_h,a_h) := \mathbb E\big[f^*(x_h,a_h) - r(x_h,a_h) - f^*(x_{h+1},a_{h+1})\big| x_{h+1}\sim p(x_h,a_h), a_{h+1}=\pi^*(x_{h+1})\big] = 0
\end{equation*}
for all $h\in[H], x_h\in\mathcal X_h$ and $a_h\in\mathcal A$.
An ``averaging'' extension of $\Delta(f^*,x_h,a_h)$ for function approximation $f\in\mathcal F$ and ``roll-in'' policy $\pi$ can then be defined as
\begin{equation}
\ME(f,\pi,h):= \E_{x\sim \MD_{\pi, h}}\E_{x'\sim p(\cdot\mid x,\pi_f(x))}\left[f(x,\pi_f(x))-r(x,\pi_f(x))-f(x',\pi_f(x'))\right],
\end{equation}
where $\pi_f$ is the induced policy of $f$ as defined in Eq.~(\ref{eq:pif}),
and $\MD_{\pi,h}$ is the distribution of $x_h$ when policy $\pi$ is performed at timestep $1, 2, \cdots,h-1$.
Eq.~(\ref{eq:bellman-recursion}) then implies that $\ME(f^*,\pi,h)=0$ for all policy $\pi$ and $h$,
and any function $f\in\mathcal F$ satisfying $\ME(f,\pi,h)=0$ for all $\pi$ and $h$ potentially corresponds to the optimal policy.

It turns out that, when using the Bellman error $\ME(f,\pi,h)$ as criteria to eliminate incorrect function approximation and/or sub-optimal policies, 
the behavior of $\ME(f,\pi,h)$ with ``self-induced'' roll-in policies $\pi\in\{\pi_g:g\in\mathcal F\}$ plays important roles in the complexity of the learning problem.
A ``low Bellman rank'' assumption is thus imposed, described in the following:
\begin{assumption}
\label{assumption:bellman_rank} 
There exists $M\ll |\mathcal F|$ such that,
for any $h\in[H]$ and function approximation $f,g\in\mathcal F$, $\ME(f,\pi_g,h)$ can be decomposed as
$$
\ME(f,\pi_g,h) = \langle\nu_h(g),\xi_h(f)\rangle
$$
for some $\nu_h(g),\xi_h(f)\in\mathbb R^M$ satisfying $\|\nu_h(g)\|_2\|\xi_h(f)\|_2\leq\zeta<\infty$.
\end{assumption}

While Assumption \ref{assumption:bellman_rank} is largely a theoretically motivated assumption, 
it also holds in many interesting examples such as tabular MDPs with low-rank transitions, linear quadratic regulators,
POMDPs with small hidden state spaces, gird-world environments (e.g., the Malmo platform \citep{johnson2016malmo}), etc.
Interested readers should refer to the work of \cite{Jiang2017ContextualDP} for detailed motivations and examples of the low-Bellman-rank assumption.

\subsection{From PAC-learning to online learning}

Suppose the learning algorithm has access to $n$ sequentially collected trajectories,
and an \emph{adaptive} policy $\pi^{(i)}$ can be used to generate the $i$th trajectory, which might depend on the algorithm's observations from the previous $(i-1)$ realized trajectories.
Under the ``Probably Approximately Correct (PAC)'' framework, after observing data from $n$ trajectories with $n$ depending \emph{polynomially} on the problem size,
the algorithm is asked to output a policy $\widehat\pi$ which is near-optimal with high probability.
The work in \cite{Jiang2017ContextualDP} provided the first PAC-learning result under Assumptions \ref{asmp:realizability} and \ref{assumption:bellman_rank}:
\begin{theorem}[\cite{Jiang2017ContextualDP}]
There exists an algorithm and a model-dependent constant $C_{\mathcal M}$ that is a polynomial of $H,|\mathcal A|,M,\zeta$ and $\log|\mathcal F|$ such that, 
for any $\varepsilon\in(0,1/2]$, with $n=\widetilde O(C_{\mathcal M}/\varepsilon^2)$ sample trajectories, the algorithm outputs a policy $\hat\pi$ that satisfies
$R(\hat\pi)\geq R(\pi^*)-\varepsilon$ with probability at least 0.9.
\label{thm:pac-learning}
\end{theorem}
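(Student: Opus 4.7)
The plan is to follow the \textsf{OLIVE}-style optimistic elimination scheme. We maintain a version space $\MF_t\subseteq\MF$, initialized as $\MF_0=\MF$, which under Assumption~\ref{asmp:realizability} always contains $f^*$. At round $t$ we pick the \emph{optimistic} function $f_t=\arg\max_{f\in\MF_t}f(x_1,\pi_f(x_1))$ for a fixed (or averaged) starting state $x_1$; since $f^*\in\MF_t$ and $f^*(x_1,\pi^*(x_1))=R(\pi^*)$, optimism guarantees $f_t(x_1,\pi_{f_t}(x_1))\geq R(\pi^*)$. Pairing this with the telescoping identity $f_t(x_1,\pi_{f_t}(x_1))-R(\pi_{f_t})=\sum_{h=1}^H \ME(f_t,\pi_{f_t},h)$, we see that either $\pi_{f_t}$ is already $\varepsilon$-optimal, or $\sum_h \ME(f_t,\pi_{f_t},h)\geq\varepsilon$, in which case there is some level $h_t$ where $|\ME(f_t,\pi_{f_t},h_t)|\geq\varepsilon/H$.

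Next I would describe the two sampling subroutines. A \emph{check} phase uses $\tilde O(H^2/\varepsilon^2)$ rollouts of $\pi_{f_t}$ to estimate $\ME(f_t,\pi_{f_t},h)$ at every level within additive accuracy $\Theta(\varepsilon/H)$ via Hoeffding; if all estimates are small, we return $\hat\pi=\pi_{f_t}$. Otherwise, an \emph{eliminate} phase fixes the offending level $h_t$, draws $\tilde O(H^2/\varepsilon^2)$ trajectories rolling in with $\pi_{f_t}$ to time $h_t$ and then taking $a_{h_t}$ uniformly at random from $\MA$, and computes an unbiased importance-weighted estimator $\hat\ME(f,\pi_{f_t},h_t)$ for \emph{every} $f\in\MF_t$ simultaneously. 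A union bound over $\MF$ costs only $\log|\MF|$ in the sample complexity. We then remove from $\MF_t$ every $f$ whose estimate is larger than $\varepsilon/(2H)$; crucially $f^*$ is retained because $\ME(f^*,\pi,h)\equiv 0$.

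The main obstacle, and the only place where Assumption~\ref{assumption:bellman_rank} enters, is bounding the number of elimination rounds. Writing $\ME(f,\pi_{f_\tau},h_\tau)=\langle\nu_{h_\tau}(f_\tau),\xi_{h_\tau}(f)\rangle$, the fact that $f_t$ survived all previous elimination rounds means $|\langle \nu_{h_\tau}(f_\tau),\xi_{h_\tau}(f_t)\rangle|\leq\varepsilon/(2H)$ for all $\tau<t$, whereas at round $t$ the \emph{new} constraint is violated: $|\langle\nu_{h_t}(f_t),\xi_{h_t}(f_t)\rangle|\geq\varepsilon/(2H)$. Partitioning rounds by the level $h_t\in[H]$, within each level this is exactly the setup for an elliptic-potential / volumetric argument: the matrix $\sum_{\tau\leq t}\nu_h(f_\tau)\nu_h(f_\tau)^\top$ has its log-determinant growing by a constant each time a new $\xi_h(f_t)$ is ``nearly orthogonal'' to the previous $\nu_h$'s, and using $\|\nu_h\|_2\|\xi_h\|_2\leq\zeta$ one obtains a bound of $\tilde O(M)$ rounds per level, hence $T=\tilde O(MH)$ total.

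Putting the pieces together yields a total sample complexity $n=T\cdot \tilde O(|\MA|H^2/\varepsilon^2)\cdot\log(|\MF|/\delta)=\tilde O(C_{\MM}/\varepsilon^2)$ with $C_{\MM}$ polynomial in $H,|\MA|,M,\zeta,\log|\MF|$, which is exactly the statement of Theorem~\ref{thm:pac-learning}. The two delicate points I expect to spend most effort on are (i) constructing the importance-weighted Bellman-error estimator so that its variance scales with $|\MA|$ rather than $|\MA|^H$ (the roll-in/uniform-explore split handles this) and (ii) the elliptic-potential bound, where one must carefully choose a regularizer to turn the ``$\varepsilon/(2H)$ threshold'' into the standard $\log\det$ telescoping.
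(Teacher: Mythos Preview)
Your outline follows the \textsf{OLIVE} algorithm exactly as summarized in the paper's Sec.~\ref{sec:prelim} (the theorem itself is quoted from \cite{Jiang2017ContextualDP} rather than re-proved here). The optimism step, the telescoping identity $f_t(x_1,\pi_{f_t}(x_1))-R(\pi_{f_t})=\sum_h\ME(f_t,\pi_{f_t},h)$, the check/eliminate phases with roll-in plus a uniform action at level $h_t$ and importance weighting, and the union bound over $\MF$ are all correct and match the original construction.

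The one genuine gap is your termination argument. The potential you propose, $\log\det\bigl(\sum_{\tau}\nu_h(f_\tau)\nu_h(f_\tau)^\top\bigr)$, does not telescope in the way you describe: knowing that $\xi_h(f_t)$ has small inner product with the previous $\nu_h(f_\tau)$ while $\langle\nu_h(f_t),\xi_h(f_t)\rangle$ is large does \emph{not} by itself force $\nu_h(f_t)^\top A_{t-1}^{-1}\nu_h(f_t)$ to be bounded below by a constant. Cauchy--Schwarz in the $A_{t-1}$-geometry only gives $\nu_t^\top A_{t-1}^{-1}\nu_t\geq \beta^2/\bigl(\lambda\|\xi_h(f_t)\|^2+(t-1)\alpha^2\bigr)$, where $\alpha$ is the survival threshold and $\beta$ the ``large error'' level; with your choices $\alpha\asymp\beta\asymp\varepsilon/H$ this is $\Theta(1/t)$ and yields no useful bound on the number of rounds. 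The argument actually used by \cite{Jiang2017ContextualDP}, and adapted in Appendix~\ref{app:volume} of this paper, is volumetric: one tracks the minimum-volume ellipsoid enclosing $\{\xi_h(f):f\in\MF_t\}$, each elimination intersects this set with a slab of half-width $\alpha$ in a direction where the ellipsoid has half-width at least $\beta$, and the volume shrinks by a constant factor \emph{provided} $\alpha/\beta\leq 1/(3\sqrt{M})$ (Corollary~\ref{cor:iteration}). This forces the elimination threshold to be $\Theta\bigl(\varepsilon/(H\sqrt{M})\bigr)$ rather than your $\varepsilon/(2H)$, which in turn inflates the eliminate-phase sample count by a factor of $M$. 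Both the $\sqrt{M}$ gap between the two thresholds and the ellipsoid (rather than $\log\det$) potential are essential; no choice of regularizer in the standard elliptic-potential lemma will substitute for them.
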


While PAC-learning results such as the one in Theorem \ref{thm:pac-learning} is very much desirable, the framework overlooks the 
aspect of exploration policy \emph{improvement}, which expects the quality of the exploration policy to continuously improve as more data are collected.
Such exploration policy improvement is important in applications where bad policies maybe lead to significant loss or even the cost of human lives,
such as learning for self-driving cars.
In these applications, an evaluation criterion of the ``cumulative'' gap of sub-optimality between the committed exploration policies and the 
optimal policy, known commonly as the cumulative \emph{regret} in the online/bandit learning literature,
 is more suitable to measure the quality of policy improvement.

The following theorem is the main result we established in this paper:

\begin{theorem}[Our results, informal]
There exists an algorithm and a model-dependent constant $C_{\mathcal M}'$ that is a polynomial of $H,|\mathcal A|,M,\zeta$, $\log|\mathcal F|$ and $\log (1/\delta)$, such that,
for sufficiently large $n$, the policies $\hat\pi^{(1)},\cdots,\hat\pi^{(n)}$ the algorithm performs on the $n$ trajectories satisfy
with probability $(1-\delta)$ that
$$
\sum_{i=1}^n R(\pi^*) - R(\hat\pi^{(i)}) = \tilde O(C_{\mathcal M}'\times \sqrt{n}).
$$
\label{thm:main-result}
\end{theorem}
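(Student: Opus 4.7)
The plan is to develop an algorithm, call it \textsf{AVE}, that runs in \emph{elimination epochs} while maintaining a shrinking version space $\MF_t \subseteq \MF$ of plausible value functions. Within epoch $t$, the algorithm picks a candidate $f_t$ of highest predicted value from $\MF_t$ and interleaves rollouts of the exploitation policy $\pi_{f_t}$ with a targeted subroutine that estimates $\ME(f, \pi_{f_t}, h)$ for every $f \in \MF_t$ and $h \in [H]$. If all such Bellman errors are estimated to lie below a threshold $\phi$, then by telescoping the Bellman recursion~\eqref{eq:bellman-recursion} and Assumption~\ref{asmp:realizability} we get $R(\pi^\ast) - R(\pi_{f_t}) \le H\phi$, so the current greedy policy is safe to commit to. Otherwise a layer $h^\ast$ witnesses a large error, and all functions $f \in \MF_t$ inconsistent with the observations collected in the subroutine are removed, starting a new epoch.

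The first two structural steps mirror the analysis of \textsf{OLIVE}. Step~1 is a standard Hoeffding/Bernstein concentration with a union bound over $\MF$ and $[H]$, ensuring $f^\ast \in \MF_t$ for all $t$ with probability $1-\delta$, so each epoch's benchmark dominates $f^\ast$. Step~2 bounds the total number of epochs by $\tilde O(MH)$: each elimination certifies a direction $\nu_{h^\ast}(f_t) \in \R^M$ along which the surviving features $\xi_{h^\ast}(f)$ must be large, and an elliptical-potential / volume-doubling argument on the bilinear factorization of Assumption~\ref{assumption:bellman_rank} bounds the number of times this can happen. Choosing $\phi = \Theta(1/\sqrt n)$ makes the cumulative exploitation tax $H\phi \cdot n = \tilde O(H\sqrt n)$.

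The main technical innovation lies in the elimination subroutine, which we cast as a contextual bandit problem. After rolling in with $\pi_{f_t}$ to layer $h$ and obtaining a context $x_h \sim \MD_{\pi_{f_t}, h}$, each surviving $f \in \MF_t$ is viewed as an \emph{expert} that recommends action $\pi_f(x_h)$, and the ``arm payoff'' is the one-step Bellman residual $r(x_h, a) + f(x_{h+1}, \pi_f(x_{h+1})) - f(x_h, a)$, observable after one additional transition. Feeding this instance to the active-elimination / expert-weighting algorithm of \cite{dudik2011efficient} produces a sampling distribution whose per-subroutine regret against the best expert is $\tilde O(\sqrt{|\MA| n_t \log|\MF|})$ on $n_t$ rounds, while delivering importance-weighted estimators of $\ME(f, \pi_{f_t}, h)$ for every surviving $f$ with enough accuracy to drive outer elimination at the chosen threshold $\phi$. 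Since $f^\ast$ has zero Bellman error, the best-expert benchmark inside each subroutine matches $\pi^\ast$ up to an $H\phi$ slack, so bandit regret translates cleanly into MDP regret.

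The hard part will be coupling the two layers of analysis without losing the $\sqrt n$ scaling. The per-epoch bandit regret bound is valid only when importance weights remain bounded, which requires the active-elimination distribution to place mass $\Omega(1/|\MA|)$ on every action any surviving expert would choose; this has to be read out of the structure of \cite{dudik2011efficient} and combined with the fact that $\MF_t$ is monotone non-increasing across epochs. Once that is established, summing the per-epoch bounds $\tilde O(\sqrt{n_t |\MA| \log|\MF|})$ over the $\tilde O(MH)$ epochs via Cauchy--Schwarz gives $\tilde O(\sqrt{n \cdot MH |\MA| \log|\MF|})$, which together with the $\tilde O(H\sqrt n)$ exploitation tax yields the claimed bound $\sum_{i=1}^n R(\pi^\ast) - R(\hat\pi^{(i)}) = \tilde O(C_{\mathcal M}' \sqrt n)$ with $C_{\mathcal M}' = \poly(H, |\MA|, M, \zeta, \log|\MF|, \log(1/\delta))$.
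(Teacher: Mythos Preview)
Your proposal captures the high-level architecture correctly (optimism, version-space elimination, a volumetric bound on the number of epochs, and borrowing the low-variance sampling distribution of \cite{dudik2011efficient}), but it has a genuine gap at the heart of the contextual-bandit reduction. At an intermediate layer $h<H$, after you roll in with $\pi_{f_t}$ and take an action $a_h$, you still have to roll \emph{out} with some policy for layers $h+1,\dots,H$, and the MDP regret of the episode is governed by that roll-out, not by the one-step Bellman residual you propose as the ``arm payoff.'' Concretely, the contextual-bandit guarantee you invoke controls $\E_{f\sim P}[\eta(f^*,f_t,h)-\eta(f,f_t,h)]$ where $\eta(f,g,h)=\E[r(x_h,\pi_f(x_h))+f(x_{h+1},\pi_f(x_{h+1}))]$ is a \emph{predicted} performance, whereas MDP regret needs $\E_{f\sim P}[V^*_h - V^{\pi_f}_h]$ to be small. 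The discrepancy is exactly $\E_{f\sim P}\sum_{h'>h}\ME(f,\pi_f,h')$, which can be $\Omega(1)$ for surviving $f$'s even though $f^*$ has zero Bellman error; your claimed ``$H\phi$ slack'' does not bound this term. Consequently the sentence ``bandit regret translates cleanly into MDP regret'' is false as written, and the $\sqrt{n_t}$ per-epoch bound does not follow.

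The paper closes this gap with machinery your proposal omits: before exploiting the distribution $P_k$, a \Chk procedure verifies that $\big|\sum_{h'>h}\ME(G,h')\big|$ is small for $G=\{(f_t\circ_h f, P_k(f))\}$; if not, an \Identify routine (via binary search over $\supp(P_k)$) extracts a specific $g'$ and layer $h'>h$ with large $|\ME(g',\pi_{g'},h')|$, and \Eliminate recurses at $(g',h')$. Only when \Chk certifies the predicted-to-actual gap is small does the $\eta$-based elimination (\textsc{pseudo-learn}) become a valid proxy, which is what makes Lemma~\ref{lem:eliminate_pi_fi} and hence the per-call regret bound go through. A secondary but related issue is your fixed threshold $\phi=\Theta(1/\sqrt{n})$: the optimistic $f_t$ may have Bellman error $\epsilon\gg 1/\sqrt{n}$, and estimating at precision $1/\sqrt{n}$ would require $\Theta(n)$ rollouts of a policy with $\Theta(\epsilon)$ per-step regret; the paper avoids this with a doubling trick over precision levels $\epsilon_k=2^{-k}$ and a refined volumetric argument (Lemma~\ref{lem:iteration_complexity}) that bounds the number of eliminations at \emph{each} level separately.
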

\begin{remark}
In addition to Assumptions \ref{asmp:realizability} and \ref{assumption:bellman_rank}, Theorem \ref{thm:main-result} requires several
additional mild assumptions, to be described in Sec.~\ref{sec:prelim}.
\end{remark}

At a higher level, the result of Theorem \ref{thm:main-result} upper bounds the sub-optimalty gap of exploration policies $\{\hat\pi^{(i)}\}$
for \emph{every} trajectory $i=1,2,\cdots,n$ the algorithm obtains.
Because the upper bound is on the order of $\tilde O(\sqrt{n})$, the exploration policies have to constantly improve over themselves
as otherwise a linear $O(n)$ regret will be incurred.

We make some additional remarks on Theorem \ref{thm:main-result}, regarding its connection with the PAC-learning result in Theorem \ref{thm:pac-learning}.
\begin{remark}[online-to-batch conversion]
Because the expected reward function $R(\pi)$ is \emph{linear} in policy $\pi$,
by considering the ``averaging policy'' $\overline\pi = \frac{1}{n}\sum_{i=1}^n\hat\pi^{(i)}$ one has
$R(\overline\pi)= \frac{1}{n}\sum_{i=1}^n[R(\pi^*)-R(\hat\pi^{(i)})] \geq R(\pi^*) - \tilde O(C_{\mathcal M}'/\sqrt{n})$ with high probability,
matching the result in Theorem \ref{thm:pac-learning}.
\end{remark}
\begin{remark}[exploration and exploitation]
By running the PAC-learning algorithm implied by Theorem \ref{thm:pac-learning} on the first $n^{1/3}$ sample trajectories 
and then switching to the learnt policy $\pi$ for the rest of the $n-n^{1/3}$ trajectories,
one obtain a regret upper bound of $\tilde O(\mathcal C_m\times n^{2/3})$, much worse than the $\tilde O(\sqrt{n})$ upper bound
in Theorem \ref{thm:main-result}.
The $\tilde O(n^{2/3})$ regret bound cannot be improved by simply treating the PAC-learning algorithm as a black box.
\end{remark}

\subsection{Notations}

For two sequences $\{a_n\}$ and $\{b_n\}$, we denote $a_n=O(b_n)$ or $a_n\lesssim b_n$ if there exists a \emph{universal} constant $C<\infty$
such that $\limsup_{n\to\infty} |a_n|/|b_n|\leq C$.
Similarly, we denote $a_n=\Omega(b_n)$ or $a_n\gtrsim b_n$ if there exists a \emph{universal} constant $c>0$ such that $\liminf_{n\to\infty} |a_n|/|b_n| \geq c$.
We denote $a_n=\Theta(b_n)$ or $a_n\asymp b_n$ if both $a_n\lesssim b_n$ and $a_n\gtrsim b_n$ hold.

We use $A = |\mathcal{A}|$ to denote the size of the action space. For any value hypothesis function $f$, we use $\pi_f$ to denote the policy that acts greedily according to $f$. Given $f$ and a timestep or a layer $h \in [H]$, we use $\mathcal{D}_{\pi_f, h}$ to denote the distribution of states at layer $h$ when we use policy $\pi_f$. We also abuse the notation and use $\mathcal{D}_{f, h}$ to denote $\mathcal{D}_{\pi_f, h}$. We use $\pi_U$ to denote the uniformly random policy, i.e., the policy always chooses a random action from $\mathcal{A}$ uniformly. 
Given a distribution $G$ of hypothesis functions and a sub-class $\mathcal{G}'$ of functions, we define $G\rvert_{\mathcal{G}'}$ to be the projection of $G$ onto $\mathcal{G}'$. I.e., we let
\[
\left(G\rvert_{\mathcal{G}'}\right)(f) = \begin{cases}\Pr_{f \sim G}[f \in \mathcal{G}']^{-1} \cdot G(f),&\text{when }f\in \mathcal{G}',\\0,&\text{otherwise.}\end{cases} 
\]
Given two hypothesis functions $f_1$ and $f_2$, and a layer $h$, we define $f_1 \circ_h f_2$ to be the concatenation of the two functions at layer $h$. More specifically, we set
\[
(f_1\circ_h f_2)(x,a)=\begin{cases}f_1(x,a),&x\in \bigcup_{h'=1}^{h-1}\MX_{h'},\\ f_2(x,a), &\text{otherwise.}\end{cases} 
\]

\section{Technical overview}\label{sec:prelim}

This section gives a very high-level technical overview of our algorithm and analysis.
We start with an overview of the \textsf{OLIVE} algorithm introduced in \cite{Jiang2017ContextualDP}
attaining the PAC-learning guarantee as described in Theorem \ref{thm:pac-learning}.
We point out two key technical challenges which prevent us from simply transforming the algorithm to achieve good regret results.
We then continue with the description of our high-level ideas for designing \textsf{AVE} to circumvent the two challenges.
Finally, we introduce two additional assumptions, both are very mild compared to the core assumptions of \ref{asmp:realizability}
and \ref{assumption:bellman_rank}.

\subsection{An overview of the \textsf{OLIVE} algorithm}

The \textsf{OLIVE} algorithm proposed by \cite{Jiang2017ContextualDP} is based on the ideas of \emph{optimistic exploration}
and \emph{policy elimination}.
To describe the algorithm, we define the \emph{value function} $V^{\pi}_{h}(x_h)$ of policy $\pi$ at layer $h\in[H]$ and state $x_h\in\mathcal X_h$
as 
\begin{equation}
\textstyle
V^{\pi}_{h}(x_h) := \mathbb E\big[\sum_{h'=h}^H r_{h'}\big| r_{h'}=r_{h'}(x_{h'},a_{h'}), a_{h'}=\pi(x_{h'}), x_{h'}\sim p(x_{h'-1},a_{h'-1})\big]
\label{eq:value-function}
\end{equation}
as the expected reward collected on layers $h,h+1,\cdots,H$ under the roll-out policy $\pi$. 
When it is clear from the context, we omit the subscript $h$  in the notation.

Naturally, the values of $V^{\pi}_{h}$ are inaccessible to the learning algorithm because neither $r$ or $p$ are known.
However, if a function $f\in\mathcal F$ is a reasonable approximation of its corresponding policy $\pi_f$, an estimated value function
$\hat V^{\pi}_{h}$ can be defined as
\begin{equation}
\hat V_{h}^{\pi_f}[x_h] := \max_{a\in\mathcal A} f(x_h, a),
\label{eq:hat-value-function}
\end{equation}
and an estimate on the expected total reward $R(\pi_f)$ can be obtained as $\hat R(\pi_f) = \hat V_{1}^{\pi_f}(x_1)$.

The \textsf{OLIVE} algorithm maintains an ``active'' function class $\mathcal G\subseteq\mathcal F$
initialized as the entire function class $\mathcal G=\mathcal F$.
It proceeds shrinking the active function class $\mathcal G$ as follows:
\begin{enumerate}
\item Find $f\in\mathcal G$ that maximizes $\hat R(\pi_f)=\hat V_{1}^{\pi_f}(x_1)$;
\item Verify whether $|\sum_h\ME(f,\pi_f,h)|$ is close to zero using sample trajectories; 
if $|\sum_h\ME(f,\pi_f,h)|\lesssim\varepsilon$ then terminate the algorithm and output $\pi_f$ as a good policy;
otherwise identify $h\in\mathcal H$ such that $|\ME(f,\pi_f,h)|\gtrsim \varepsilon/H$,
which must exist because $|\sum_h\ME(f,\pi_f,h)|$ is large;
\item Remove all $g\in\mathcal G$ from $G$ with $|\ME(g,\pi_f,h)|\gtrsim\varepsilon$; More specifically,\label{item:olive-elimiate}
\begin{itemize}
\item[3.a.] Collect $n'$ sample trajectories with roll-in policy $\pi_f$ up to layer $h$, \emph{random} action $a_h\in\mathcal A$ in layer $h$,
and arbitrary roll-out policies after layer $h$;
\item[3.b.] $|\ME(g,\pi_f,h)|$ can then be estimated using importance sampling, because for any state $x_h$ there are in expectation $n'/|\mathcal A|$ trajectories committed to $a_h=\pi_g(x_h)$;
\end{itemize}
\item Repeat steps 1 to 3 with the smaller active function class $\mathcal G$, until a policy is produced.
\end{enumerate}

The correctness of the above procedure is based on several simple observations: first, the optimal policy $\pi^*=\pi_{f^*}$ will never be eliminated because $\ME(f^*,\pi,h)=0$ for all roll-in policies $\pi$ and layer $h$;
second, a careful decomposition of value estimation error reveals that $\hat V^{\pi_f}_{1}(x_1)-V^{\pi_f}_{1}(x_1)=\sum_{h=1}^H\ME(f,\pi_f,h)$
holds for all policy $\pi$, and therefore a small total Bellman error $|\sum_h\ME(f,\pi_f,h)|$ together
with the optimistic exploration oracle (i.e., explore $\arg\max_{f\in\mathcal G}\hat V^{\pi_f}_{1}(x_1)$) implies good performance 
whenever the algorithm terminates and produces a policy $\pi_f$;
finally, due to the low-Bellman-rank assumption \ref{assumption:bellman_rank},
it can be shown via a volumetric argument that the size of $\mathcal G$ will decrease rapidly and eventually contains only the 
optimal function $f^*$ and its close neighborhoods after $\mathrm{poly}(M,H)$ iterations of steps 1--3.

\subsection{Key technical challenges and high-level ideas of our analysis}

The first step of transforming the \textsf{OLIVE} algorithm into a regret-aware one is to replace 
Bellman-error estimates with their adaptive counterparts.
More specifically, for any error level $\epsilon\in(0,1)$, it is possible to distinguish with high probability between the two cases of $|\sum_h\ME(f,\pi_f,h)|\gtrsim\epsilon$
and $|\sum_h\ME(f,\pi_f,h)|\ll\epsilon$ (or the $|\ME(g,\pi_f,h)|$ term as well) with $n'\approx 1/\epsilon^2$ sample trajectories,
using either union bounds over all trajectories or the ``doubling trick'' \citep{auer1995gambling}.
With such adaptive Bellman-error estimators, two key technical challenges can then be identified:
\begin{enumerate}
\item[(C1)] Because the optimisitc exploration policy $f\in\arg\max_{f\in\mathcal G}\hat V^{\pi_f}_{1}(x_1)$
might have arbitrarily large total Bellman error (i.e., $|\sum_h \ME(f,\pi_f,h)|\approx \epsilon \gg 1/\sqrt{n}$), to ensure low regret the elimination step afterwards can only be done
on the level of $\epsilon$ (i.e., eliminate all $g\in\mathcal G$ with $|\ME(g,\pi_f,h)|\gtrsim \epsilon$ using $1/\epsilon^2$ samples) instead of the 
``target error level'' $ 1/\sqrt{n}$;
\item[(C2)] When performing policy elimination, the ``random action'' idea in Step 3.a of \textsf{OLIVE} can no longer be used,
as taking a random action at layer $h$ might incur unacceptably large regret.
\end{enumerate}

To overcome challenge (C1), we revise the volumetric argument in \cite{Jiang2017ContextualDP} to 
analyze the progress of volume shrinkage when $g\in\mathcal G$ are only eliminated on the level of $\epsilon\gg 1/\sqrt{n}$,
provided that $|\ME(f,\pi_f,h)|$ itself is as large as $\Omega(\epsilon)$.
We prove that, while exploration policies $f$ with the same Bellman decomposition direction but very different magnitudes
might be visited more than once,
unlike the original \textsf{OLIVE} algorithm in which no direction will be visited twice,
the revised volumetric argument still provides sufficient progress in terms of $\mathcal G$ shrinkage
and essentially upper bounds the number of exploration policies to be a polynomial of $M$ and $H$.

The second challenge (C2), which concerns the inadequacy of the random action exploration in layer $h$,
turns out to be a more fundamental challenge. To understand how our algorithm and analysis overcome this issue,
it is instructive to first consider the simpler case of $h=H$.
As $h$ is now the last layer of each trajectory, the choice of $a_H\in\mathcal A$ will \emph{not}
have any lasting impact beyond the immediate reward $r_H$.
Subsequently, the question of eliminating all $g\in\mathcal G$ with large $|\ME(g,\pi_f,H)|$
can be reduced to a \emph{contextual bandit} problem, with $x_H$ the input context,
$g\in G$ the experts and rewards of action $a\in\mathcal A$ under context $x_H$ being simply $r_H(x_H,a)$.
The active elimination procedure proposed in \citep{dudik2011efficient} is used to solve this contextual bandit problem
on the last layer, which not only delivers low regret but also identifies experts/functions $g\in\mathcal G$
with small regret $\mathbb E_{x_H\sim \mathcal{D}_{\pi_f,H}}[r(x_H,\pi^*(x_H))-r(x_H,\pi_g(x_H))]$.
After a set of functions $g\in\mathcal G$ with small regret is obtained, importance sampling can be carried out to determine their Bellman errors $|\ME(g,\pi_f,H)|$
without incurring large regret.
Note also that, unlike the original \textsf{OLIVE} algorithm, functions $g\in\mathcal G$ with small or even zero Bellman error might get eliminated due to their 
large overall regret.
This shall not cause a problem because the optimal function $f^*$, having both zero Bellman error and zero regret, will never get eliminated.

In intermediate layers $h<H$, the problem becomes more complicated.
The contextual bandit formulation in the previous paragraph cannot be directly applied, as the expected total onward revenue
$V_{h}^{\pi_g}(x_h)$ is not only a function of $a_h\in\mathcal A$ but also the policy $\pi_g$ itself.
Roughly speaking, our algorithm in intermediate layers $h<H$ will use the algorithm in \citep{dudik2011efficient}
to accomplish either of the following objectives:
\begin{enumerate}
\item[(a)] construct a distribution over $g\in\mathcal G$, $G$, such that both the \emph{average} regret
$\mathbb E_{x_h\sim D{\pi_f,h}}\mathbb E_{g\in G}[V^{\pi^*}_{h}(x_h)-V^{\pi_g}_{h}(x_h)]$ 
and the variance of estimating $|\ME(g,\pi_f,h)|$, $g\in \mathcal G$ are small; or
\item[(b)] identify a $g\in\mathcal G$, $h'>h$ and $\epsilon'\in(0,1)$ such that $|\ME(g,\pi_g,h')|\gtrsim\epsilon'$.
\end{enumerate} 
Notably, the above objectives need to and can be accomplished without incurring large regret.

If objective (a) is accomplished, the intermediate layer $h<H$ is not that different from the last layer $H$,
because actions induced by the distribution $G$ can be taken (i.e., $a_h=\pi_g(x_h)$ where $g\sim G$)
to estimate $|\ME(g,\pi_f,h)|$ for all $g\in\mathcal G$ without incurring large regret;
otherwise, we discard the current exploration policy $f$ and instead explore policy $g$, attempting to eliminate other policies
under $\pi_g$ in layer $h'$.
Because $h'$ is strictly larger than $h$, eventually we reach the last layer $H$ and the elimination procedure
reduces to the standard contextual bandit problem discussed in the previous paragraph.

\subsection{Additional assumptions and notations}

We make two additonal assumptions throughout this paper, both very mild.
\begin{assumption}[deterministic initial state]
The initial state $x_1$ is deterministic and known.
\label{asmp:deterministic-x1}
\end{assumption}

Assumption \ref{asmp:deterministic-x1} is in fact without loss of generality, because any MDP with $H$ layers can be transformed 
to another MDP with $H+1$ layers with only one state no reward in the first layer.

To state the second assumption we define ``restriction'' and ``concatenation'' of function approximators.
In particular, for any $f\in\mathcal F$ and $h\in[H]$, let $f_h:\mathcal X_h\times \mathcal A\to\mathbb R$
be the function restricted to states $x_h\in\mathcal X_h$ in layer $h$.
Define $\mathcal F_h := \{f_h: f\in\mathcal F\}$ for all such restrictions induced by $\mathcal F$.
\begin{assumption}[closedness under concatenation]
Let $f:\mathcal X\times\mathcal A\to\mathbb R$ be an arbitrary function approximation.
Suppose there exist $g_1\in\mathcal F_1,\cdots,g_H\in\mathcal F_H$ such that $f(x_h,a) =  g_h(x_h,a)$ for all $h\in[H]$ and 
$x_h\in\mathcal X_h$. Then $f\in\mathcal F$.
\label{asmp:closedness}
\end{assumption}

Generally speaking, Assumption \ref{asmp:closedness} requires the function class $\mathcal F$ to be large enough such that
it allows \emph{non-stationary} function approximation,
using essentially independent function approximators for each layer $h$.
Such an assumption is very mild because under episodic settings with finite $H$,
independent function approximation is almost always used among different layers since even the same state 
could lead to very different rewards in different layers.

\section{Related work}\label{sec:related-work}

Function approximation is an old idea in learning and planning of MDPs and other dynamic programming systems.
Under the reinforcement learning context, function approximation is typically used to learn the Q-functions of the optimal policy of an MDP, 
a method commonly referred to as \emph{Q-learning}.
\cite{watkins1989learning} initiated the study of Q-learning and proposed the first such algorithm known as \emph{fitted Q-iterations} (FQI),
which takes data collected from a fixed exploration policy and iteratively finds functions $f\in\mathcal F$ that minimize the least-square Bellman error $\mathbb E[|f(x,a)-r(x,a)-\mathbb E\max_{a'}f(x',a')|^2]$.
\cite{watkins1992q,tsitsiklis1994asynchronous} provide asymptotic convergence results for FQI with finite state-action spaces.
\cite{farahmand2016regularized,lazaric2012finite,antos2008learning,munos2008finite} gives finite-sample convergence guarantees of FQI, with \cite{chen2019information}
achieving the optimal sample complexity dependency on approximation error parameters.
Most analysis of FQI assumes the function class $\mathcal F$ is \emph{closed} under the Bellman update operator, and the exploration policy used to collect data satisfies certain \emph{low-concentratability} conditions,
neither of which is assumed in this paper.
In general, FQI could oscillate and diverge \citep{gordon1995stable}.

When no good exploration policy is known a priori, learning of MDPs becomes much more challenging as the exploration policy needs to constantly change to achieve good state coverage.
\cite{jin2018q} applies the idea of upper confidence bands (UCB) to obtain $\sqrt{n}$-regret for learning tabular MDPs,
which was later generalized to linear function approximation under strong linear transition probabilities assumptions \citep{jin2019provably}.
For problems with low-Bellman rank, the \textsf{OLIVE} algorithm provides sequentially exploration policies, with its computationally tractable variants in \citep{dann2018oracle,du2019provably} under more restrictive settings.
\cite{wen2017efficient} studies MDPs with \emph{deterministic} transition and reward functions.
\cite{du2019provably} studies MDPs with nearly deterministic transition functions under an additional ``gap'' assumption.

Theoretical analysis is also available for model-based and policy optimization type algorithms \citep{sidford2018near,fazel2018global,wainwright2019stochastic}.
Our algorithm also made use of methods for \emph{contextual bandit}, with several recent developments in \citep{dudik2011efficient,agarwal2014taming}.

For the special case of \emph{linear} function approximation, the works of \cite{sidford2018near,yang2019reinforcement,jin2019provably}
derive PAC-learning or polynomial regret upper bounds under the condition of completeness with respect to Bellman updates
as well as additional assumptions imposed on state transition probabilities.
The work of \cite{du2019provably} studies linear function approximation for mostly deterministic state transition problems.
It is also noted that the algorithms in \citep{sidford2018near,yang2019reinforcement,jin2019provably,du2019provably} are computationally efficient
and therefore easier to implement in practice.

\section{The Adaptive Value-function Elimination (\textsf{AVE}) algorithm}

We present the details of our algorithm in Algorithm~\ref{alg:main}, \ref{alg:eliminate},  \ref{alg:check}, and \ref{alg:identify}, where the exact values of parameters $n_i^{\text{eval}}$, $n_i$, $n_i^{\text{cb}}$ and $n_i^{\text{id}}$ can be found in Appendix \ref{app:concentration}.  Throughout the algorithm, we set 
\[
\epsilon_i:=1/2^i, \qquad \epsilon_i':=\epsilon_i/\lceil\log(|\MF|)+1\rceil, \qquad \phi_i:=\epsilon_i/(12\sqrt{M})
\]
for all $i \in \{0, 1, 2, \dots \}$. In the \Eliminate procedure, we define 
\[
W'_{P_k}(x,a):=(1-A\mu_k)\sum_{f\in \mathcal{G}}\mathbb{I}[\pi_f(x)=a]P_k(f)+\mu_k. 
\]

Below we describe each procedure at a high level.  

\begin{algorithm}[ht]
	\caption{$\Main(\delta, \epsilon, n)$}
	\label{alg:main}
	\begin{algorithmic}[1]
		\State $\mathcal{G} \leftarrow \mathcal{F}$ ;
		\While{\textsc{true}}
			\State $f\gets \arg\max_{f\in \mathcal{G}}f(x_1,\pi_f(x_1))$; \label{line:OFU}
			\For{$k=1,2,3,\dots, \lceil \log_2 {H/\epsilon}\rceil$} \label{line:main-3}
			    \State Run $\pi_{f}$, collect trajectories $\{(x_q^{p},a_q^p,r_q^p)_{q=1}^{H}\}_{p=1}^{n_k^{\text{eval}}};$\label{line:main_run}  \label{sample:1}
			    \State Estimate \vspace{-2ex} \begin{equation}\tilde{\ME}_k(f,\pi_f,h)=\frac{1}{n_k^{\text{eval}}}\sum_{p=1}^{n_k^{\text{eval}}}\left[f(x_{h}^p,a_{h}^p)-r_{h}^p-f(x_{{h}+1}^p,a_{{h}+1}^p)\right];\label{equ:main_concentration}\end{equation} \label{line:main-estimate}
			    \State $h\gets \arg\max_{h':1\le h'\le H}\left|\tilde{\ME}_k(f,\pi_f,h')\right|;$
			    \If{$\left|\tilde{\ME}_k(f,\pi_f,h)\right|>\epsilon_k$}\label{line:main_if}
			    	\State$\Eliminate(f,h,k)$;\label{line:main_call_eliminate} \textbf{break};
				\EndIf
			\EndFor 
			\State \textbf{if} {\Eliminate is not called during this iteration for all $k\le \lceil \log_2 H/\epsilon\rceil$} \textbf{then} \textbf{break};
		\EndWhile \label{line:main-12}
		\State Run policy $\pi_f$ for the remaining trajectories.\label{sample:0}
	\end{algorithmic}
\end{algorithm}

\paragraph{The \Main algorithm.} The \Main procedure takes three parameters: $\delta$ is the confidence level (so that the algorithm succeeds with probability at least $(1-\delta)$), $\epsilon$ is the target precision, and $n$ is the number of total trajectories to run. The algorithm will terminate early whenever the $n$ trajectories are used up.

As mentioned before, our algorithm is an elimination-based algorithm, where we repeatedly eliminate sub-optimal hypotheses in $\mathcal{F}$. Here we say a hypothesis is sub-optimal if it has a large Bellman error, or its induced policy has a sub-optimal value. In more details, in the \Main procedure, we repeatedly choose the hypothesis $f$ so that its induced policy has the best estimated value, which is in accordance with the optimism in face of uncertainty principle (OFU). From Line~\ref{line:main-3} to Line~\ref{line:main-12}, we estimate the Bellman error of $\pi_f$. If the total Bellman error is large, we find particular layer $h$ with large Bellman error and invoke \Eliminate at Line \ref{line:main_call_eliminate} to eliminate all functions in $\mathcal{G}$ with large Bellman error at layer $h$, given the roll-in policy $f$. Otherwise, we know that the real value of $f$ is close to its estimated value, which is almost optimal by the OFU principle, and therefore we can keep running $\pi_f$ for the remaining times.

We note that at Line~\ref{line:main-3}, a doubling trick is used to ensure that a policy $\pi_f$ with large regret (which is upper bounded by the Bellman error of $f$) is not executed for too many times. 

We also remark that the number of iterations made by the \textbf{while} loop is upper bounded by a volumetric argument adapted from \cite{Jiang2017ContextualDP}, the details of which are presented in Appendix~\ref{app:volume}.

\begin{algorithm}[ht]
	\caption{\Eliminate$(g,h,j)$}
	\label{alg:eliminate}
	\begin{algorithmic}[1]
		\For{$k=1,2,3,\dots,j$}
			\State Set 
			$\mu_{k}=\frac{\epsilon_{k}}{A}$
			and find distribution $P_k$ over $\mathcal{G}$, such that $\forall f\in \mathcal{G},$\label{line:find_distribution}
			\begin{equation}
				\mathop{\E}_{x\sim \MD_{g,h}}\left[\frac{1}{(1-A\mu_k)W_{P_k}(x,\pi_f(x))+\mu_k}\right]\le 110A.\label{equ:low_variance}
			\end{equation}
			\State $(c,g',h',k')\gets \Chk(\{(g\circ_h f,P_k(f))\}_{f\in \mathcal{G}},h,k-2)$; \label{line:eliminate-3}
			\State \textbf{if} ${c = \textsc{false}}$ \textbf{then} $\Eliminate(g',h',k'+1)$; and \textbf{return}; \label{line:eliminate-4}
			
			\State Repeatedly sample $f\sim P_k$, $\pi\gets \begin{cases}\pi_g\circ_h \pi_U,&\text{with probability }A\mu_k,\\
			\pi_g\circ_h \pi_{f}, &\text{with probability }1-A\mu_k,\end{cases}$ and run policy $\pi$ for $n_k^{\text{cb}}$ times; collect samples $\{x_h^p,a_h^p,r_h^p,x_{h+1}^p\}_{p=1}^{n_k^{\text{cb}}};$  \label{sample:3} \label{line:run_policy}
			\State \label{line:eliminate-line-5} Define \begin{equation}\tilde{\eta}_k(f,g,h)\defeq\frac{1}{n_k^{\text{cb}}}\sum_{p=1}^{n_k^{\text{cb}}}(r^p_h+f(x^p_{h+1},\pi_{f}(x^p_{h+1})))\frac{\mathbb{I}[\pi_{f}(x^p_h)=a^p_h]}{W'_{P_k}(x_h^P,a_h^P)}.\label{equ:eliminate_eta}\end{equation}
			\State \textsc{pesudo-learn} \label{line:bandit_elimination} \begin{equation}\mathcal{G}\gets \{f\in \mathcal{G}:\tilde{\eta}_k(f,g,h)\ge \tilde{\eta}_k({g},g,h)-(6H+1)\epsilon_k\};\end{equation} 
		\EndFor \label{line:eliminate-7}
		\State Repeatedly sample $f\sim P_j$, $\pi\gets \begin{cases}\pi_g\circ_h \pi_U,&\text{with probability }A\mu_j,\\
			\pi_g\circ_h \pi_{f}, &\text{with probability }1-A\mu_j,\end{cases}$ and run policy $\pi$ for $n_j$ times; collect samples $\{x_h^p,a_h^p,r_h^p,x_{h+1}^p\}_{p=1}^{n_j};$\label{sample:4}
		\State Estimate 
			\begin{equation}\tilde{\ME}_j(f,\pi_{g},h)\defeq\frac{1}{n_j}\sum_{p=1}^{n_j}\frac{\mathbb{I}[a_h^p=\pi_{f}(x_h^p)]}{W'_{P_j}(x_h^P,a_h^P)}\left(f(x_h^p,a_h^p)-r_h^p-f(x_{h+1}^p,\pi_f(x_{h+1}^p))\right)\label{equ:bellman_estimate}\end{equation}
		\State \textsc{learn} \label{line:bellman_elimination} 
			\begin{equation}\mathcal{G}\gets\left\{f\in \mathcal{G}:\left|\tilde{\ME}_j(f,\pi_{g},h)\right|\le \phi_j\right\};\end{equation}
	\end{algorithmic}
\end{algorithm}

\paragraph{The \Eliminate procedure.} We proceed to discuss the implementation of  $\Eliminate(g,h,j)$. As mentioned before, the procedure tries to estimate Bellman error $\ME(f,\pi_{g},h)$ for all $f\in \mathcal{G}$, and eliminate those with $|\ME(f,\pi_{g},h)|>\phi_j$ from the hypothesis space $\mathcal{G}$ (\textsc{learn} step at Line \ref{line:bellman_elimination}).

A key challenge here is that it is non-trivial to estimate the Bellman error for all $f\in \mathcal{G}$ with small regret. In \cite{Jiang2017ContextualDP}, the authors used a straightforward adaption of the importance sampling approach that uniformly samples an action. However, this only guarantees the sample complexity but could lead to a regret that is linear with $|\mathcal{G}|$, which is not affordable. To solve this problem,  we borrow the idea from contextual bandit literature \cite{dudik2011efficient}. More precisely, we look for a distribution $P_k$ over the hypothesis space $\mathcal{G}$ such that i) when applying importance sampling with $P_k$ as the sampled distribution, the estimation has low variance (see Eq.~\eqref{equ:low_variance}), where we elaborate how to implement Line~\ref{line:find_distribution} in Appendix~\ref{app:low_variance}; and ii) the regret when running the randomized policy according to $P_k$ is small.

We also need to perform exploration very carefully to avoid large regret. To achieve this, the \Eliminate procedure consists of two parts. The first part is before (and at) Line~\ref{line:eliminate-7}. In this part, we use a doubling trick (via loop variable $k$) and gradually eliminate value functions with \emph{low predicted performance} (which will be concretely defined in the later paragraph). This step makes sure that we do not run too many sub-optimal policies and is necessary for achieving lower regret. The second part consists of the lines after Line~\ref{line:eliminate-7}, where we estimate the Bellman error at layer $h$ for all hypotheses and perform the elimination for large Bellman error.

We now elaborate the first step. The \emph{predicted performance} of each function $f$ is measured by the function $\eta(f,g,h)$ defined as follows,
\begin{align}\label{eq:def-eta}
\eta(f,g,h)\defeq \E_{x\sim \MD_{g,h},x'\sim p(\cdot\mid x,\pi_{f}(x))}[r(x,\pi_f(x))+f(x',\pi_f(x'))].
\end{align}
To help illustration, let us consider the special case when $h=H$ (i.e., $h$ is the last layer). In this case, $\Chk$ always returns $c = \textsc{true}$ and we can temporarily ignore Line~\ref{line:eliminate-3} and Line~\ref{line:eliminate-4}. Since $h = H$, we have $f(x',\pi_f(x'))=0$ for all $f\in \mathcal{G}$. Therefore $\eta(f,g,h)$ is the expected reward of the policy induced by $g\circ_h f$ at layer $H$. The \textsc{pseudo-learn} step eliminates the policies that perform significantly worse than $g$, which means that the regret of running any remaining policies is comparable to that of $g$. Therefore, any distribution over the remaining policy would achieve affordable regret.  We also note that the optimal policy $\pi_{f^*}$ achieves optimal value no matter what roll-in policy $g$ is. Thus, $f^*$ will not be eliminated during this process. 

Now we move to the more general case when $h<H$. To avoid large regret, we are really interested in the \emph{actual performance} of the random policy $\pi \sim \{(g\circ_h f,P_k(f))\}_{f\in \mathcal{G}}$, which is the expected reward of the policy induced by $g \circ_{h} f$ at and after layer $h$, i.e., 
\[
\E_{f\sim P_k}   \E_{x\sim \MD_{g,h},x'\sim p(\cdot\mid x,\pi_{f}(x))} [r(x,\pi_{f}(x))+V^{\pi_f}(x')].
\]
By comparing this expression with the definition of $\eta$ in Eq.~\eqref{eq:def-eta}, we see that since we have no direct access to $V^{\pi_f}(x')$, we have to use $f(x',\pi_f(x'))$ (i.e., the value predicted by the hypothesis) instead. Thus, we need to make sure $\E_{f\sim P_k}f(x',\pi_f(x'))$ well approximates $\E_{f\sim P_k}V^{\pi_f}(x')$, which is done by a recursive call to \Chk at Line~\ref{line:eliminate-3}. We only proceed when \Chk returns \textsc{true} (i.e., certifying the two values are close). Otherwise,  \Chk returns $c = \textsc{false}$ and also identifies a layer $h' > h$ and a function $g'$ from the support of $\{(g\circ_h f,P_k(f))\}_{f\in \mathcal{G}}$, such that $g \circ_{h} g'$ has large Bellman error at layer $h'$. Now we recursively call $\Eliminate$ with roll-in function $g'$ and layer $h'$ instead. We note that since the $h$ parameter keeps increasing along the recursive path, the depth of the recursion can be properly upper bounded.

We finally explain the second part of \Eliminate. We again use importance sampling with distribution $P_j$. Thanks to the first part, we know that the induced policies by the hypotheses sampled from the distribution $P_j$ do not incur too large regret. Therefore, we are able to sample to pull sufficient samples from $P_j$, estimate the Bellman error at layer $h$ for all $f \in \mathcal{G}$ based on the roll-in policy $g$, and perform the elimination for large Bellman error.

\begin{algorithm}[ht]
	\caption{\Chk$(G,h,j)$}
	\label{alg:check}
	\begin{algorithmic}[1]
		\For{$k=1,2,\cdots,j$}			
			\State For $p = 1, 2, 3, \dots, n_k^{\text{eval}}$, sample $f^p \sim G$ and run $\pi_{f^p}$, collect trajectories $\{(x_q^{p},a_q^p,r_q^p)_{q=1}^{H},f^p\}_{p=1}^{n_k^{\text{eval}}};$\label{line:check_mix_run}  \label{sample:5}
			\State Estimate \begin{equation}\hat{\ME}_k(G,h')=\frac{1}{n_k^{\text{eval}}}\sum_{p=1}^{n_k^{\text{eval}}}\left[f^p(x_{h'}^p,a_{h'}^p)-r_{h'}^p-f^p(x_{{h'}+1}^p,a_{{h'}+1}^p)\right];\label{equ:check_mix_concentration}\end{equation} \label{line:check-estimate}
			\If{$\left|\sum_{h'=h+1}^{H}\hat{\ME}_k(G,h')\right|>(H-h)\epsilon_k$}\label{line:check_if}
				\State $(g_r,h_r,k_r)\gets\Identify(G,h,k)$;\label{line:call_identify}
				\State \Return$(\textsc{false},g_r,h_r,k_r)$;
			\EndIf
		\EndFor
		\State\Return $(\textsc{true},*)$;
	\end{algorithmic}
\end{algorithm}

\paragraph{The \Chk procedure.} Let us first define the expected Bellman error of a distribution $G$ of hypothesis functions at layer $h$, as follows,
\begin{equation}\label{equ:average_bellman_error}
    \ME(G,h)\defeq \E\nolimits_{f\sim G}\ME(f,\pi_f,h).
\end{equation}
When $\Chk(G, h, j)$ is invoked, the procedure either returns $(\textsc{true}, *)$ and certifies that $\sum_{h' = h+1}^{H} \ME(G, h')$ is close to zero (the closeness is defined by the parameter $j$), or identifies a layer $h' \in \{h, h+1, h+2, \dots, H\}$ and a function $g$ from the support of $G$ such that the Bellman error $\ME(g, \pi_g, h')$ is large (and the magnitude is quantified by $k'$). 

To achieve this goal, we first use a doubling trick (the $k$ loop) to control the regret. In each iteration of the loop, we estimate $ \ME(G,h')$ for each $h' \in \{h+1, h+2, \dots, H\}$, up to precision $\epsilon_k$. Once we figure out that  $\sum_{h' = h+1}^{H} \ME(G, h')$ is significant, we call \Identify to find out a specific function $g$ and a layer $h' \in \{h+1, h+2, \dots, H\}$ from the support of $G$ such that $\ME(g, \pi_g, h')$ is significant.



\begin{algorithm}[ht]
	\caption{\Identify$(G,h,k)$}
	\label{alg:identify}
	\begin{algorithmic}[1]
		\State $\textsc{step}\gets 0$;
		\While{$|\supp(G)|>1$}\label{line:identify_while}
			\State $\textsc{step}\gets \textsc{step}+1;$
			\State Choose any subset $\mathcal{G}'\subset \supp(G)$ such that $\lfloor|\supp(G)|/2\rfloor\le |\mathcal{G}'|\le \lceil|\supp(G)|/2\rceil$;\label{line:find_subset}
			\State $G_1\gets G\rvert_{\mathcal{G}'}, G_2\gets G\rvert_{\supp(G)\setminus\mathcal{G}'};$
			\State $c\gets \textsc{false};$
			\For{$l=1,2,3,\dots,k$}
			\State For $p = 1, 2, 3, \dots, n_l^{\text{id}}$, sample $f^p \sim G_1$ and run $\pi_{f^p}$, collect trajectories $\{(x_q^{p},a_q^p,r_q^p)_{q=1}^{H},f^p\}_{p=1}^{n_l^{\text{id}}};$\label{sample:6}
			\State Estimate \vspace{-2.5ex} \begin{equation}\hat{\ME}_l(G_1,h')=\frac{1}{n_{l}^{\text{id}}}\sum_{p=1}^{n_{l}^{\text{id}}}\left[f^p(x_{h'}^p,a_{h'}^p)-r_{h'}^p-f^p(x_{h'+1}^p,a_{h'+1}^p)\right];\label{equ:identify_concentration}\end{equation}
			\If {$\left|\sum_{h'=h+1}^{H}\hat{\ME}_l(G_1,h')\right|>(H-h)\left(\epsilon_{l+1}-(\textsc{step}-0.5)\cdot \epsilon'_{l+2}\right)$}\label{line:identify_if_1}
			    \State $G\gets G_1; k\gets l; c\gets \textsc{true};$ \label{line:case_1} \textbf{break};
			\EndIf
		    \EndFor
		    \State \textbf{if} ${c = \textsc{false}}$ \textbf{then}  $G\gets G_2$;
		\EndWhile \label{line:identify_while_end}
		
		\State Let $g_r$ be the only element in $\supp(G)$;
		\For{$l=1,2,3,\dots,k$} \label{line:identify_for}
			\State For $p = 1, 2, 3, \dots, n_l^{\text{id}}$, run $\pi_{g_r}$, collect trajectories $\{(x_q^{p},a_q^p,r_q^p)_{q=1}^{H}\}_{p=1}^{n_l^{\text{id}}};$\label{sample:7}
			\State Estimate \vspace{-2.5ex}\begin{equation}\tilde{\ME}_l(g_r,\pi_{g_r},h')=\frac{1}{n_{l}^{\text{id}}}\sum_{p=1}^{n_{l}^{\text{id}}}\left[g_r(x_{h'}^p,a_{h'}^p)-r_{h'}^p-g_r(x_{h'+1}^p,a_{h'+1}^p)\right];\label{equ:ident_final}\end{equation}
			\State $h_r\gets \arg\max_{h':h+1\le h'\le H}\left|\tilde{\ME}_l(g_r,\pi_{g_r},h')\right|;$
			\If {$\left|\tilde{\ME}_l(g_r,\pi_{g_r},h_r)\right|>\epsilon_{l+2}+0.5\epsilon_{l+2}'$}\label{line:identify_return_if}
				\State $k_r\gets l$; \textbf{return} $(g_r, h_r, k_r);$\label{line:identify_return}
			\EndIf
		\EndFor		
		
	\end{algorithmic}
\end{algorithm}

\paragraph{The \Identify procedure.} $\Identify(G,h,k)$ is called when we know that  $|\sum_{h' = h+1}^{H}  \ME(G, h')| \gtrsim (H - h)\epsilon_k$, and the procedure will find a value function $g_r \in \supp(G)$, a layer $h_r \in \{h +1, h+2, \dots, H\}$, and a precision parameter $k_r$ such that the Bellman error of $g_r$ at layer $h_r$ is large (more specifically, $\ME(g_r, \pi_{g_r}, h_r \gtrsim \epsilon_{k_r}$).

Since we cannot afford examining the Bellman error of every function in $\supp(G)$, \Identify finds the desired value function via binary search, which is done by the \textbf{while} loop from Line~\ref{line:identify_while} to Line~\ref{line:identify_while_end}. More precisely, every time we split $\supp(G)$ into two parts: $\mathcal{G}'$ and $\supp(G)\setminus \mathcal{G}'$ and define the induced distributions $G_1$ and $G_2$ respectively. If the expected Bellman error of $G$ is large, then we know that at least one of $G_1$ and $G_2$ has large expected Bellman error. We learn the expected Bellman error of $G_1$ (where the doubling trick is used to control the regret). We iterate the process with $G_1$ if its expected Bellman error is large, and with $G_2$ otherwise, until only one function is left in the support of the distribution, which is identified as $g_r$.

Finally, from Line~\ref{line:identify_for} to the end of the procedure, we use a doubling trick to learn $\ME(g_r, \pi_{g_r}, h')$ for each $h' \in \{h+1, h+2, \dots, H\}$ while controlling the regret, and find out a layer $h_r$ so that $\ME(g_r, \pi_{g_r}, h') \gtrsim \epsilon_{k_r}$.

\section{The analysis} \label{sec:analysis}

In this section, we provide theoretical analysis for our algorithm.  We first prove the functionalities for each procedure, and then combine these lemmas to upper bound the number of trajectories collected by the algorithm and the expected total regret of the trajectories. Let $\mathcal{Z}$ be the event that all empirical estimations in Eqs.~\eqref{equ:main_concentration}, \eqref{equ:eliminate_eta}, \eqref{equ:bellman_estimate}, \eqref{equ:check_mix_concentration}, \eqref{equ:identify_concentration}, and \eqref{equ:ident_final} concentrate to their real values, where in Appendix~\ref{app:concentration} we give the formal definition of $\mathcal{Z}$ and show that $\Pr[\mathcal{Z}]\ge 1-\delta$. The whole analysis only focuses on the desired situation when $\mathcal{Z}$ happens. For a policy $\pi$, we define $V^{\pi}$ as a shorthand for $V^{\pi}(x_1).$ We also define 
\begin{align}
P&:=MH\zeta/\epsilon\\
L & := \lceil\log_2 (H/\epsilon)\rceil, \label{eq:def-L}\\
\iota & :=\log(\zeta/(2\phi_L))/\log(5/3), \label{eq:def-iota} \\
C & := LHM \iota. \label{eq:def-C} 
\end{align}
for simplicity.

\subsection{Analysis for sub-procedures}\label{sec:per-procedure}

Given a distribution $G$, we define the property $\mathcal{P}(G, h, \epsilon)$ for all $h \in \{1, 2, 3, \dots, H\}$ and $\epsilon > 0$ as the conjunction of the following conditions.
\begin{enumerate}[(a)]
	\item There exists a value function $g$ and a distribution $P$ over $\mathcal{F}$, such that $G$ can be expressed in the following form: \label{item:calP-a}
\[
G=\{(g\circ_h f,P(f))\}_{f\in \MF} ;
\] 
	\item $V^{\pi_g}\ge V^*-(24H+4)h\epsilon$; \label{item:calP-b}
	\item $\left|\sum_{h'=h+1}^{H}\ME(g,\pi_g,h')\right|\le 6H\epsilon$; \label{item:calP-c}
	\item $\eta(f,g,h)\ge \eta(g,g,h)-(12H+4)\epsilon, \forall f\in \supp(P)$. \label{item:calP-d}
\end{enumerate}

The following lemma characterizes the guarantees of the \Identify procedure.

\begin{lemma}[Main Lemma for \Identify]\label{lem:main_identify}
Let $\mathcal{Q}_{\textsc{ID}}(G, h, k)$ be the condition such that $\left|\sum_{h'=h+1}^{H}\ME(G,h')\right|\ge (H-h)\epsilon_{k+1}$. Suppose $\mathcal{Q}_{\textsc{ID}}(G, h, k)$ holds for an invocation of $\Identify(G,h,k)$, and let $(g_r,h_r,k_r)$ be the returned tuple of this invocation. Under event $\mathcal{Z}$, for the first $L^2C$ times \Identify is called, we have that
\begin{enumerate}[(a)]
	\item \Identify returns at Line~\ref{line:identify_return}, with $g_r\in \supp(G)$, $h_r\in \{h+1,h+2,\cdots,H\}$, and $0\le k_r\le k$;\label{item:identify-a}
	\item $\left|\ME(g_r, \pi_{g_r}, h_r)\right|\ge \epsilon_{k_r+2}$;\label{item:identify-b}
	\item $\left|\ME(g_r,\pi_{g_r},h')\right|\le 3\epsilon_{k_r},\quad \forall h'\in \{h+1,h+2,\cdots,H\}$;\label{item:identify-c}
	\item If we additionally have $\mathcal{P}(G,h,\epsilon_k)$, then the regret incurred by this invocation is bounded by \[
	O(H^2\ln^3(|\MF|)\ln(P/\delta)/\epsilon_k) . \]\label{item:identify-d}
\end{enumerate}
\end{lemma}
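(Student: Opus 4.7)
The plan is to analyze the two phases of \Identify separately: the \textbf{while} loop (a binary search over $\supp(G)$) and the subsequent \textbf{for} loop (which isolates a bad layer $h_r$). For the while loop, I would maintain by induction on the \textsc{step} counter the invariant $\left|\sum_{h'=h+1}^{H}\ME(G',h')\right| \ge (H-h)(\epsilon_{k'+1} - (\textsc{step}-1)\cdot \epsilon'_{k'+2})$, where $G'$ and $k'$ are the current values after repeated halving. The base case is the hypothesis $\mathcal{Q}_{\textsc{ID}}(G,h,k)$. For the inductive step, linearity of $\ME(G,h')$ in $G$ together with the convex combination $G=pG_1+(1-p)G_2$ and the triangle inequality force at least one branch to retain a large total Bellman error; under event $\MZ$ the inner doubling loop correctly detects whichever branch does, because the slack $(\textsc{step}-0.5)\epsilon'_{l+2}$ is calibrated to absorb the concentration radius used by Eq.~\eqref{equ:identify_concentration}. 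Termination follows because $|\supp(G)|$ halves each step, so after at most $\lceil\log_2|\MF|\rceil$ iterations we reach a singleton $g_r$ whose total Bellman error sum is still above threshold, yielding part (a).

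With the singleton $g_r$ in hand, part (b) follows immediately from the return condition $|\tilde{\ME}_{k_r}(g_r,\pi_{g_r},h_r)| > \epsilon_{k_r+2}+0.5\epsilon'_{k_r+2}$ combined with the concentration guarantee under $\MZ$, which gives $|\ME(g_r,\pi_{g_r},h_r)| \ge \epsilon_{k_r+2}$. Part (c) is proved by inspecting iteration $l=k_r-1$: because the argmax check did not trigger there, $|\tilde{\ME}_{k_r-1}(g_r,\pi_{g_r},h')| \le \epsilon_{k_r+1}+0.5\epsilon'_{k_r+1}$ for every $h'\in\{h+1,\dots,H\}$, and adding the concentration error (at most $\phi_{k_r-1}$ by the definition of $n_{k_r-1}^{\text{id}}$) yields $|\ME(g_r,\pi_{g_r},h')| \le 3\epsilon_{k_r}$; the edge case of very small $k_r$ is handled by the boundedness $|\ME|\le\zeta$ together with the fact that $3\epsilon_{k_r}$ is large in that regime. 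The ``first $L^2 C$ invocations'' restriction is what lets a global union bound guarantee $\MZ$ simultaneously cover all estimators used across these calls.

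For the regret bound (d), each sampled trajectory in either loop runs a policy of the form $\pi_{g\circ_h f}$ with $f\in\supp(P)$ (the final for-loop samples $\pi_{g_r}$, and since $g_r=g\circ_h f_r$ for the unique $f_r$ surviving the binary search, we have $\pi_{g_r}=\pi_{g\circ_h f_r}$, so the analysis is uniform). Conjunct (b) of $\mathcal{P}(G,h,\epsilon_k)$ gives $V^{\pi_g}\ge V^*-(24H+4)h\epsilon_k$, bounding the regret on the roll-in layers $1,\dots,h-1$. Conjunct (d) gives $\eta(f,g,h)\ge\eta(g,g,h)-(12H+4)\epsilon_k$, and combining this with conjunct (c) via the standard Bellman decomposition $f(x_1,\pi_f(x_1))-V^{\pi_f}(x_1)=\sum_{h'}\ME(f,\pi_f,h')$ translates $\eta$ values into true values, bounding the continuation regret after layer $h$ by $O(H\epsilon_k)$. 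Hence each sampled trajectory costs $O(H\epsilon_k)$ in regret. Multiplying this by the total trajectory count $O(\log|\MF|\cdot\sum_{l\le k}n_l^{\text{id}})$ — which is $\tilde O(\ln^2|\MF|\ln(P/\delta)/\epsilon_k^2)$ after plugging in $n_l^{\text{id}}=\tilde O(\ln|\MF|\ln(P/\delta)/\epsilon_l^2)$ and noting that the geometric sum is dominated by $l=k$ — yields the stated $\tilde O(H^2\ln^3|\MF|\ln(P/\delta)/\epsilon_k)$ regret.

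The hard part will be the delicate coupling between the binary-search slack and the precision $\epsilon'_l = \epsilon_l/(\lceil\log|\MF|\rceil+1)$: after $\lceil\log_2|\MF|\rceil$ halvings the accumulated slack must stay strictly below the remaining Bellman-error margin, so that the singleton reached at the end of the while loop still carries enough signal, and verifying this requires threading exact constants through the concentration definition of $\MZ$. A secondary subtlety is that conjunct (d) of $\mathcal{P}$ guarantees $\eta$-clustering only over $\supp(P)$, whereas the binary search samples from restrictions $G_1=G\rvert_{\mathcal{G}'}$; this is resolved by observing that $\eta(f,g,h)$ depends only on $f$, not on the distribution, so the clustering property is preserved under projection.
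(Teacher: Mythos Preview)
Your treatment of parts (a)--(c) is essentially the paper's argument: the induction on $\textsc{step}$ with the $\epsilon'$-slack, the convex-combination/triangle-inequality step when $G_2$ is chosen, and the two concentration applications for (b) and (c) are all correct in outline (the concentration radius in (c) is $\epsilon'_{l+2}/2$ rather than $\phi_{l-1}$, but that is cosmetic).

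Part (d), however, has a real gap. You bound the per-trajectory regret using only $\mathcal{P}(G,h,\epsilon_k)$, but $\mathcal{P}$ does not control the continuation regret of the policies you actually run. Conjunct (c) bounds $\sum_{h'>h}\ME(g,\pi_g,h')$, i.e.\ the post-$h$ Bellman error of $g$; the executed policy at Line~\ref{sample:6} is $\pi_{g\circ_h f}$ with $f\sim G_1$, and its continuation value is governed by $\sum_{h'>h}\ME(g\circ_h f,\pi_{g\circ_h f},h')$, about which $\mathcal{P}$ says nothing. So your step ``combining conjunct (d) with conjunct (c) via the Bellman decomposition translates $\eta$ values into true values'' does not go through: it translates $\eta(g,g,h)$ into the true value of $\pi_g$, but not $\eta(f,g,h)$ into the true value of $\pi_{g\circ_h f}$.

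The missing ingredient is precisely the inner doubling loop. At iteration $l$, the fact that the \textbf{if}-condition at Line~\ref{line:identify_if_1} did \emph{not} trigger at iteration $l-1$ gives $\bigl|\sum_{h'>h}\hat\ME_{l-1}(G_1,h')\bigr|\le(H-h)\epsilon_l$, and under $\mathcal{Z}$ this yields $\bigl|\sum_{h'>h}\ME(G_1,h')\bigr|\lesssim H\epsilon_l$ for the \emph{mixture} $G_1$. By Lemma~\ref{lem:expected_bellman_error} this is exactly the bound $\bigl|\E_{f\sim G_1}\E_{x'}[f(x',\pi_f(x'))-V^{\pi_f}_{h+1}(x')]\bigr|\le O(H\epsilon_l)$ needed to close the gap between predicted $\eta$ and actual value, and it is what furnishes condition~(b) of Lemma~\ref{lem:sub_key}. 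Note that this yields a per-trajectory regret of order $H^2\epsilon_l$ (varying with $l$), not $H\epsilon_k$; the geometric sum over $l$ then produces the stated bound. Without this step you cannot bound the expected regret at iteration $l$, since individual $f\in\supp(G_1)$ may have arbitrarily large post-$h$ Bellman error.
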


The reason why the statement only focus on the first $L^2C$ invocations of \Identify is to make sure concentration events hold with high probability. As we prove later (see Lemma~\ref{lem:number_of_invoke}), the number of invocations is bounded by $L^2C$ with high probability. We also present the main lemma for \Chk as follows.
\begin{lemma}[Main Lemma for \Chk]\label{lem:main_check}
For any invocation of $\Chk(G, h, j)$, Under event $\mathcal{Z}$, for the first $L^2C$ times \Chk is called, we have
\begin{enumerate}[(a)]
	\item If the procedure returns $(\textsc{true}, *)$, then \[\left|\sum_{h'=h+1}^{H}\ME(G,h')\right|\le 1.5(H-h)\epsilon_j. \]  \label{item:lem-main-check-a}
	\item If the procedure returns $(\textsc{false}, g_r,h_r,k_r)$, then 
	\[
	\left|\ME(g_r, \pi_{g_r}, h_r)\right|\ge \epsilon_{k_r+2}, \qquad \left|\ME(g_r,\pi_{g_r},h')\right|\le 3\epsilon_{k_r},\quad\forall h'\in \{h+1,h+2,\cdots,H\} \]
	and $g_r\in \supp(G), h_r\in \{h+1,h+2,\cdots,H\}, 0\le k_r\le j $. \label{item:lem-main-check-b}
	\item If we addtionally have $\mathcal{P}(G,h,\epsilon_j)$, then the regret of this invocation is bounded by $O(H^2\ln^3(|\MF|)\ln(P/\delta)/\epsilon_j)$. \label{item:lem-main-check-c}
\end{enumerate}
\end{lemma}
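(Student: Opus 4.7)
The plan is to handle parts (a) and (b) via concentration of the Bellman-error estimates under $\MZ$, and the more technical part (c) by combining the three conditions of $\mathcal{P}$ with an inductive control of the residual Bellman error $\sum_{h'>h}\ME(G, h')$. For (a), a return of $(\textsc{true}, *)$ means the \textbf{if}-test at Line~\ref{line:check_if} failed at every iteration $k \le j$, so in particular $|\sum_{h'>h}\hat{\ME}_j(G, h')| \le (H-h)\epsilon_j$; under $\MZ$ each layer estimate is within $\tfrac12\epsilon_j$ of the truth (by the choice of $n_j^{\text{eval}}$ in Appendix~\ref{app:concentration}), and the triangle inequality gives the desired $1.5(H-h)\epsilon_j$ bound. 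For (b), the same per-layer concentration converts the triggering inequality at iteration $k$ into $|\sum_{h'>h}\ME(G, h')| \ge (H-h)\epsilon_{k+1}$, which is exactly the precondition $\mathcal{Q}_{\textsc{ID}}(G, h, k)$ of Lemma~\ref{lem:main_identify}; a counting argument (each $\Chk$ call triggers at most one $\Identify$ call) places the ensuing $\Identify(G, h, k)$ invocation within the first $L^2C$ calls of $\Identify$, so Lemma~\ref{lem:main_identify}(a)--(c) applies and directly supplies the claimed properties of $g_r, h_r, k_r$.

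For (c), write the total regret as that of $\Chk$'s own evaluation trajectories at Line~\ref{line:check_mix_run} across iterations $l = 1, \ldots, k^{*}$ (with $k^{*} \le j$ the exit iteration) plus, if called, the regret of $\Identify$. Since all inequalities in $\mathcal{P}$ weaken as $\epsilon$ grows, $\mathcal{P}(G, h, \epsilon_j)$ carries over to $\mathcal{P}(G, h, \epsilon_{k^{*}})$, so Lemma~\ref{lem:main_identify}(d) bounds the $\Identify$ regret by $O(H^2\ln^3(|\MF|)\ln(P/\delta)/\epsilon_j)$. For $\Chk$'s own trajectories, split
\[
V^* - V^{\pi_{g \circ_h f}} = (V^* - V^{\pi_g}) + (V^{\pi_g} - V^{\pi_{g \circ_h f}}),
\]
bound the first summand by property (\ref{item:calP-b}), and for the second use $V^{\pi_g} - V^{\pi_{g \circ_h f}} = \E_{x_h \sim \MD_{g, h}}[V^{\pi_g}_h(x_h) - V^{\pi_f}_h(x_h)]$ together with one-step Bellman expansions in which $g(x_{h+1}, \pi_g(x_{h+1}))$ and $f(x_{h+1}, \pi_f(x_{h+1}))$ are inserted as intermediaries; this rewrites it as $[\eta(g,g,h) - \eta(f,g,h)] - \sum_{h'>h}\ME(g,\pi_g,h') + \sum_{h'>h}\ME(g\circ_h f, \pi_{g\circ_h f}, h')$. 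Averaging over $f \sim P$ turns the last sum into $\sum_{h'>h}\ME(G, h')$, while properties (\ref{item:calP-c}) and (\ref{item:calP-d}) bound the first two contributions by $O(H\epsilon_j)$ each; the per-trajectory expected regret is therefore at most $O(Hh\epsilon_j) + |\sum_{h'>h}\ME(G, h')|$.

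The residual $|\sum_{h'>h}\ME(G, h')|$ is controlled inductively: at iteration $l \ge 2$, iteration $l-1$ did not trigger, so $|\sum_{h'}\hat{\ME}_{l-1}(G, h')| \le (H-h)\epsilon_{l-1}$ and hence $|\sum_{h'}\ME(G, h')| = O(H\epsilon_{l-1})$ under $\MZ$; for $l = 1$ the trivial $O(H)$ bound combined with the small $n_1^{\text{eval}}$ suffices as a base case. With $n_l^{\text{eval}}$ of order $\ln^3(|\MF|)\ln(P/\delta)/\epsilon_l^2$ and $\epsilon_{l-1} = 2\epsilon_l$, iteration $l$ contributes regret $O(H^2\epsilon_j/\epsilon_l^2 + H/\epsilon_l)$ up to polylog factors; the geometric sum over $l \le k^{*}$ is dominated by $l = k^{*}$ and yields $O(H^2\ln^3(|\MF|)\ln(P/\delta)/\epsilon_j)$ after applying $\epsilon_j/\epsilon_{k^{*}}^2 \le 1/\epsilon_{k^{*}} \le 1/\epsilon_j$. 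The main obstacle is precisely this inductive step: $\mathcal{P}$ controls Bellman errors only of $g$, not of the mixed roll-in induced by $G$, so a naive worst-case bound on $\sum_{h'}\ME(G, h')$ would lose the required $1/\epsilon_j$ scaling; it is the doubling geometry of $n_l^{\text{eval}}$ together with the non-triggering of earlier iterations that aligns the per-trajectory regret scale (set by $\mathcal{P}$ at precision $\epsilon_j$) with the running precision $\epsilon_{l-1}$ to give the claimed dependence.
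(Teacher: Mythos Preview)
Your argument is correct and follows essentially the same route as the paper: concentration under $\MZ$ handles (a) and (b), with (b) reducing to verifying $\mathcal{Q}_{\textsc{ID}}$ and invoking Lemma~\ref{lem:main_identify}; for (c) you split the regret into the \Identify contribution (handled by Lemma~\ref{lem:main_identify}(d)) and \Chk's own sampling, and your inlined decomposition of $V^{\pi_g}-V^{\pi_{g\circ_h f}}$ is exactly the content of the paper's Lemma~\ref{lem:sub_key} combined with Lemma~\ref{lem:expected_bellman_error}. One small inaccuracy: $n_l^{\text{eval}}$ is of order $\ln(P/\delta)/\epsilon_l^2$, without the $\ln^3|\MF|$ factor (that factor enters only through the \Identify call), and the paper's bookkeeping is a bit cleaner because it upgrades all $\epsilon_j$-scale bounds from $\mathcal{P}$ to $\epsilon_k$-scale (using $k\le j$) so that the per-trajectory regret at iteration $k$ is simply $O(H^2\epsilon_k)$, avoiding your mixed-scale sum and the final $\epsilon_j/\epsilon_{k^*}^2\le 1/\epsilon_j$ step.
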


To introduce the guarantees for \Eliminate, we first define the condition $\mathcal{Q}_{\textsc{ELIM}}(g, h, j)$ as the conjunction of the following items.
\begin{enumerate}[(a)]
	\item $V^{\pi_g}\ge V^*-(24H+4)h\epsilon_j$; \label{item:Q-elim-a}
	\item $\left|\sum_{h'=h+1}^{H}\ME(g,\pi_g,h')\right|\le 6(H-h)\epsilon_j$;  \label{item:Q-elim-b}
	\item $\left|\ME(g,\pi_g,h)\right|\ge \epsilon_j/2$. \label{item:Q-elim-c}
\end{enumerate}

Now we state the main lemma for \Eliminate.
\begin{lemma}[Main Lemma for \Eliminate]\label{lem:eliminate_main}
For an invocation of $\Eliminate(g, h, j)$, suppose $\mathcal{Q}_{\textsc{ELIM}}(g, h, j)$ holds. Then, under event $\mathcal{Z}$, for the first $LC$ times \Eliminate is called, we have that, 
\begin{enumerate}[(a)]
	\item\label{item:eliminate_main-a}
  \textsc{learn} step will be executed exactly once during this invocation; and when \textsc{learn} step is executed, Eq.~\eqref{equ:iteration_condition_1} and Eq.~\eqref{equ:iteration_condition_2} holds;
	\item the regret of this invocation is bounded by $c_{\rm ELIM} MAH^2\ln^3(|\MF|)\ln(P/\delta)/\epsilon_j $, where $c_{\rm ELIM}$ is a large enough universal constant;  \label{item:eliminate_main-b}
	\item if $f^*\in \mathcal{G}$ before this invocation, then $f^*$ will not be eliminated. \label{item:eliminate_main-c}
\end{enumerate}
\end{lemma}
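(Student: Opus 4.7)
The plan is to prove the three items simultaneously by induction on the depth of the recursive chain of \Eliminate calls, using that the recursive call at Line~\ref{line:eliminate-4} is always issued at a strictly greater layer $h' > h$ (which follows from Lemma~\ref{lem:main_check}(\ref{item:lem-main-check-b})), so the depth is at most $H$. Throughout I work under event $\mathcal{Z}$ and restrict attention to the first $LC$ invocations, which is exactly what is needed so that every internal call to \Chk and \Identify falls within the scopes of Lemma~\ref{lem:main_check} and Lemma~\ref{lem:main_identify}.

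For item~(\ref{item:eliminate_main-c}), I would argue separately that $f^*$ survives each elimination. In the final \textsc{learn} step at Line~\ref{line:bellman_elimination}, realizability (Assumption~\ref{asmp:realizability}) gives $\ME(f^*, \pi_g, h)=0$ for every roll-in, so under $\mathcal{Z}$ the importance-weighted estimator $\tilde{\ME}_j(f^*, \pi_g, h)$ stays within $\phi_j$ of zero. For the \textsc{pseudo-learn} step at Line~\ref{line:bandit_elimination}, I would show $\eta(f^*, g, h) \ge \eta(g, g, h) - O(H)\epsilon_k$ by expanding the definition in Eq.~\eqref{eq:def-eta}, using condition~(\ref{item:Q-elim-a}) of $\mathcal{Q}_{\textsc{ELIM}}$ together with the OFU choice made in Line~\ref{line:OFU} of \Main to lower bound $V^{\pi^*}$ by $V^{\pi_g}$ up to $O(H)\epsilon_j$, and invoking the certificate returned by \Chk at Line~\ref{line:eliminate-3} (via Lemma~\ref{lem:main_check}(\ref{item:lem-main-check-a})) to replace $\E_{f\sim P_k} f(x',\pi_f(x'))$ with $\E_{f\sim P_k} V^{\pi_f}(x')$ up to an additive $O(H)\epsilon_k$ error. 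Concentration of $\tilde{\eta}_k$ then keeps $f^*$ above the threshold.

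For item~(\ref{item:eliminate_main-a}), I trace the two possible exits from the loop: if \Chk ever returns \textsc{false} the procedure returns at Line~\ref{line:eliminate-4} and \textsc{learn} is never reached inside this invocation, whereas if \Chk always returns \textsc{true} then the control falls through the end of the loop and the \textsc{learn} step at Line~\ref{line:bellman_elimination} runs exactly once. The two iteration conditions to verify when \textsc{learn} fires are direct readouts of the loop's final state: the low-variance bound on $P_j$ is Eq.~\eqref{equ:low_variance}, enforced by Line~\ref{line:find_distribution}; the certificate that $\E_{f\sim P_j} f(x',\pi_f(x'))$ well-approximates $\E_{f\sim P_j} V^{\pi_f}(x')$ is precisely the meaning of the \textsc{true} return of the last \Chk call, quantified via Lemma~\ref{lem:main_check}(\ref{item:lem-main-check-a}).

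The main obstacle is item~(\ref{item:eliminate_main-b}), the regret bound. I would decompose the per-invocation cost into three sources: (i) the $j$ internal calls to \Chk, (ii) the $n_k^{\textrm{cb}}$ trajectories collected at Line~\ref{line:run_policy} during the \textsc{pseudo-learn} loop, and (iii) the $n_j$ trajectories collected after Line~\ref{line:eliminate-7}. For (i), Lemma~\ref{lem:main_check}(\ref{item:lem-main-check-c}) gives $O(H^2 \ln^3|\MF|\ln(P/\delta)/\epsilon_k)$ per call once I verify that the input distribution $\{(g\circ_h f, P_k(f))\}_f$ satisfies $\mathcal{P}(\cdot,h,\epsilon_k)$: item~(\ref{item:calP-a}) is by construction, items~(\ref{item:calP-b}) and~(\ref{item:calP-c}) come from $\mathcal{Q}_{\textsc{ELIM}}(g,h,j)$, and item~(\ref{item:calP-d}) is exactly the invariant maintained by the previous iterations of the \textsc{pseudo-learn} step. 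For (ii) and (iii), each surviving $f$ has $\eta$-value within $(12H+4)\epsilon_k$ of the near-optimal $\eta(g,g,h)$, while the $A\mu_k$-fraction of uniformly-random exploration contributes an extra $O(H\mu_k) = O(\epsilon_k)$ regret per trajectory; summing the geometric series $\sum_k n_k^{\textrm{cb}}\cdot H\epsilon_k$ with $n_k^{\textrm{cb}}$ of order $A\ln^3|\MF|\ln(P/\delta)/\epsilon_k^2$ telescopes to the claimed bound $c_{\rm ELIM} MAH^2\ln^3(|\MF|)\ln(P/\delta)/\epsilon_j$. The subtle step is controlling the importance-weighted estimators' variance through the contextual-bandit machinery of~\cite{dudik2011efficient}, which is exactly the role of Eq.~\eqref{equ:low_variance}; linking the variance bound to a per-step regret guarantee requires propagating $\mathcal{P}$ through every recursive layer so that the near-optimality of $g$ and the closeness of $\eta$-values continue to hold as $h$ increases along the recursion.
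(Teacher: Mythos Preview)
Your plan has several genuine gaps that would prevent the proof from going through.

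\textbf{Missing regret of the recursive call.} In item~(\ref{item:eliminate_main-b}) you decompose the cost into (i) calls to \Chk, (ii) the $n_k^{\text{cb}}$ samples at Line~\ref{line:run_policy}, and (iii) the $n_j$ samples after Line~\ref{line:eliminate-7}. You omit the fourth and largest source: the recursive call $\Eliminate(g',h',k'+1)$ at Line~\ref{line:eliminate-4}. The paper handles this by inducting on $j$ rather than on recursion depth: since $k'\le k-2\le j-2$ (Lemma~\ref{lem:main_check}(\ref{item:lem-main-check-b})), the recursive call has parameter $k'+1\le j-1$, so by induction its regret is at most $c_{\rm ELIM}\,MAH^2\ln^3(|\MF|)\ln(P/\delta)/\epsilon_{j-1}=\tfrac12\,c_{\rm ELIM}\,MAH^2\ln^3(|\MF|)\ln(P/\delta)/\epsilon_j$, leaving the other half to absorb the direct costs (i)--(iii). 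Your induction on depth (via $h'>h$) can be made to work, but only if you additionally track the geometric decrease in $j$; as written, you never account for this term at all.

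\textbf{Misidentification of Eqs.~\eqref{equ:iteration_condition_1} and~\eqref{equ:iteration_condition_2}.} You state these are the low-variance bound Eq.~\eqref{equ:low_variance} and the \Chk certificate. They are not. Eq.~\eqref{equ:iteration_condition_1} is the concentration $|\tilde{\ME}_j(f,\pi_g,h)-\ME(f,\pi_g,h)|\le\phi_j$, which follows from event $\mathcal{Z}$ (specifically Eq.~\eqref{equ:conc_bellman_estimate}); Eq.~\eqref{equ:iteration_condition_2} is $|\ME(g,\pi_g,h)|\ge\epsilon_j/2$, which is precisely condition~(\ref{item:Q-elim-c}) of $\mathcal{Q}_{\textsc{ELIM}}(g,h,j)$. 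Moreover, when \Chk returns \textsc{false} and the procedure recurses, \textsc{learn} is not reached in the current frame but \emph{is} executed exactly once in the recursive call---this is what the inductive hypothesis gives you, and your item~(\ref{item:eliminate_main-a}) argument must invoke it.

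\textbf{Flawed argument for $f^*$ surviving \textsc{pseudo-learn}.} You invoke condition~(\ref{item:Q-elim-a}) of $\mathcal{Q}_{\textsc{ELIM}}$, OFU, and the \Chk certificate. None of these is relevant: condition~(\ref{item:Q-elim-a}) concerns the global value $V^{\pi_g}(x_1)$, and the \Chk certificate concerns the mixture over $P_k$, not $f^*$. The correct chain, as in the paper, is
\[
\eta(f^*,g,h)=\E_{x\sim\MD_{g,h}}[V^*_h(x)]\ \ge\ \E_{x\sim\MD_{g,h}}[V^{\pi_g}_h(x)]\ \ge\ \eta(g,g,h)-6H\epsilon_k,
\]
where the first equality is realizability, the first inequality is pointwise optimality of $\pi^*$, and the second inequality uses condition~(\ref{item:Q-elim-b}) of $\mathcal{Q}_{\textsc{ELIM}}$ together with Lemma~\ref{lem:expected_bellman_error} to pass from $g(x',\pi_g(x'))$ to $V^{\pi_g}_{h+1}(x')$.
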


Proofs of Lemma~\ref{lem:main_identify} and Lemma~\ref{lem:main_check} are deferred to Appendix~\ref{app:analysis}. We show the proof of Lemma~\ref{lem:eliminate_main} in Section~\ref{sec:pf_eliminate_main}.

\subsection{Regret analysis}
In this section, we prove the $\tilde{O}(\sqrt{n})$ regret bound for our \Main algorithm. Lemma~\ref{lem:eliminate_main} already shows the regret bound per invocation of $\Eliminate(g, h, j)$ is small, under the condition $\mathcal{Q}_{\textsc{ELIM}}(g, h, j)$. In order to upper bound the overall regret, we need first to show that the condition $\mathcal{Q}_{{\rm ELIM}}(g, h, j)$ is met every time the \Main algorithm calls \Eliminate. Then we upper bound the number of invocations made to \Eliminate. We finally analyze the doubling/halving trick in the main algorithm, and stitch all parts together to achieve the desired $\tilde{O}(\sqrt{n})$ regret bound.

\paragraph{Verifying the condition $\mathcal{Q}_{{\rm ELIM}}(g, h, j)$.} Here we verify that the condition $\mathcal{Q}_{\textsc{ELIM}}(g, h, j)$ is met whenever \Main calls \Eliminate. The policy decomposition lemma in \cite{Jiang2017ContextualDP} plays a critical role in our analysis, as it connects Bellman error to the gap between the actual value $V^{\pi_f}(x_1)$ and the predicted value $f(x_1,\pi_f(x_1))$ for any hypothesis $f$. For completeness, we state the lemma as follows.
\begin{lemma}[Policy loss decomposition \cite{Jiang2017ContextualDP}] \label{lem:bellman_error} For any value hypothesis $f:\MX\times \MA\to [0,1],$
\begin{equation}f(x_1,\pi_f(x_1))-V^{\pi_f}(x_1)=\sum_{h=1}^{H}\ME(f,\pi_f,h).\label{equ:bellman_error}\end{equation}
\end{lemma}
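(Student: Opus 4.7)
The plan is to prove the identity by a straightforward telescoping argument on the per-layer Bellman errors, using the fact that the roll-in distribution $\MD_{\pi_f,h+1}$ is precisely what you get by pushing $\MD_{\pi_f,h}$ through the action $\pi_f(x_h)$ and the transition kernel $p$. First I would expand the definition of $\ME(f,\pi_f,h)$ by linearity of expectation into three separate expectations: $\E_{x_h\sim\MD_{\pi_f,h}}[f(x_h,\pi_f(x_h))]$, $\E_{x_h\sim\MD_{\pi_f,h}}[r(x_h,\pi_f(x_h))]$, and $\E_{x_h\sim\MD_{\pi_f,h},\,x_{h+1}\sim p(\cdot\mid x_h,\pi_f(x_h))}[f(x_{h+1},\pi_f(x_{h+1}))]$. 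The key observation is that in the third expectation, the induced marginal over $x_{h+1}$ is exactly $\MD_{\pi_f,h+1}$, since rolling in $\pi_f$ for $h$ steps and then taking action $\pi_f(x_h)$ is the same as rolling in $\pi_f$ for $h$ steps total (using the layer-partition structure $\MX=\MX_1\cup\cdots\cup\MX_H$).

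Next I would sum these expressions over $h=1,\ldots,H$. The first type of term, $\E_{x_h\sim\MD_{\pi_f,h}}[f(x_h,\pi_f(x_h))]$, and the third type of term, $\E_{x_{h+1}\sim\MD_{\pi_f,h+1}}[f(x_{h+1},\pi_f(x_{h+1}))]$, telescope across layers, leaving only the boundary terms at $h=1$ and $h=H+1$. Under Assumption~\ref{asmp:deterministic-x1}, the initial state $x_1$ is deterministic, so the $h=1$ boundary term is simply $f(x_1,\pi_f(x_1))$. At the other end, we adopt the standard boundary convention $f(x_{H+1},\cdot)\equiv 0$ (consistent with $Q_{H+1}^{\pi^*}\equiv 0$ as stated right after Eq.~(\ref{eq:bellman-recursion})), so the $h=H+1$ boundary term vanishes. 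Meanwhile, the middle terms accumulate to $\sum_{h=1}^{H}\E_{x_h\sim\MD_{\pi_f,h}}[r(x_h,\pi_f(x_h))]$, which by the definition of the value function (Eq.~(\ref{eq:value-function})) is exactly $V^{\pi_f}(x_1)$.

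Combining the three, the telescoped sum evaluates to $f(x_1,\pi_f(x_1))-V^{\pi_f}(x_1)$, which is the claimed identity. I do not expect any serious obstacle here; the only subtle point is bookkeeping the marginals of $\MD_{\pi_f,h+1}$ (to ensure the telescoping is valid) and being explicit about the boundary convention at layer $H+1$, both of which are mild given the assumptions and notation already established in Sec.~\ref{sec:prelim}. Since this lemma is simply restated from \cite{Jiang2017ContextualDP} for completeness, no new argument is needed beyond carefully invoking the definitions of $\ME$, $\MD_{\pi_f,h}$, and $V^{\pi_f}$.
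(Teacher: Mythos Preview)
Your proposal is correct. The paper does not supply its own proof of this lemma (it is quoted from \cite{Jiang2017ContextualDP}), but the telescoping you describe is exactly the standard argument; indeed, the paper proves the generalization in Lemma~\ref{lem:expected_bellman_error} by the same unrolling, phrased as a recursion on $\E_{x\sim\MD_{f,h}}[f(x,\pi_f(x))-V^{\pi_f}_h(x)]$ rather than by splitting and telescoping the three terms separately, which amounts to the same computation.
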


Now we show that the desired $\mathcal{Q}_{\textsc{ELIM}}(g, h, j)$ is condition is met when the desired event $\mathcal{Z}$ happens.
\begin{lemma}\label{lem:main_calls_eliminate}
Under $\mathcal{Z}$, $\mathcal{Q}_{\textsc{ELIM}}(g, h, j)$ holds for the first $C$ times that \Main calls $\Eliminate(g, h, j)$.
\end{lemma}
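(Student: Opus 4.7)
The plan is to verify each of conditions \eqref{item:Q-elim-a}--\eqref{item:Q-elim-c} of $\mathcal{Q}_{\textsc{ELIM}}(g,h,j)$ at the moment \Main invokes $\Eliminate(g,h,j)$ (so $g$ is the $f$ picked at Line~\ref{line:OFU} and $h$, $j$ are the layer and loop-index selected just before Line~\ref{line:main_call_eliminate}). Three ingredients drive the proof: the concentration guarantees bundled into $\mathcal{Z}$, which make $\tilde{\ME}_k(f,\pi_f,\cdot)$ within roughly $\epsilon_k/2$ of $\ME(f,\pi_f,\cdot)$ for the relevant scales; the optimism choice at Line~\ref{line:OFU} together with the $f^*$-preservation guarantee of Lemma~\ref{lem:eliminate_main}\eqref{item:eliminate_main-c}; and the Policy Loss Decomposition of Lemma~\ref{lem:bellman_error}.

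Condition \eqref{item:Q-elim-c} is immediate: the if-test at Line~\ref{line:main_if} forces $|\tilde{\ME}_j(g,\pi_g,h)|>\epsilon_j$, so the $\epsilon_j/2$ concentration provided by $\mathcal{Z}$ yields $|\ME(g,\pi_g,h)|\ge \epsilon_j/2$. For condition \eqref{item:Q-elim-b}, I exploit the fact that $g$ is fixed throughout the for loop over $k$. If $j\ge 2$, iteration $k=j-1$ did not trigger the if-test, so $|\tilde{\ME}_{j-1}(g,\pi_g,h')|\le\epsilon_{j-1}$ for every $h'$; applying concentration at scale $\epsilon_{j-1}/2$ gives $|\ME(g,\pi_g,h')|\le 3\epsilon_{j-1}/2=3\epsilon_j$ for each $h'$, and summing over $h'\in\{h+1,\dots,H\}$ produces $\bigl|\sum_{h'=h+1}^{H}\ME(g,\pi_g,h')\bigr|\le 3(H-h)\epsilon_j\le 6(H-h)\epsilon_j$. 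The case $j=1$ is immediate from $\epsilon_1=1/2$ together with the $O(1)$ boundedness of individual Bellman errors.

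For condition \eqref{item:Q-elim-a}, I first establish by induction on the call-index $t\le C$ that $f^*\in\mathcal{G}$ at the start of the outer iteration making the $t$-th call: Lemma~\ref{lem:eliminate_main}\eqref{item:eliminate_main-c} applied to each of the previous $t-1$ invocations (all of which have $\mathcal{Q}_{\textsc{ELIM}}$ satisfied by the inductive hypothesis, so the premise of Lemma~\ref{lem:eliminate_main} holds) preserves $f^*$. Hence the Line~\ref{line:OFU} maximizer satisfies $g(x_1,\pi_g(x_1))\ge f^*(x_1,\pi_{f^*}(x_1))=V^*$. Lemma~\ref{lem:bellman_error} then yields $V^{\pi_g}(x_1)=g(x_1,\pi_g(x_1))-\sum_{h'=1}^{H}\ME(g,\pi_g,h')\ge V^*-\sum_{h'=1}^{H}|\ME(g,\pi_g,h')|$. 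Reusing the per-layer $3\epsilon_j$ bound derived in condition \eqref{item:Q-elim-b} (now applicable to every $h'\in\{1,\dots,H\}$, since the $(j-1)$-th estimate covered all layers) gives $V^{\pi_g}\ge V^*-3H\epsilon_j\ge V^*-(24H+4)h\epsilon_j$ using $h\ge 1$; the $j=1$ case is trivial because $V\in[0,1]$.

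The main obstacle is bookkeeping the induction on $f^*$-preservation: we must check that every previous \Main-triggered \Eliminate call lies within the first $LC$ calls of \Eliminate globally, so that Lemma~\ref{lem:eliminate_main}\eqref{item:eliminate_main-c} is applicable to each of them. This holds because \Main itself makes at most $C\le LC$ calls by the volumetric argument, and recursive calls spawned inside \Eliminate are counted in the same $LC$ budget but do not interfere with the outer-loop count of $C$. Beyond this synchronization of counters, the proof is a direct concatenation of concentration, optimism, and the Bellman decomposition identity.
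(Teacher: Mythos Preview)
Your proposal is correct and follows essentially the same approach as the paper's own proof: induct on the call-index to maintain $f^*\in\mathcal G$ via Lemma~\ref{lem:eliminate_main}\eqref{item:eliminate_main-c}, then use the failure of the if-test at iteration $j-1$ plus concentration to get the per-layer bound $|\ME(g,\pi_g,h')|\le 3\epsilon_j$, and combine optimism with Lemma~\ref{lem:bellman_error} for condition~\eqref{item:Q-elim-a}. Your explicit treatment of the $j=1$ boundary case and of the $C$-versus-$LC$ counter synchronization is slightly more careful than the paper, which leaves those points implicit.
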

\begin{proof}
We will prove the following statements for $i \in \{1, 2, 3, \dots, C\}$ by induction,
\begin{enumerate}[(i)]
\item before the $i$-th  invocation of \Eliminate, we have $f^* \in \mathcal{G}$; \label{item:proof-main-calls-eliminate-a}
\item $\mathcal{Q}_{\textsc{ELIM}}(g, h, j)$ holds for the $i$-th invocation of $\Eliminate(g, h, j)$ by \Main. \label{item:proof-main-calls-eliminate-b}
\end{enumerate}
Note that statement (\ref{item:proof-main-calls-eliminate-a}) for $i = 1$ is guaranteed by Assumption~\ref{assumption:realizability}; for $i > 1$, it is derived by statement (\ref{item:proof-main-calls-eliminate-b}) with $i - 1$ and Lemma~\ref{lem:eliminate_main} (\ref{item:eliminate_main-c}). Therefore, we only need to show that statement (\ref{item:proof-main-calls-eliminate-a}) implies statement (\ref{item:proof-main-calls-eliminate-b}) for every $i \in \{1, 2, 3, \dots, C\}$ to complete the proof.

Now suppose $f^*\in \mathcal{G}$ before \Main calls $\Eliminate(f,h,k).$ When \Main calls $\Eliminate(f,h,k)$, the if-condition at Line~\ref{line:main_if} does not hold for function $f$ at iteration $k-1$ of the \textbf{for} loop. Therefore, $$\left|\tilde{\ME}_{k-1}(f,\pi_f,h')\right|\le \epsilon_{k-1},\quad\forall h'\in[H].$$ By the event $\mathcal{Z}$ (specifically, Eq.~\eqref{equ:conc_main}), we have that for any $h'\in [H],$
\begin{equation}\label{verify:cond_2}
\left|\ME(f,\pi_f,h')\right|\le 1.5\epsilon_{k-1}=3\epsilon_k.
\end{equation} 
It follows from Eq.~\eqref{equ:bellman_error} that $V^{\pi_f}\ge f(x_1,\pi_f(x_1))-3H\epsilon_k.$ Since $f$ is chosen by OFU principle and by induction hypothesis $f^*\in \mathcal{G}$, we get $f(x_1,\pi_f(x_1))\ge f^*(x_1,\pi_{f^*}(x_1))=V^*.$ Thus, 
\begin{equation}\label{verify:cond_1}
V^{\pi_f}\ge V^*-3H\epsilon_k.
\end{equation}
When \Main calls $\Eliminate(f,h,k)$, the if-condition at Line~\ref{line:main_if} is true for function $f$ at iteration $k$, which means that $\left|\tilde{\ME}_{k}(f,\pi_f,h)\right|\ge \epsilon_k.$ By the event $\mathcal{Z}$ (specifically, Eq.~\eqref{equ:conc_main}), we get 
\begin{equation}\label{verify:cond_3}
\left|\ME(f,\pi_f,h)\right|\ge \epsilon_k/2.
\end{equation}
Now the condition $\mathcal{Q}_{\textsc{ELIM}}(f, h, k)$ follows from Eq.~\eqref{verify:cond_1}, Eq.~\eqref{verify:cond_2}, and Eq.~\eqref{verify:cond_3}. 
\end{proof}

\paragraph{Bounding the number of invocations.}
The following lemma upper bounds the total number of \textsc{learn} steps executed in \Eliminate. The lemma is adapted from the volumetric argument in \cite{Jiang2017ContextualDP}, and is proved in Appendix~\ref{app:volume}. 
\begin{lemma}\label{lem:iteration_complexity}  For any $j$, if $\tilde{\ME}_j(f,\pi_{g},h)$ defined by Eq. \eqref{equ:bellman_estimate} satisfies 
\begin{equation}\label{equ:iteration_condition_1}
\left|\tilde{\ME}_j(f,\pi_{g},h)-\ME(f,\pi_{g},h)\right|\le \phi_j,\quad \forall f\in \mathcal{G},
\end{equation}
and whenever Line \ref{line:bellman_elimination} of Algorithm \ref{alg:eliminate} is executed,
\begin{equation}\label{equ:iteration_condition_2}
\left|\ME(g,\pi_{g},h)\right|\ge \epsilon_j/2=6\sqrt{M}\phi_j.
\end{equation}
Then for any $j$ and $h$, \textsc{learn} step will be executed at most $M\log(\zeta/(2\phi_j))/\log(5/3)$ times. And the optimal value function $f^*$ will never be eliminated.
\end{lemma}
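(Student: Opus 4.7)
The plan is to prove the two assertions of the lemma separately: the fact that $f^*$ is never eliminated is an easy consequence of realizability, while the iteration bound requires a volumetric argument on the Bellman-rank embedding.

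\textbf{Non-elimination of $f^*$.} By Assumption~\ref{assumption:realizability}, the optimal value function $f^*$ satisfies the Bellman equation, so $\ME(f^*,\pi_g,h)=0$ for every roll-in policy $\pi_g$ and every layer $h$. Applying \eqref{equ:iteration_condition_1} with $f=f^*$ gives $|\tilde{\ME}_j(f^*,\pi_g,h)|\le \phi_j$, and the \textsc{learn} step at Line~\ref{line:bellman_elimination} of Algorithm~\ref{alg:eliminate} retains exactly those $f$ with $|\tilde{\ME}_j(f,\pi_g,h)|\le \phi_j$, so $f^*$ survives.

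\textbf{Bounding the number of \textsc{learn} executions.} Fix a layer $h$ and parameter $j$, and let $g_1,g_2,\ldots$ be the sequence of roll-in functions with which the \textsc{learn} step is invoked (in chronological order across all calls to \Eliminate with these parameters). Using the low Bellman rank decomposition from Assumption~\ref{assumption:bellman_rank}, I would rewrite $\ME(f,\pi_{g_t},h)=\langle \nu_h(g_t),\xi_h(f)\rangle$ and extract two key inequalities from the $t$-th invocation: first, since $g_t$ has survived up to this point and by the hypothesis \eqref{equ:iteration_condition_2}, $|\langle \nu_h(g_t),\xi_h(g_t)\rangle|\ge 6\sqrt{M}\phi_j$; second, every $f$ surviving the $t$-th \textsc{learn} step satisfies $|\langle \nu_h(g_t),\xi_h(f)\rangle|\le 2\phi_j$ by combining the elimination threshold with the concentration bound \eqref{equ:iteration_condition_1}.

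I would then adapt the volumetric argument of \cite{Jiang2017ContextualDP}. Concretely, track the minimum-volume enclosing ellipsoid $E_t\subseteq\R^M$ of $\{\xi_h(f):f\in\MG_t\}$, where $\MG_t$ is the surviving class after $t$ \textsc{learn} steps at layer $h$; by Assumption~\ref{assumption:bellman_rank}, $E_0$ lies inside a ball of radius $\zeta$. Each \textsc{learn} step intersects $E_{t-1}$ with the slab $\{\xi:|\langle \nu_h(g_t),\xi\rangle|\le 2\phi_j\}$, which excludes the point $\xi_h(g_t)\in E_{t-1}$ whose inner product with $\nu_h(g_t)$ is at least $3\sqrt{M}$ times larger than the slab half-width. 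A standard ellipsoid-method calculation then shows $\log\det(E_t)\le \log\det(E_{t-1})-\log(5/3)$. Since $\log\det$ starts at $O(M\log\zeta)$ and cannot drop below $O(M\log\phi_j)$ before the surviving ellipsoid becomes too small to contain any $g$ satisfying \eqref{equ:iteration_condition_2}, the number of \textsc{learn} executions at layer $h$ with parameter $j$ is at most $M\log(\zeta/(2\phi_j))/\log(5/3)$.

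\textbf{Main obstacle.} The delicate point is verifying the per-step $\log(5/3)$ drop in log-determinant. Naively, a slab of relative half-width $1/(3\sqrt{M})$ only shrinks an enclosing ellipsoid's volume by a factor polynomial in $M^{-1/2}$, which is far stronger than needed; however, amortizing this uniformly into a \emph{constant} per-step decrease in log-determinant requires the correct choice of potential function and a careful use of the fact that $\xi_h(g_t)$ lies strictly outside the new slab. I would largely import the potential argument from the OLIVE analysis, with the only modifications being (i) to replace their targeted PAC precision by $2\phi_j$, and (ii) to ensure that the lower bound $6\sqrt{M}\phi_j$ on the signal is enough to trigger the same geometric shrinkage.
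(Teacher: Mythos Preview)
Your overall plan and the non-elimination argument for $f^*$ are correct and match the paper. The volumetric strategy is also the right one, and the per-step $\log(5/3)$ decrease is exactly what the paper invokes (via Corollary~\ref{cor:iteration}, which is Fact~4 of \cite{Jiang2017ContextualDP}); so your ``main obstacle'' is not actually the delicate point here.

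The genuine gap is your claim that ``$g_t$ has survived up to this point,'' i.e.\ that $\xi_h(g_t)\in E_{t-1}$. This fails in general. When \Eliminate is reached via the recursive call at Line~\ref{line:eliminate-4}, the roll-in function is a concatenation $g_t=g'\circ_{h'}f$ with $h'<h$, and such a $g_t$ need not lie in $\mathcal G_{t-1}$ (indeed it may never have been in $\mathcal G$ at all). Consequently $\xi_h(g_t)$ is not guaranteed to sit inside your ellipsoid $E_{t-1}$, and the slab-cut argument as you wrote it does not go through: you need a witness point \emph{inside} $E_{t-1}$ with large inner product against $\nu_h(g_t)$.

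The paper's fix is short but essential. Because $g_t=g'\circ_{h'}f$ with $h'<h$ and $f\in\mathcal G$ at the time of the recursive call (hence $f\in\mathcal G_{t-1}$), the functions $g_t$ and $f$ agree on layers $h$ and $h+1$, so $\ME(f,\pi_{g_t},h)=\ME(g_t,\pi_{g_t},h)$. Assumption~\ref{asmp:closedness} ensures $g_t\in\mathcal F$, so the Bellman factorization applies and gives $\langle\nu_h(g_t),\xi_h(f)\rangle=\ME(f,\pi_{g_t},h)$. Thus $v=\xi_h(f)\in V_{t-1}\subseteq E_{t-1}$ is the correct witness with $|\langle\nu_h(g_t),v\rangle|\ge \epsilon_j/2=6\sqrt{M}\phi_j$, and the ellipsoid shrinkage (ratio $\gamma/\kappa=2\phi_j/(\epsilon_j/2)=1/(3\sqrt M)$) follows. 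Once you make this substitution, the rest of your argument lines up with the paper's.
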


Now we can bound the number of invocations to \Eliminate.

\begin{lemma}\label{lem:number_of_invoke} Under $\mathcal{Z}$, the procedure \Eliminate is called no more than $C$ times by \Main.
\end{lemma}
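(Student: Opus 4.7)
The plan is to count the top-level \Eliminate calls from \Main by the number of \textsc{learn} step executions they generate, then apply the volumetric bound of Lemma~\ref{lem:iteration_complexity}.

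I begin with two structural observations about the recursion of \Eliminate. \emph{(i)} In every invocation of \Eliminate, the parameter $j$ lies in $\{1,\ldots,L\}$: top-level calls from \Main have $j \leq L$ by the \textbf{for} loop at Line~\ref{line:main-3}, and each recursive call at Line~\ref{line:eliminate-4} uses parameter $k_r+1$ where $0 \le k_r \le k-2$ (applying Lemma~\ref{lem:main_check}\,(\ref{item:lem-main-check-b}) to the \Chk invocation at Line~\ref{line:eliminate-3}), so $j$ strictly decreases along any recursion path and the recursion depth is at most $L$. \emph{(ii)} Every invocation either returns after triggering a recursive \Eliminate call (without running \textsc{learn}) or completes the \textbf{for} loop and runs \textsc{learn} once; since the recursion is finite, the chain initiated by each top-level call terminates with exactly one \textsc{learn} execution. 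Combined with Lemma~\ref{lem:eliminate_main}\,(\ref{item:eliminate_main-a}), this means every top-level call contributes exactly one \textsc{learn} execution somewhere in its chain.

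Next, I would verify inductively that the preconditions of Lemma~\ref{lem:iteration_complexity} hold at every \textsc{learn} execution arising from the first $C$ top-level calls. Concentration (Eq.~\eqref{equ:iteration_condition_1}) is part of the event $\mathcal{Z}$. For the Bellman-error lower bound (Eq.~\eqref{equ:iteration_condition_2}), I need $|\ME(g,\pi_g,h)|\ge \epsilon_j/2$ at the \emph{leaf} invocation $(g,h,j)$ where \textsc{learn} actually runs. For a leaf that is itself a top-level call, this is condition~(\ref{item:Q-elim-c}) of $\mathcal{Q}_{\textsc{ELIM}}$, which holds by Lemma~\ref{lem:main_calls_eliminate}. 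For a leaf reached through a recursive chain, the condition propagates along the chain through Lemma~\ref{lem:main_check}\,(\ref{item:lem-main-check-b}): the returned triple $(g_r,h_r,k_r)$ satisfies $|\ME(g_r,\pi_{g_r},h_r)| \ge \epsilon_{k_r+2} = \epsilon_{k_r+1}/2$, which is exactly condition~(\ref{item:Q-elim-c}) of $\mathcal{Q}_{\textsc{ELIM}}(g_r,h_r,k_r+1)$. Since each top-level call spawns at most $L$ invocations by observation~\emph{(i)}, the first $C$ top-level calls produce at most $LC$ total invocations, keeping us within the budgets of Lemmas~\ref{lem:main_check} and \ref{lem:eliminate_main}.

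Finally, Lemma~\ref{lem:iteration_complexity} bounds the number of \textsc{learn} executions at each pair $(h,j) \in [H]\times\{1,\ldots,L\}$ by $M\log(\zeta/(2\phi_j))/\log(5/3)\le M\iota$, so the total number of \textsc{learn} executions across the entire run is at most $HLM\iota = C$. Since each top-level call from \Main contributes exactly one \textsc{learn} execution, the number of top-level calls is at most $C$: a hypothetical $(C+1)$-th top-level call would force a $(C+1)$-th \textsc{learn} execution, contradicting the volumetric bound.

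The main obstacle is the subtle interlocking of the ``first-$C$-top-level-calls'' and ``first-$LC$-invocations'' budgets across Lemmas~\ref{lem:main_calls_eliminate}, \ref{lem:eliminate_main}, and the present lemma; the resolution is the $j$-decrease structural fact \emph{(i)}, which guarantees at most $L$ invocations per top-level call and thus keeps the inductive argument within the stated budgets without circularity.
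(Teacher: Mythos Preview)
Your proposal is correct and follows essentially the same approach as the paper: count top-level \Eliminate calls by the \textsc{learn} executions they produce, use Lemma~\ref{lem:eliminate_main}\,(\ref{item:eliminate_main-a}) for the one-to-one correspondence, and bound total \textsc{learn} executions by $HLM\iota=C$ via Lemma~\ref{lem:iteration_complexity}. The paper's proof is terser because it cites Lemma~\ref{lem:eliminate_main}\,(\ref{item:eliminate_main-a}) directly for both the ``exactly one \textsc{learn}'' claim and the verification of Eqs.~\eqref{equ:iteration_condition_1}--\eqref{equ:iteration_condition_2}, whereas you re-derive the recursion-depth bound and the propagation of $\mathcal{Q}_{\textsc{ELIM}}$ along the chain; your extra care about the interlocking $C$/$LC$ budgets is a welcome clarification that the paper leaves implicit.
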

\begin{proof} 
Note that under the event $\mathcal{Z}$, by Lemma~\ref{lem:main_calls_eliminate}, the condition $\mathcal{Q}_{\textsc{ELIM}}(g, h, j)$ holds before every time of the first $C$ times that $\Eliminate(g, h, j)$ is called by \Main. 
As a result of statement (\ref{item:eliminate_main-a}) of Lemma~\ref{lem:eliminate_main}, conditions in Lemma~\ref{lem:iteration_complexity} holds. Therefore, \textsc{learn} step will be executed by no more than $M\iota$ times for each $h\in [H]$ and $j\in [L]$, which, in total, is at most $HLM\iota = C$ times. Since for every time \Main calls \Eliminate, the \textsc{learn} step is executed exactly once (by Lemma~\ref{lem:eliminate_main} (\ref{item:eliminate_main-a})), we upper bound the number of times that \Main calls \Eliminate by $C$.
\end{proof}

\paragraph{Regret for \Main.}

The following lemma controls the regret for each iteration $k$ of the \textbf{for} loops in \Main.
\begin{lemma}\label{lem:main_iteration}
Under the event $\mathcal{Z}$, at the $k$-th iteration of the \textbf{for} loop, the policy $\pi_f$ run in Line~\ref{line:main_run} of \Main is $3H\epsilon_k$-optimal. That is, 
$$V^{\pi_f}(x_1)\ge V^*(x_1)-3H\epsilon_k.$$
\end{lemma}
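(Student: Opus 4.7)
The plan is to chain together the policy loss decomposition of Lemma~\ref{lem:bellman_error}, the OFU selection rule at Line~\ref{line:OFU}, and the invariant that $f^*\in\mathcal{G}$ holds throughout the algorithm. The argument mirrors the one embedded in the proof of Lemma~\ref{lem:main_calls_eliminate}, so in effect this is a clean restatement/extraction of that bound.

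For the inductive step $k\ge 2$: because the for-loop has reached iteration $k$ without breaking, the if-condition at Line~\ref{line:main_if} must have failed in iteration $k-1$ for the same $f$ selected at Line~\ref{line:OFU}. Hence $|\tilde{\mathcal{E}}_{k-1}(f,\pi_f,h')|\le \epsilon_{k-1}$ for every $h'\in[H]$. Under event $\mathcal{Z}$, the concentration bound governing Eq.~\eqref{equ:main_concentration} (i.e., Eq.~\eqref{equ:conc_main} as invoked in Lemma~\ref{lem:main_calls_eliminate}) upgrades this to $|\mathcal{E}(f,\pi_f,h')|\le 1.5\epsilon_{k-1}=3\epsilon_k$ for every $h'$. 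Applying Lemma~\ref{lem:bellman_error} to $f$ and using the triangle inequality yields
\[
\bigl|f(x_1,\pi_f(x_1))-V^{\pi_f}(x_1)\bigr|\le \sum_{h'=1}^H\bigl|\mathcal{E}(f,\pi_f,h')\bigr|\le 3H\epsilon_k.
\]

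To pass from an inner-estimate bound to a suboptimality bound I would invoke the invariant $f^*\in\mathcal{G}$. Under $\mathcal{Z}$, Lemma~\ref{lem:number_of_invoke} caps the number of \Eliminate invocations by $C$, and for each such invocation Lemma~\ref{lem:eliminate_main}\,(\ref{item:eliminate_main-c}) ensures $f^*$ survives; together with Assumption~\ref{assumption:realizability} (which supplies $f^*\in\mathcal{F}=\mathcal{G}$ initially), this guarantees $f^*\in\mathcal{G}$ at the moment Line~\ref{line:OFU} is executed. Thus by the OFU choice rule, $f(x_1,\pi_f(x_1))\ge f^*(x_1,\pi_{f^*}(x_1))$, and by Assumption~\ref{assumption:realizability} combined with the determinism of $x_1$ (Assumption~\ref{asmp:deterministic-x1}) the right-hand side equals $V^*(x_1)$. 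Combining with the previous display gives $V^{\pi_f}(x_1)\ge V^*(x_1)-3H\epsilon_k$.

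The base case $k=1$ is handled trivially: since values are normalized to $[0,1]$ (as reflected in Lemma~\ref{lem:bellman_error}) we have $V^*(x_1)\le 1\le 3H/2=3H\epsilon_1$, so the nonnegativity $V^{\pi_f}(x_1)\ge 0$ already suffices. No real obstacle arises in the proof; the only delicate point is stitching together the invariant $f^*\in\mathcal{G}$ across the execution by chaining Lemma~\ref{lem:number_of_invoke} with Lemma~\ref{lem:eliminate_main}\,(\ref{item:eliminate_main-c}), which is why this lemma is logically placed after those results.
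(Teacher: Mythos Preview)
Your proof is correct and follows essentially the same approach as the paper's: failure of the if-condition at iteration $k-1$ plus concentration gives the per-layer Bellman error bound, the policy loss decomposition converts this to $V^{\pi_f}(x_1)\ge f(x_1,\pi_f(x_1))-3H\epsilon_k$, and OFU together with $f^*\in\mathcal{G}$ finishes it. You are in fact slightly more careful than the paper in two places: you treat the base case $k=1$ separately (the paper's ``since the algorithm proceeds to the $k$-th iteration'' phrasing glosses over the fact that there is no iteration $k-1=0$), and you explicitly spell out the chain through Lemma~\ref{lem:number_of_invoke} and Lemma~\ref{lem:eliminate_main}\,(\ref{item:eliminate_main-c}) to justify that $f^*\in\mathcal{G}$, whereas the paper simply asserts $f(x_1,\pi_f(x_1))\ge f^*(x_1,\pi_{f^*}(x_1))$ without comment.
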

\begin{proof}
Since the algorithm proceeds to the $k$-th iteration, we have $|\tilde{\ME}_{k-1}(f,\pi_f,h)|\le \epsilon_{k-1}$ for every $h\in [H]$. By the event $\mathcal{Z}$ (specifically, Eq.~\eqref{equ:conc_main}), we get that $|\ME(f,\pi_f,h)|\le 1.5\epsilon_{k-1}=3\epsilon_k$ for every $h\in [H]$. It follows from Lemma~\ref{lem:bellman_error} that $V^{\pi_f}(x_1)\ge f(x_1,\pi_f(x_1))-3H\epsilon_k.$ Since $f$ is chosen according to the OFU principle (Line~\ref{line:OFU}), we have $f(x_1,\pi_f(x_1))\ge f^*(x_1,\pi_{f^*}(x_1))=V^*(x_1)$. Stitching the inequalities together we get, $$V^{\pi_f}(x_1)\ge f(x_1,\pi_f(x_1))-3H\epsilon_k\ge f^*(x_1,\pi_{f^*}(x_1))-3H\epsilon_k=V^*(x_1)-3H\epsilon_k.$$
\end{proof}
Now, we upper bound the regret incurred by running $\pi_f$ at Line~\ref{line:main_run}  during a single iteration of the outer \textbf{while} loop by,
\[
\sum_{k=1}^{L}3H\epsilon_kn_k^{\text{eval}}\lesssim \sum_{k=1}^{L}\frac{H\ln(P/\delta)}{\epsilon_k}\lesssim \frac{H\ln(P/\delta)}{\epsilon_L}\lesssim \frac{H^2\ln(P/\delta)}{\epsilon}.
\]
By Lemma~\ref{lem:number_of_invoke}, the \textbf{while} loop in \Main will be executed at most $C$ times. Therefore the overall regret incurred by running $\pi_f$ at Line~\ref{line:main_run}  is bounded by,
\begin{align}
C \cdot \frac{H^2\ln(P/\delta)}{\epsilon} \lesssim \frac{H^3M\ln^2(P)\ln(P/\delta)}{\epsilon}. \label{eq:final-regret-sample}
\end{align}

We then focus on the regret incurred by the invocations to \Eliminate by \Main. Under the event $\mathcal{Z}$, Lemma~\ref{lem:eliminate_main} shows that the regret incurred by each invocation of \Eliminate is bounded by
\[
O(MAH^2\ln^3(|\MF|)\ln(P/\delta)/\epsilon_k)\lesssim MAH^2\ln^3(|\MF|)\ln(P/\delta)/\epsilon_L \leq MAH^3\ln^3(|\MF|)\ln(P/\delta)/\epsilon.
\]
By Lemma~\ref{lem:number_of_invoke}, the number of invocations is bounded by $C$. Thus, the overall regret incurred by the calls to $\Eliminate$ is bounded by,
\begin{align}
C\cdot\frac{MAH^3\ln^3(|\MF|)\ln (P/\delta)}{\epsilon}=O\left(\frac{M^2AH^4\ln^2(P) \ln^3(|\MF|)\ln(P/\delta)}{\epsilon} \right).  \label{eq:final-regret-elim}
\end{align}

Finally, when the \textbf{while} loop terminates, we have $\left|\ME_{L}(f,\pi_f,h)\right|\le \epsilon_L$ for all $h\in [H]$, and 
\[
f=\mathop{\arg\max}_{f\in \mathcal{G}}f(x_1,\pi_f(x_1)) .
\]
Therefore, under the event $\mathcal{Z}$, we have that $V^{\pi_f}(x_1)\ge V^*(x_1)-1.5H\epsilon_L$, and the regret  incurred by running $\pi_f$ (for at most $n$ times) at Line~\ref{sample:0} is upper bounded by
\begin{align}
1.5H\epsilon_L \cdot n \lesssim n \epsilon .   \label{eq:final-regret-finalrun}
\end{align}

Combining the regret upper bounds in Eq.~\eqref{eq:final-regret-sample}, Eq.~\eqref{eq:final-regret-elim}, and  Eq.~\eqref{eq:final-regret-finalrun}, and the probability upper bound Eq.~\eqref{eq:calM-high-probability}, we have our main theorem.

\begin{theorem}\label{thm:main}
For any $\epsilon>0$ and $\delta>0$, with probability at least $1-\delta$, the overall regret running \Main for $n$ trajectories with parameter $\epsilon$ is bounded by
\[
{O}\left(M^2AH^4 \ln^2(P)\ln^3(|\MF|)\ln(P/\delta)/\epsilon+n\epsilon\right) .
\]
\end{theorem}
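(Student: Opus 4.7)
The plan is to assemble the theorem from the three regret contributions that the preceding lemmas have already bounded separately. Under the event $\MZ$, which Appendix~\ref{app:concentration} shows holds with probability at least $1-\delta$, every call to \Eliminate made by \Main receives a valid $\MQ_{\textsc{ELIM}}$ certificate by Lemma~\ref{lem:main_calls_eliminate}, so Lemma~\ref{lem:eliminate_main} applies to each of the at most $C=LHM\iota$ such calls (Lemma~\ref{lem:number_of_invoke}). The proof is then a bookkeeping exercise.

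First I would partition the $n$ trajectories into three disjoint buckets: (i) the evaluation block at Line~\ref{line:main_run} of \Main, (ii) all trajectories consumed inside an invocation of \Eliminate (including recursive calls into \Chk and \Identify), and (iii) the terminal trajectories at Line~\ref{sample:0}. Since these buckets are disjoint, the total regret is their sum, and I can handle each with the tools already in hand.

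For bucket (i), Lemma~\ref{lem:main_iteration} gives a $3H\epsilon_k$ per-trajectory regret bound at iteration $k$ of the inner \textbf{for} loop; summing $3H\epsilon_k\cdot n_k^{\text{eval}}$ over $k\le L$ telescopes (the $n_k^{\text{eval}}\propto 1/\epsilon_k^2$ schedule makes the last term dominate, yielding $\lesssim H^2\ln(P/\delta)/\epsilon$ per outer iteration), and multiplying by the bound $C$ from Lemma~\ref{lem:number_of_invoke} produces Eq.~\eqref{eq:final-regret-sample}. For bucket (ii), Lemma~\ref{lem:eliminate_main}(b) gives an $O(MAH^2\ln^3(|\MF|)\ln(P/\delta)/\epsilon_j)$ bound per invocation; bounding $1/\epsilon_j\le 1/\epsilon_L\lesssim H/\epsilon$ and multiplying by $C$ invocations gives Eq.~\eqref{eq:final-regret-elim}. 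For bucket (iii), once the \textbf{while} loop exits we have $|\tilde\ME_L(f,\pi_f,h)|\le \epsilon_L$ for every $h$; combining this with the concentration guarantee in $\MZ$ and the policy-loss decomposition Lemma~\ref{lem:bellman_error} yields $V^{\pi_f}(x_1)\ge V^*(x_1)-1.5H\epsilon_L$, so running $\pi_f$ for at most $n$ remaining trajectories contributes $O(n\epsilon)$ regret as in Eq.~\eqref{eq:final-regret-finalrun}.

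Adding the three contributions and noting that Eq.~\eqref{eq:final-regret-elim} dominates Eq.~\eqref{eq:final-regret-sample} (the $MA\ln^2(|\MF|)$ factor is absent from the latter) produces the stated bound $O(M^2AH^4\ln^2(P)\ln^3(|\MF|)\ln(P/\delta)/\epsilon+n\epsilon)$. Nothing in this stitching is deep — the substantive work lives in the per-procedure lemmas. The only place where I expect to have to be careful is making sure the definitions $P=MH\zeta/\epsilon$, $L=\lceil\log_2(H/\epsilon)\rceil$, $\iota=\log(\zeta/(2\phi_L))/\log(5/3)$, and $C=LHM\iota$ line up so that $C\cdot\epsilon_L^{-1}$ cleanly absorbs into the logarithmic factors $\ln^2(P)$ and $\ln^3(|\MF|)$ that appear in the final expression; this is essentially just checking that each $\log$ introduced by the doubling trick or the volumetric argument matches one of these logarithmic factors. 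Finally, the overall probability $1-\delta$ is inherited directly from $\Pr[\MZ]\ge 1-\delta$.
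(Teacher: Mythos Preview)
Your proposal is correct and follows essentially the same approach as the paper: the paper likewise conditions on $\MZ$, splits the regret into exactly the three contributions you identify (evaluation trajectories at Line~\ref{line:main_run}, trajectories consumed by \Eliminate, and the terminal run at Line~\ref{sample:0}), and bounds them via Lemma~\ref{lem:main_iteration}, Lemma~\ref{lem:eliminate_main}(b), and the policy-loss decomposition respectively, then sums to obtain Eqs.~\eqref{eq:final-regret-sample}--\eqref{eq:final-regret-finalrun}. The only cosmetic difference is that the paper presents the three pieces inline rather than framing them as an explicit ``bucket'' partition.
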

If we choose $\delta = 1/(nH)$ and $\epsilon=\sqrt{\frac{M^2AH^4\ln^3 (P) \ln^3(|\MF|)}{n}}$ for any given $n$, we have the following corollary.
\begin{corollary}
The expected regret of our algorithm for $n$ trajectories is upper bounded by
\[\tilde{O}\left(\sqrt{M^2AH^4n\ln^3 |\MF|}\right), \]
where the $\tilde{O}(\cdot)$ hides poly-logarithmic factors in $M, A, H, \zeta$, and $n$. 
\end{corollary}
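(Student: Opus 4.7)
The plan is to derive the expected regret bound directly from Theorem~\ref{thm:main} by (i) optimizing the choice of $\epsilon$ to balance the two terms of the high-probability regret bound, and (ii) converting that high-probability bound into an expectation bound by handling the low-probability failure event trivially.

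First, I would inspect the regret bound in Theorem~\ref{thm:main}, namely $O(M^2AH^4\ln^2(P)\ln^3(|\MF|)\ln(P/\delta)/\epsilon + n\epsilon)$, and note that it decomposes into an ``exploration'' term that scales like $1/\epsilon$ and an ``exploitation'' term that scales like $n\epsilon$. Setting these two terms equal (up to polylogarithmic factors) yields $\epsilon = \Theta(\sqrt{M^2AH^4\ln^3(P)\ln^3(|\MF|)/n})$, which is precisely the choice in the statement. Substituting this $\epsilon$ into either term gives a value of $\tilde{O}(\sqrt{M^2AH^4 n \ln^3|\MF|})$, where the factors $\ln(P)$ and $\ln(P/\delta)$ with $\delta = 1/(nH)$ are polylogarithmic in $M, A, H, \zeta, n$ (recalling $P = MH\zeta/\epsilon$ so that $\ln(P)$ is polylogarithmic in the relevant parameters) and are thus absorbed by $\tilde{O}(\cdot)$.

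Second, I would convert the high-probability bound into an expected regret bound. Let $\mathcal{E}$ denote the event under which Theorem~\ref{thm:main} certifies the stated regret; by construction $\Pr[\mathcal{E}] \ge 1 - \delta = 1 - 1/(nH)$. On $\mathcal{E}$, the total regret is at most $\tilde{O}(\sqrt{M^2AH^4 n \ln^3|\MF|})$. On the complement $\bar{\mathcal{E}}$, since the per-trajectory reward lies in $[0,H]$, the total regret over $n$ trajectories is at most $nH$ deterministically. Therefore
\[
\mathbb{E}[\mathrm{regret}] \le \Pr[\mathcal{E}]\cdot \tilde{O}\bigl(\sqrt{M^2AH^4 n \ln^3|\MF|}\bigr) + \Pr[\bar{\mathcal{E}}]\cdot nH \le \tilde{O}\bigl(\sqrt{M^2AH^4 n \ln^3|\MF|}\bigr) + 1,
\]
which establishes the corollary.

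There is no real obstacle here: everything reduces to a routine optimization of $\epsilon$ and an elementary expectation decomposition. The only bookkeeping to be careful about is checking that all the accumulated logarithmic factors, in particular $\ln(P/\delta)$ with $\delta = 1/(nH)$ and $\ln(P)$ with the chosen $\epsilon$, remain polylogarithmic in $M, A, H, \zeta, n$ so that they may legitimately be absorbed into the $\tilde{O}(\cdot)$ notation; this is immediate from the definition $P = MH\zeta/\epsilon$ once the value of $\epsilon$ is plugged in.
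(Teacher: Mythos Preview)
Your proposal is correct and mirrors the paper's own derivation: the paper simply sets $\delta=1/(nH)$ and $\epsilon=\sqrt{M^2AH^4\ln^3(P)\ln^3(|\MF|)/n}$ in Theorem~\ref{thm:main}, which balances the two terms and makes the failure-event contribution $\Pr[\bar{\mathcal E}]\cdot nH\le 1$ negligible, exactly as you argue. The paper does not spell out the expectation decomposition explicitly, so your write-up is in fact slightly more detailed than the original.
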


\subsection{Proof of Lemma~\ref{lem:eliminate_main}}\label{sec:pf_eliminate_main}

Note that since $k' \leq k-2 \leq j - 2$ after Line~\ref{line:eliminate-3} (by Lemma~\ref{lem:main_check} (\ref{item:lem-main-check-b})), we have $k' + 1 \leq j -1$ at Line~\ref{line:eliminate-4}. Therefore, the $j$ parameter monotonically decreases as the \Eliminate recursively calls itself. Therefore, we have the following simple lemma.
\begin{lemma}
For any $h\in [H]$ and $j$, $\Eliminate(g, h, j)$ recursively calls itself by at most $j$ times, and calls \Chk by at most $j^2$ times during the whole recursion.
\end{lemma}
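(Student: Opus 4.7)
The plan is to establish both claims by straightforward induction on $j$, leaning entirely on the monotonicity observation that the paragraph immediately preceding the lemma already isolates. No volumetric or concentration machinery is needed here; this is a pure bookkeeping statement about the control flow of Algorithm~\ref{alg:eliminate}.

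For the first claim, I would induct on $j$. The base case $j=0$ is trivial since the \textbf{for} loop over $k=1,\ldots,j$ is empty and no recursive call is possible. For the inductive step, the key invariant to verify is that every self-recursive invocation made inside $\Eliminate(g,h,j)$ has third argument strictly smaller than $j$. Inspecting Algorithm~\ref{alg:eliminate}, a recursive call occurs only on Line~\ref{line:eliminate-4}, with parameter $k'+1$. The paragraph before the lemma already gives $k' \le k-2$ via Lemma~\ref{lem:main_check}(\ref{item:lem-main-check-b}) applied to the \Chk call on Line~\ref{line:eliminate-3}; combined with $k \le j$ from the \textbf{for} loop bound this yields $k'+1 \le j-1$. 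Applying the inductive hypothesis to that recursive call bounds its recursion depth by $j-1$, so the total depth is at most $j$.

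For the second claim, I would first note that each individual (non-recursive) invocation $\Eliminate(\cdot,\cdot,j')$ issues at most $j'$ direct calls to \Chk, because \Chk is called only on Line~\ref{line:eliminate-3}, once per iteration of the \textbf{for} loop over $k=1,\ldots,j'$, and whenever the returned flag is $\textsc{false}$ the procedure recurses and immediately returns (Line~\ref{line:eliminate-4}), terminating the loop. Since the parameter $j'$ at every level of the recursion tree is bounded by the original $j$, and by the first part the recursion tree is a chain of length at most $j$, the total number of \Chk invocations throughout the recursion is at most $j \cdot j = j^2$. (A slightly tighter bound of $j(j+1)/2$ is available from $j' \le j-i+1$ at level $i$, but the stated $j^2$ is all that is needed.)

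The statement is nearly immediate given the setup the authors spell out just above it, so there is no real obstacle; the only care required is to confirm that recursion happens exclusively on Line~\ref{line:eliminate-4} and that the \textbf{return} there prevents any further \Chk calls from the same invocation after the recursion fires. Both are direct reads of the pseudocode.
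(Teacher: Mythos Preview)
Your proposal is correct and follows exactly the approach the paper uses: the paper's entire proof is the one-sentence monotonicity observation in the paragraph preceding the lemma (that $k'+1 \le j-1$ via Lemma~\ref{lem:main_check}(\ref{item:lem-main-check-b})), and you have simply written out the induction that this observation supports. Your parenthetical tighter bound $j(j+1)/2$ is the cleanest way to certify the $j^2$ claim, so the argument is complete.
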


Also due to the monotonicity, we prove Lemma~\ref{lem:eliminate_main} by applying induction on the parameter $j$. The base case is that $j \leq 0$, where one can easily verify the correctness of the lemma.

Now suppose that  Lemma~\ref{lem:eliminate_main} is true for all $j'< j$, and consider an invocation $\Eliminate(g,h,j)$. We prove the three statements in the lemma as follows.

\paragraph{Proof of statement (\ref{item:eliminate_main-b}).} First we bound the suboptimality gap of policy $\pi$ run at Line~\ref{sample:3} of \Eliminate (where the formal statement to establish is Eq.~\eqref{eq:proof-eliminate-main-good-pi}), so that we can upper bound the regret incurred at Line~\ref{sample:3} and Line~\ref{sample:4}. Then we upper bound the regret incurred by the call to \Chk at Line~\ref{line:eliminate-3}. We also upper bound the regret incurred by the recursive call to \Eliminate itself at  Line~\ref{line:eliminate-4} via induction, to complete the proof.

Informally, the suboptimality gap of policy $\pi$ comes from the combination the following properties,
\begin{itemize}
    \item the roll-in policy $g$ is $\Omega(\epsilon_j)$-optimal;
    \item the action given by $\pi$ at the $h$-th layer has good predicted value (i.e., $\E_{f\sim P_k}\left[\eta(f,g,h)\right]\ge \eta(g,g,h)-O(\epsilon_k)$ for the $k$-th iteration); 
    \item the predicted value $\E_{f\sim P_k}\left[\eta(f,g,h)\right]$ at level $h$ is close to the true value $\E_{f\sim P_k}\left[V^{\pi_f}_{h}\right]$.
\end{itemize}

The following lemma is a generalization of Lemma~\ref{lem:bellman_error}.
\begin{lemma} For any distribution $G\in \Delta(\MF)$ and any layer $h'\in [H],$\label{lem:expected_bellman_error}
\begin{equation}\label{equ:expected_bellman_error}
\E_{f\sim G}\E_{x\sim \MD_{f,h+1}}\left[f(x,\pi_f(x))\right]-\E_{f\sim G}\E_{x\sim \MD_{f,h+1}}\left[V^{\pi_f}_{h+1}(x)\right]=\sum_{h'=h+1}^{H}\ME(G,h).
\end{equation}
\end{lemma}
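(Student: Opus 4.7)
The plan is to reduce the claim to a single-function statement by linearity of expectation, then establish a generalized policy loss decomposition that starts at an arbitrary layer $h+1$ rather than at layer $1$. Concretely, I want to prove the pointwise identity
\[
\E_{x\sim \MD_{f,h+1}}\!\left[f(x,\pi_f(x))-V^{\pi_f}_{h+1}(x)\right]=\sum_{h'=h+1}^{H}\ME(f,\pi_f,h')
\]
for each fixed $f\in \supp(G)$, and then take the outer expectation over $f\sim G$, which turns the right-hand side into $\sum_{h'=h+1}^{H}\ME(G,h')$ by the definition in Eq.~\eqref{equ:average_bellman_error}. Lemma~\ref{lem:bellman_error} is the special case $h=0$ with $\MD_{f,1}$ being a point mass on $x_1$ (using Assumption~\ref{asmp:deterministic-x1}), so what I need is really the ``shifted'' analogue.

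To prove the shifted identity for a fixed $f$, I would run the same telescoping argument that underlies Lemma~\ref{lem:bellman_error}. Writing out the definition of $\ME(f,\pi_f,h')$ and using the tower property of conditional expectation together with the fact that the state $x'$ sampled from $p(\cdot\mid x,\pi_f(x))$ for $x\sim \MD_{f,h'}$ is distributed exactly as $\MD_{f,h'+1}$, I get
\[
\ME(f,\pi_f,h')=\E_{x\sim \MD_{f,h'}}\!\left[f(x,\pi_f(x))-r(x,\pi_f(x))\right]-\E_{x\sim \MD_{f,h'+1}}\!\left[f(x,\pi_f(x))\right].
\]
Summing over $h'=h+1,\dots,H$, the $f$-terms telescope; the surviving contributions are $\E_{x\sim \MD_{f,h+1}}[f(x,\pi_f(x))]$ at the top end and a boundary term $\E_{x\sim \MD_{f,H+1}}[f(x,\pi_f(x))]$ at the bottom end, which vanishes under the standard convention that $f$ is zero on the terminal layer (equivalently, using the boundary condition $Q^{\pi^*}_{H+1}\equiv 0$ and that no action is taken after layer $H$). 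The telescoped reward sum $\sum_{h'=h+1}^{H}\E_{x\sim \MD_{f,h'}}[r(x,\pi_f(x))]$ is exactly $\E_{x\sim \MD_{f,h+1}}[V^{\pi_f}_{h+1}(x)]$ by definition of the value function in Eq.~\eqref{eq:value-function}, which gives the desired pointwise identity.

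Finally, applying $\E_{f\sim G}[\cdot]$ to both sides and pulling the expectation through the finite sum on the right produces the statement of Lemma~\ref{lem:expected_bellman_error}. I do not anticipate a genuine obstacle here: the only thing that requires care is index bookkeeping in the telescoping sum (matching $\MD_{f,h'}$ with $\MD_{f,h'+1}$ correctly) and the harmless boundary convention at layer $H+1$; beyond that the proof is a direct lift of the proof of Lemma~\ref{lem:bellman_error} with the starting layer changed from $1$ to $h+1$.
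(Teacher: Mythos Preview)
Your proposal is correct and essentially matches the paper's proof: both fix $f$, establish the shifted policy-loss decomposition $\E_{x\sim \MD_{f,h+1}}[f(x,\pi_f(x))-V^{\pi_f}_{h+1}(x)]=\sum_{h'=h+1}^{H}\ME(f,\pi_f,h')$ via a telescoping argument, and then average over $f\sim G$. The only cosmetic difference is that the paper unrolls the difference $f-V^{\pi_f}$ one layer at a time using the Bellman recursion for $V^{\pi_f}$, whereas you expand each $\ME(f,\pi_f,h')$ and sum; these are the same telescoping sum read in opposite directions.
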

\begin{proof}
For any $f\in \MF$, we have
\begin{align*}
&\E_{x'\sim \MD_{f,h+1}}\left[f(x',\pi_f(x'))-V^{\pi_f}_{h+1}(x')\right]\\
=&\E_{x'\sim \MD_{f,h+1}}\E_{x''\sim p(\cdot\mid x',\pi(x'))}\left[f(x',\pi_f(x'))-r(x,\pi_f(x))-f(x'',\pi_f(x''))+\left(f(x'',\pi_f(x''))-V^{\pi_f}_{h+2}(x'')\right)\right]\\
=&\ME(f,\pi_f,h+1)+\E_{x''\sim \MD_{f,h+2}}\left[f(x'',\pi_f(x''))-V^{\pi_f}_{h+2}(x'')\right].
\end{align*}
Keep unrolling the last term for $h+2,h+3, \dots,H$, and we have 
$$\E_{x'\sim \MD_{f,h+1}}\left[f(x',\pi_f(x'))\right]-\E_{x'\sim \MD_{f,h+1}}\left[V^{\pi_f}_{h+1}(x')\right]=\sum_{h'=h+1}^{H}\ME(f,\pi_f,h').$$
Take expectation for $f\sim G$, and we prove Eq.~\eqref{equ:expected_bellman_error}.
\end{proof}

The following lemma shows that policy $\pi$ is not much worse than policy $\pi_g$.
\begin{lemma}\label{lem:eliminate_pi_fi}
For any value that variable $k$ takes in the algorithm, let $\pi$ be the stochastic policy run at Line~\ref{sample:3} of \Eliminate at the $k$-th iteration. Under the event $\mathcal{Z}$, for the first $LC$ times that \Eliminate is called, we have 
\[
\E [V^{\pi}] \ge V^{\pi_g}-(24H+4)\epsilon_k-A\mu_k.
\]
\end{lemma}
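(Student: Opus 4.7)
The plan is to decompose the stochastic policy $\pi$ according to its two branches at layer $h$: with probability $A\mu_k$ it executes $\pi_g \circ_h \pi_U$, and with probability $1-A\mu_k$ it samples $f\sim P_k$ and executes $\pi_g \circ_h \pi_f$. Since all values lie in $[0,1]$, the uniform branch contributes at most $A\mu_k$ to $V^{\pi_g}-\E[V^\pi]$ regardless of $V^{\pi_g \circ_h \pi_U}$, so the remaining task is to show $V^{\pi_g}-\E_{f\sim P_k}V^{\pi_{g\circ_h f}} \le (24H+4)\epsilon_k$.

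To handle the $P_k$ branch, I would expand both $V^{\pi_g}$ and $V^{\pi_{g\circ_h f}}$ by conditioning on $x \sim \MD_{g,h}$ and then insert $\pm\eta(g,g,h)$ and $\pm\eta(f,g,h)$ to obtain
\[
V^{\pi_g} - \E_{f\sim P_k} V^{\pi_{g\circ_h f}} = \bigl[\eta(g,g,h) - \E_{f\sim P_k}\eta(f,g,h)\bigr] - \sum_{h'=h+1}^{H}\ME(g,\pi_g,h') + \sum_{h'=h+1}^{H}\ME(G,h'),
\]
where $G=\{(g\circ_h f, P_k(f))\}_{f\in \MG}$. The two Bellman-error sums appear by invoking Lemma~\ref{lem:expected_bellman_error} with the point mass on $g$ and on $g\circ_h f$ respectively; the key observation for the latter is that for $h'>h$ the composite $g\circ_h f$ agrees with $f$ and $\pi_{g\circ_h f}$ agrees with $\pi_f$, so $\MD_{g\circ_h f, h+1}$ is exactly the distribution of the next state after rolling in $\pi_g$ through layer $h-1$ and applying $\pi_f$ at layer $h$.

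I would then bound each residual on the order of $H\epsilon_k$. The second term is controlled by condition (\ref{item:Q-elim-b}) of $\mathcal{Q}_{\textsc{ELIM}}(g,h,j)$, yielding $\bigl|\sum_{h'=h+1}^{H}\ME(g,\pi_g,h')\bigr|\le 6(H-h)\epsilon_j\le 6H\epsilon_k$ since $\epsilon_j\le\epsilon_k$ inside the for loop. The third term uses Lemma~\ref{lem:main_check}(\ref{item:lem-main-check-a}): reaching line~\ref{sample:3} requires the \Chk call at line~\ref{line:eliminate-3} to have returned \textsc{true}, hence $\bigl|\sum_{h'=h+1}^{H}\ME(G,h')\bigr|\le 1.5(H-h)\epsilon_{k-2}=6(H-h)\epsilon_k$. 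For the first term, every $f\in\supp(P_k)$ has survived the \textsc{pseudo-learn} step at iteration $k-1$, so $\tilde\eta_{k-1}(f,g,h)\ge \tilde\eta_{k-1}(g,g,h)-(6H+1)\epsilon_{k-1}$; under $\mathcal{Z}$ the estimator $\tilde\eta_{k-1}$ concentrates within roughly $\tfrac12\epsilon_{k-1}$ of $\eta$, yielding $\eta(f,g,h)\ge\eta(g,g,h)-(6H+2)\epsilon_{k-1}=(12H+4)\epsilon_k$. For the degenerate small-$k$ cases where this chain breaks (no prior \textsc{pseudo-learn} when $k=1$, or \Chk invoked with a non-positive parameter when $k\le 2$), the inequality $(24H+4)\epsilon_k+A\mu_k\ge 1$ forces the lemma to hold trivially from $V^{\pi_g}\le 1$ and $\E[V^\pi]\ge 0$.

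Summing the three residuals gives $(12H+4+6H+6H)\epsilon_k=(24H+4)\epsilon_k$, which together with the uniform-branch loss $A\mu_k$ yields the stated bound. The main obstacle is carrying out the $\eta$-telescoping cleanly so that each residual Bellman-error term arises on precisely the rollin distribution for which one of the three tools (\textsc{pseudo-learn}, $\mathcal{Q}_{\textsc{ELIM}}$, or \Chk) provides a matching guarantee; in particular the third term must be evaluated along the composite policy $\pi_{g\circ_h f}$ rather than $\pi_f$ alone, which is exactly the input distribution $G$ passed to \Chk at line~\ref{line:eliminate-3}.
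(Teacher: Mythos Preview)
Your proposal is correct and follows essentially the same route as the paper's proof: both arguments isolate the uniform branch as an $A\mu_k$ loss and then control the $P_k$ branch via the same three ingredients---\textsc{Check} returning \textsc{true} (Lemma~\ref{lem:main_check}(\ref{item:lem-main-check-a})) to bound $\sum_{h'>h}\ME(G,h')$, survivorship of the $(k{-}1)$-th \textsc{pseudo-learn} step together with the concentration event to bound $\eta(g,g,h)-\eta(f,g,h)$, and condition~(\ref{item:Q-elim-b}) of $\mathcal{Q}_{\textsc{ELIM}}(g,h,j)$ to bound $\sum_{h'>h}\ME(g,\pi_g,h')$. The only presentational difference is that you first write down the exact telescoping identity and then bound each term, whereas the paper chains the same inequalities step by step; you are also slightly more explicit than the paper about the degenerate $k\le 2$ cases.
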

\begin{proof} 
At Line~\ref{line:eliminate-3} of \Eliminate at the $k$-th iteration, we have that $\Chk(\{(g\circ_h f,P_k(f))\}_{f\in \mathcal{G}},h,k-2)$ returns \textsc{true} (otherwise the procedure would return at Line~\ref{line:eliminate-4}). By Lemma~\ref{lem:main_check} (\ref{item:lem-main-check-a}) and the event $\mathcal{Z}$ (more specifically, Eq.~\eqref{equ:conc1}), we have 
\[
\left|\sum_{h'=h+1}^{H}\ME(G, h')\right|\le 1.5 H \epsilon_{k-2}=6 H \epsilon_k.
\]
Applying Eq.~\eqref{equ:expected_bellman_error} for $G=\{(g\circ_h f,P_k(f))\}_{f\in \mathcal{G}}$, we get 
\begin{align}
\E_{f\sim P_k}&\E_{x\sim \MD_{\pi_g,h}}\E_{x'\sim p(\cdot\mid x, \pi_f(x))}\left[V^{\pi_f}_{h+1}(x')\right]=\E_{f\sim G}\E_{x'\sim \MD_{\pi_f,h+1}}\left[V^{\pi_f}_{h+1}(x')\right] \nonumber \\ 
&\ge \E_{f\sim G}\E_{x'\sim \MD_{\pi_f,h+1}}\left[f(x',\pi_f(x'))\right]-6H\epsilon_k=\E_{f\sim P_k}\E_{x\sim \MD_{\pi_g,h}}\E_{x'\sim p(\cdot\mid x, \pi_f(x))}\left[f(x',\pi_f(x'))\right]-6H\epsilon_k.\label{equ:cond_1}
\end{align}
Because of the \textsc{pseudo-learn} step at the $(k-1)$-th iteration, and the event $\mathcal{Z}$ (more specifically, Eq.~\eqref{eq:lem-eliminate_eta}), for every $f \in \mathcal{G}$ at the $k$-th iteration, we have 
\begin{equation}\label{equ:cond_2}
\eta(f,g,h)\ge \eta(g,g,h)-(12H+4)\epsilon_k .
\end{equation}
By statement (\ref{item:Q-elim-b}) in the condition $\mathcal{Q}_{\textsc{ELIM}}(g, h, j)$, we have 
\begin{equation}
\left|\E_{x\sim \MD_{\pi_g,h}}\E_{x'\sim p(\cdot\mid x, \pi_{g}(x))}\left[V^{\pi_{g}}_{h+1}(x')\right]-\E_{x\sim \MD_{\pi_g,h}}\E_{x'\sim p(\cdot\mid x, \pi_{g}(x))}\left[g(x',\pi_g(x'))\right]\right|\le 6H\epsilon_j\le 6H\epsilon_k.\label{equ:cond_3}
\end{equation}
Define $\underline{V}^{\pi_{g}}_{h-1}=\E\left[\sum_{h'=1}^{h-1}r(x_{h'},a_{h'})\mid a_{h'}=\pi_g(x_{h'}),x_{h'+1}\sim p(\cdot\mid x_{h'},a_{h'})\right]$, we have
\begin{align*}
\E[V^{\pi}]\ge~ &\underline{V}^{\pi_{g}}_{h-1}+\E_{f\sim P_k}\E_{x\sim \MD_{\pi_g,h}}\E_{x'\sim p(\cdot\mid x, \pi_f(x))}\left[r(x,\pi_f(x))+V^{\pi_f}_{h+1}(x')\right]-A\mu_k\\
\ge~ &\underline{V}^{\pi_{g}}_{h-1}+\E_{f\sim P_k}\E_{x\sim \MD_{\pi_g,h}}\E_{x'\sim p(\cdot\mid x, \pi_f(x))}\left[r(x,\pi_f(x))+f(x',\pi_f(x'))\right]-6H\epsilon_k-A\mu_k\tag{by Eq.~\eqref{equ:cond_1}}\\
= ~& \underline{V}^{\pi_{g}}_{h-1}+\E_{f\sim P_k}\left[\eta(f,g,h)\right]-6H\epsilon_k-A\mu_k\tag{by the definition of $\eta$}\\
\ge~ & \underline{V}^{\pi_{g}}_{h-1}+\eta(g,g,h)-(18H+4)\epsilon_k-A\mu_k\tag{by Eq.~\eqref{equ:cond_2}}\\
\ge~ &\underline{V}^{\pi_{g}}_{h-1}+\E_{x\sim \MD_{\pi_g,h}}\E_{x'\sim p(\cdot\mid x, \pi_{g}(x))}\left[r(x,\pi_{g}(x))+g(x',\pi_{g}(x'))\right]-(18H+4)\epsilon_k-A\mu_k\\
\ge~ &\underline{V}^{\pi_{g}}_{h-1}+\E_{x\sim \MD_{\pi_g,h}}\E_{x'\sim p(\cdot\mid x, \pi_{g}(x))}\left[r(x,\pi_{g}(x))+V^{\pi_{g}}_{h+1}(x')\right]-(24H+4)\epsilon_k-A\mu_k\tag{by Eq.~\eqref{equ:cond_3}}\\
=~ &V^{\pi_{g}}-(24H+4)\epsilon_k-A\mu_k.
\end{align*}
\end{proof}

Combining with statement (\ref{item:Q-elim-a}) in the condition $\mathcal{Q}_{\textsc{ELIM}}(g, h, j)$, we have 
\begin{align} \label{eq:proof-eliminate-main-good-pi}
\E[V^{\pi}] \ge V^*-(24H+4)(H+1)\epsilon_k-A\mu_k.
\end{align}

Therefore, the expected regret incurred at Line~\ref{sample:3} and Line~\ref{sample:4} is upper bounded by,
\begin{align}
\left(\sum_{k=1}^{j}\left((24H+4)(H+1)\epsilon_k+A\mu_k\right)n_{k}^{\text{cb}} \right)  + \left((24H+4)(H+1)\epsilon_j+A\mu_j\right)n_j  \lesssim \frac{MAH^2\ln(P|\MF|/\delta)}{\epsilon_j} . \label{equ:eliminate_regret_part1}
\end{align}

Next, we consider the regret incurred by the call to \Chk at Line~\ref{line:eliminate-3}.  We first verify the condition $\mathcal{P}(G,h,\epsilon_{k-2})$ for every $\Chk(G,h,k-2)$.

Statement (\ref{item:calP-a}) of is a result of Line~\ref{line:eliminate-3} of \Eliminate. Statement (\ref{item:calP-b}) and statement (\ref{item:calP-c}) follow directly from the condition $\mathcal{Q}_{\textsc{ELIM}}(g, h, j)$. By the \textsc{pseudo-learn} step during the $(k-1)$-th iteration, we have 
\[
\tilde{\eta}_{k-1}(f,g,h)\ge \tilde{\eta}_{k-1}(g,g,h)-(6H+1)\epsilon_{k-1}.
\]
By the event $\mathcal{Z}$ (more specifically, Eq.~\eqref{eq:lem-eliminate_eta}), we get 
\[
\eta(f,g,h)\ge \eta(g,g,h)-(6H+2)\epsilon_{k-1}=\eta(g,g,h)-(12+4)\epsilon_k\ge \eta(g,g,h)-(12+4)\epsilon_{k-2},
\]
which establishes statement (\ref{item:calP-d}). Therefore, by Lemma~\ref{lem:main_check} (\ref{item:lem-main-check-c}), the regret incurred by $\Chk(G,h,k-2)$ is upper bounded by
\[
O(H^2\ln^3(|\MF|)\ln(P/\delta)/\epsilon_j).
\]
The overall regret incurred by calling $\Chk$ during $\Eliminate(g,h,j)$ is upper bounded by,
\begin{equation}\label{equ:eliminate_regret_part2}
\sum_{k=1}^{j-2}H^2\ln^3(|\MF|)\ln(P/\delta)/\epsilon_k \lesssim H^2\ln^3(|\MF|)\ln(P/\delta)/\epsilon_j .
\end{equation}

Finally, we analyze the regret incurred by the recursive call to $\Eliminate$ itself at Line~\ref{line:eliminate-4} by establishing condition $\mathcal{Q}_{\textsc{ELIM}}(g', h', k'+1)$ and applying the inductive hypothesis. The following lemma upper bounds the suboptimality of the policy induced by concatenation of value functions. 
\begin{lemma}\label{lem:sub_key}
Let $G=\{(g\circ_h f, P_k(f))\}_{f\in \MF}$ be a distribution of value functions where there exist values $C_1$, $C_2$, and $C_3$, and a layer $h \in [H]$, so that the following conditions are met,
\begin{enumerate}[(a)]
    \item for any $f\in \supp(G)$, $\eta(f,g,h)>\eta(g,g,h)-C_1$,\label{item:key-a}
    \item $\left|\E_{f\sim G}\E_{x'\sim \MD_{f,h+1}}\left[V^{\pi_f}_{h+1}(x')\right]-\E_{f\sim G}\E_{x'\sim \MD_{f,h+1}}\left[f(x',\pi_f(x'))\right]\right|\le C_2,$\label{item:key-b}
    \item $\left|\E_{x'\sim \MD_{g,h+1}}\left[V^{\pi_g}_{h+1}(x')\right]-\E_{x'\sim \MD_{g,h+1}}\left[g(x',\pi_f(x'))\right]\right|\le C_3,$\label{item:key-c}
\end{enumerate}
Then, 
\[
\E_{f\sim G}\left[V^{\pi_f}\right]\ge V^{\pi_g}-(C_1+C_2+C_3).
\]
\end{lemma}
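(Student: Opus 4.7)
The plan is to decompose $\E_{f \sim G}[V^{\pi_f}(x_1)]$ layer by layer and compare it to an analogous decomposition of $V^{\pi_g}(x_1)$, then use the three hypotheses in sequence to bridge the gap. The crucial structural observation is that every $f \in \supp(G)$ has the form $g \circ_h f'$, so $\pi_f$ agrees with $\pi_g$ on all states in layers $1,2,\ldots,h-1$. Consequently, the distributions of trajectories under $\pi_f$ and $\pi_g$ coincide up to layer $h-1$, i.e., $\MD_{f,h} = \MD_{g,h}$, and we may write
\[
\E_{f\sim G}[V^{\pi_f}(x_1)] = \underline{V}^{\pi_g}_{h-1} + \E_{f\sim G}\E_{x\sim \MD_{g,h}}\E_{x'\sim p(\cdot\mid x,\pi_f(x))}\bigl[r(x,\pi_f(x)) + V^{\pi_f}_{h+1}(x')\bigr],
\]
where $\underline{V}^{\pi_g}_{h-1}$ denotes the expected total reward collected in layers $1,\ldots,h-1$ under $\pi_g$. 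The analogous identity holds for $V^{\pi_g}(x_1)$ with $\pi_f$ replaced by $\pi_g$ and $V^{\pi_f}_{h+1}$ by $V^{\pi_g}_{h+1}$.

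Next I would chain the three inequalities. First, apply hypothesis (\ref{item:key-b}) to replace the true onward value $V^{\pi_f}_{h+1}(x')$ with the hypothesis estimate $f(x',\pi_f(x'))$ at cost $C_2$; this turns the inner expectation into exactly $\E_{f\sim G}[\eta(f,g,h)]$ by the definition of $\eta$ in Eq.~\eqref{eq:def-eta}. Second, apply hypothesis (\ref{item:key-a}) pointwise in $\supp(G)$ to obtain $\E_{f\sim G}[\eta(f,g,h)] \ge \eta(g,g,h) - C_1$. Third, unfold $\eta(g,g,h)$ back into $\E_{x\sim \MD_{g,h}}\E_{x'\sim p(\cdot\mid x,\pi_g(x))}[r(x,\pi_g(x)) + g(x',\pi_g(x'))]$, and use hypothesis (\ref{item:key-c}) (interpreting the $\pi_f$ inside $g(\cdot,\pi_f(\cdot))$ as $\pi_g$, which is consistent since $f\in\supp(G)$ means $f$ agrees with $g$ on the relevant layer here) to replace the predicted $g(x',\pi_g(x'))$ by the true $V^{\pi_g}_{h+1}(x')$ at cost $C_3$. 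Adding back $\underline{V}^{\pi_g}_{h-1}$ reassembles $V^{\pi_g}(x_1)$ on the right-hand side, giving the desired bound $V^{\pi_g} - (C_1+C_2+C_3)$.

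There is essentially no difficult step here; the proof is a bookkeeping exercise that mirrors the chain of inequalities already executed inside the proof of Lemma~\ref{lem:eliminate_pi_fi}. The only minor subtlety is verifying that the roll-in distributions line up correctly: since $\pi_f$ and $\pi_g$ coincide on layers before $h$, both $\MD_{f,h}$ and $\MD_{g,h}$ equal the same distribution, so the expectations in (\ref{item:key-a}) (via $\eta$), (\ref{item:key-b}) (via $\MD_{f,h+1}$, which after one further transition step from $\MD_{g,h}$ under $\pi_f$ is exactly what appears in our decomposition), and (\ref{item:key-c}) (via $\MD_{g,h+1}$) all compose into a single telescoping chain. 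Once this alignment is checked, the three hypotheses plug in directly and the conclusion follows in a handful of lines.
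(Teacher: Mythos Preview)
Your proposal is correct and follows essentially the same approach as the paper's proof: decompose $\E_{f\sim G}[V^{\pi_f}]$ at layer $h$ using $\underline{V}^{\pi_g}_{h-1}$, then apply assumptions (b), (a), and (c) in that order to pass through $\eta(f,g,h)$ and $\eta(g,g,h)$ before reassembling $V^{\pi_g}$. You also correctly flag (and resolve) the apparent typo in assumption~(\ref{item:key-c}), where $g(x',\pi_f(x'))$ should be read as $g(x',\pi_g(x'))$.
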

\begin{proof}
The lemma is proved as follows.
\begin{align*}\E_{f\sim G}\left[V^{\pi_f}\right]= ~&\underline{V}^{\pi_{g}}_{h-1}+\E_{f\sim G}\E_{x\sim \MD_{g,h}}\E_{x'\sim p(\cdot\mid x, \pi_f(x))}\left[r(x,\pi_f(x))+V^{\pi_f}_{h+1}(x')\right]\\
\ge ~&\underline{V}^{\pi_{g}}_{h-1}+\E_{f\sim G}\E_{x\sim \MD_{g,h}}\E_{x'\sim p(\cdot\mid x, \pi_f(x))}\left[r(x,\pi_f(x))+f(x',\pi_f(x'))\right]-C_2\tag{by assumption (b)}\\
= ~& \underline{V}^{\pi_{g}}_{h-1}+\E_{f\sim G}\left[\eta(f,g,h)\right]-C_2\tag{by the definition of $\eta$}\\
\ge~ & \underline{V}^{\pi_{g}}_{h-1}+\eta(g,g,h)-(C_1+C_2)\tag{by assumption (a)}\\
\ge~ &\underline{V}^{\pi_{g}}_{h-1}+\E_{x\sim \MD_{g,h}}\E_{x'\sim p(\cdot\mid x, \pi_{g}(x))}\left[r(x,\pi_{g}(x))+g(x',\pi_{g}(x'))\right]-(C_1+C_2)\tag{by the definition of $\eta$}\\
\ge~ &\underline{V}^{\pi_{g}}_{h-1}+\E_{x\sim \MD_{g,h}}\E_{x'\sim p(\cdot\mid x, \pi_{g}(x))}\left[r(x,\pi_{g}(x))+V^{\pi_{g}}_{h+1}(x')\right]-(C_1+C_2+C_3)\tag{by assumption (c)}\\
=~ &V^{\pi_{g}}-(C_1+C_2+C_3).
\end{align*}
\end{proof}

The following lemma is similar to Lemma~\ref{lem:eliminate_pi_fi}.
\begin{lemma}\label{lem:recursion_condition_1}
For any values that variables $g,h,j$ take in the algorithm, suppose $\Eliminate(g,h,j)$ calls $\Eliminate(g',h', k'+1)$ at Line~\ref{line:eliminate-4}. If condition $\mathcal{Q}_{\textsc{ELIM}}(g, h, j)$ holds, then under the event $\mathcal{Z}$, for the first $LC$ times that \Eliminate is called, we have
\begin{equation}\label{equ:recursion_condtion_1}
V^{\pi_{g'}}\ge V^{\pi_g}-(24H+4)\epsilon_{k'+1}.
\end{equation}
\end{lemma}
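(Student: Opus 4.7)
The plan is to show $V^{\pi_{g'}} \ge V^{\pi_g}-(24H+4)\epsilon_{k'+1}$ by a direct application of Lemma~\ref{lem:sub_key} specialized to the degenerate distribution concentrated on $g'$, using bounds analogous to the ones already derived in the proof of statement (\ref{item:eliminate_main-b}). Since $\Eliminate(g',h',k'+1)$ is invoked on Line~\ref{line:eliminate-4}, at some iteration $k$ of the \textbf{for} loop the call $\Chk(\{(g\circ_h f,P_k(f))\}_{f\in \mathcal{G}},h,k-2)$ returned $(\textsc{false},g',h',k')$. Consequently $g'$ lies in the support of this distribution, so $g'=g\circ_h \tilde f$ for some $\tilde f\in\supp(P_k)\subseteq\mathcal{G}$, and by Lemma~\ref{lem:main_check}(\ref{item:lem-main-check-b}) we have $k'\le k-2$ (hence in particular $k\ge 3$, so the iteration $k-1\ge 2$ of \Eliminate has already been completed).

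I would then verify the three hypotheses of Lemma~\ref{lem:sub_key} applied to the singleton distribution $G=\{(g\circ_h \tilde f,1)\}$:
\begin{enumerate}[(a)]
\item \emph{Pseudo-learn inequality.} Since $\tilde f$ survived the \textsc{pseudo-learn} step at iteration $k-1$ of \Eliminate, $\tilde{\eta}_{k-1}(\tilde f,g,h)\ge \tilde{\eta}_{k-1}(g,g,h)-(6H+1)\epsilon_{k-1}$; combining with the concentration event $\mathcal{Z}$ (Eq.~\eqref{eq:lem-eliminate_eta}) yields $\eta(\tilde f,g,h)\ge \eta(g,g,h)-(6H+2)\epsilon_{k-1}=\eta(g,g,h)-(12H+4)\epsilon_k$, so we may take $C_1=(12H+4)\epsilon_k$.
\item \emph{Bellman-error control on $g'$.} Lemma~\ref{lem:main_check}(\ref{item:lem-main-check-b}) gives $|\ME(g',\pi_{g'},h'')|\le 3\epsilon_{k'}$ for every $h''\in\{h+1,\dots,H\}$; applying Lemma~\ref{lem:expected_bellman_error} to the degenerate distribution at $g'$ converts this into $\bigl|\E_{x'\sim\MD_{g',h+1}}[V^{\pi_{g'}}_{h+1}(x')-g'(x',\pi_{g'}(x'))]\bigr|\le 3(H-h)\epsilon_{k'}$, so $C_2=3H\epsilon_{k'}$.
\item \emph{Bellman-error control on $g$.} Condition $\mathcal{Q}_{\textsc{ELIM}}(g,h,j)$(\ref{item:Q-elim-b}) asserts $|\sum_{h''=h+1}^H \ME(g,\pi_g,h'')|\le 6(H-h)\epsilon_j$, which (by the same Lemma~\ref{lem:expected_bellman_error}) translates to $C_3=6H\epsilon_j$.
\end{enumerate}

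Finally I would chain the arithmetic. Lemma~\ref{lem:sub_key} yields $V^{\pi_{g'}}\ge V^{\pi_g}-\bigl((12H+4)\epsilon_k+3H\epsilon_{k'}+6H\epsilon_j\bigr)$. Because $k\le j$ and $k'+1\le k-1$, we have $\epsilon_j\le\epsilon_k\le\tfrac12\epsilon_{k'+1}$ and $\epsilon_{k'}=2\epsilon_{k'+1}$, so the three terms are bounded by $(6H+2)\epsilon_{k'+1}$, $6H\epsilon_{k'+1}$, and $3H\epsilon_{k'+1}$ respectively, summing to $(15H+2)\epsilon_{k'+1}\le (24H+4)\epsilon_{k'+1}$. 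The main technical point to keep straight is that the degenerate-distribution version of Lemma~\ref{lem:sub_key} still requires conditions evaluated under $\MD_{g',h+1}$ (for $g'$-side) and $\MD_{g,h+1}$ (for $g$-side); the former is handled by the per-function Bellman bound provided by $\Chk$'s output via Lemma~\ref{lem:main_check}, and the latter is exactly what $\mathcal{Q}_{\textsc{ELIM}}(g,h,j)$ guarantees. Given these, the remaining work is routine bookkeeping on the geometric-series relations between $\epsilon_k,\epsilon_{k'},\epsilon_{k'+1},\epsilon_j$.
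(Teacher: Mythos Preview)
Your proposal is correct and follows essentially the same approach as the paper: apply Lemma~\ref{lem:sub_key} to the singleton distribution on $g'=g\circ_h\tilde f$, with condition (a) coming from the \textsc{pseudo-learn} step at iteration $k-1$ plus concentration, condition (b) from the per-layer Bellman bounds on $g'$ supplied by Lemma~\ref{lem:main_check}(\ref{item:lem-main-check-b}), and condition (c) from $\mathcal{Q}_{\textsc{ELIM}}(g,h,j)$(\ref{item:Q-elim-b}). The only cosmetic difference is that the paper immediately converts each of $C_1,C_2,C_3$ into multiples of $\epsilon_{k'+1}$ (obtaining $(12H+4)\epsilon_{k'+1}$, $6H\epsilon_{k'+1}$, $6H\epsilon_{k'+1}$), whereas you keep them in terms of $\epsilon_k,\epsilon_{k'},\epsilon_j$ and convert at the end, arriving at the slightly sharper $(15H+2)\epsilon_{k'+1}$ before relaxing to the stated bound.
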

\begin{proof}
When $\Eliminate(g,h,j)$ recursively calls $\Eliminate(g',h', k'+1)$ during the $k$-th iteration (where $k$ is the loop-variable in Alg.~\ref{alg:eliminate}), we have $g'\in \supp(G)$, where $G=\{(g \circ_h f, P_k(f))\}_{f\in \mathcal{G}}$. By the \textsc{pseudo-learn} step during the $(k-1)$-th iteration, we have that $\tilde{\eta}_{k-1}(f,g,h)\ge \tilde{\eta}_{k-1}(g,g,h)-(6H+1)\epsilon_{k-1}$ for all $f\in \MG.$ By the event $\mathcal{Z}$ (more specifically, Eq.~\eqref{eq:lem-eliminate_eta}), we have that $
\eta(f,g,h)\ge \eta(g,g,h)-(6H+2)\epsilon_{k-1}=\eta(g,g,h)-(12H+4)\epsilon_k,\forall f\in \MG.
$ And by definition of $\eta$ and the fact that $g'\in \supp(G)$ we have $\eta(g',g,h)=\eta(f,g,h)$ for some $f\in \MG.$ Therefore, 
\begin{equation}\label{equ:key_condition_1}
\eta(g',g,h)\ge \eta(g,g,h)-(12H+4)\epsilon_k.
\end{equation}

Since $\Chk(G,h,k-2)$ returns $\textsc{false}(g',h',k')$, it follows from Lemma~\ref{lem:main_check} (\ref{item:lem-main-check-b}) that $\left|\sum_{h''=h+1}^{H}\ME(g',\pi_{g'},h'')\right|\le 3(H-h)\epsilon_{k'}.$ Together with Eq.~\eqref{equ:expected_bellman_error} we have
\begin{equation}\label{equ:key_condition_2}
\left|\E_{x'\sim \MD_{g',h+1}}\left[V^{\pi_{g'}}_{h+1}(x')\right]-\E_{x'\sim \MD_{g',h+1}}\left[g'(x',\pi_{g'}(x'))\right]\right|\le 3(H-h)\epsilon_{k'}\le 6H\epsilon_{k'+1}.
\end{equation}

Note that by Lemma~\ref{lem:main_check} (\ref{item:lem-main-check-b}), we have $h'>h$ and $k'+1<j$. By statement (\ref{item:Q-elim-b}) of the condition $\mathcal{Q}_{\textsc{ELIM}}(g, h, j)$ and Eq.~\eqref{equ:expected_bellman_error}, we have that for the roll-in policy $g$,
\begin{equation}\label{equ:key_condition_3}
\left|\E_{x'\sim \MD_{g,h+1}}\left[V^{\pi_{g}}_{h+1}(x')\right]-\E_{x'\sim \MD_{g,h+1}}\left[g(x',\pi_{g}(x'))\right]\right|\le 6(H-h)\epsilon_{j}\le 6H\epsilon_{k'+1}.
\end{equation}

Consider a singleton distribution $G=\{(g',1)\}$. Eq.~\eqref{equ:key_condition_1}, Eq.~\eqref{equ:key_condition_2} and Eq.~\eqref{equ:key_condition_3} establish the assumptions (a), (b) and (c) in Lemma~\ref{lem:sub_key} respectively, with $C_1=(12H+4)\epsilon_{k'+1}$, and $C_2=C_3=6H\epsilon_{k'+1}.$ Therefore we get,
\[
V^{\pi_{g'}}\ge V^{\pi_g}-(24H+4)\epsilon_{k'+1}.
\]
\end{proof}

 Combining Lemma~\ref{lem:recursion_condition_1} with statement (\ref{item:Q-elim-a}) of $\mathcal{Q}_{\textsc{ELIM}}(g, h, j)$, we have that
\[
V^{\pi_{g'}}\ge V^{\pi_g}-(24H+4)\epsilon_{k'+1}\ge V^*-(24H+4)(h+1)\epsilon_{k'+1}\ge V^*-(24H+4)h'\epsilon_{k'+1},
\]
which establishes statement (\ref{item:Q-elim-a}) of $\mathcal{Q}_{\textsc{ELIM}}(g', h', k'+1)$.

It follows from Lemma~\ref{lem:main_check} that 
\begin{equation}\label{equ:recursion_condition_2}
\left|\sum_{h''=h'+1}^{H}\ME(g',\pi_{g'},h'')\right|\le 3(H-h')\epsilon_{k'}=6(H-h')\epsilon_{k'+1},
\end{equation}
and 
\begin{equation}\label{equ:recursion_condition_3}
\left|\ME(g',\pi_{g'},h')\right|\ge \epsilon_{k'+2}=\epsilon_{k'+1}/2.
\end{equation}
Eq.~\eqref{equ:recursion_condition_2} and Eq.~\eqref{equ:recursion_condition_3} establish statement (\ref{item:Q-elim-b}) and (\ref{item:Q-elim-c}) in $\mathcal{Q}_{\textsc{ELIM}}(g', h', k'+1)$ respectively. 

Now we have established $\mathcal{Q}_{\textsc{ELIM}}(g', h', k'+1)$. Since $k'+1\leq j- 1$, by our induction hypothesis, the regret incurred by the recursive call at Line~\ref{line:eliminate-4} is upper bounded by 
\begin{align}\label{equ:eliminate_regret_part3}
c_{\rm ELIM} MAH^2\ln^3(|\MF|)\ln(P/\delta)/\epsilon_{j-1} .
\end{align}

We combine the regret upper bounds in Eq.~\eqref{equ:eliminate_regret_part1}, Eq.~\eqref{equ:eliminate_regret_part2}, and Eq.~\eqref{equ:eliminate_regret_part3}, and upper bound the overall regret incurred by  $\Eliminate(g,h,j)$ by 
\[
c_{\rm ELIM} MAH^2\ln^3(|\MF|)\ln(P/\delta)/\epsilon_{j} .
\]

\paragraph{Proof of statement (\ref{item:eliminate_main-c}).}
First consider the \textsc{pseudo-learn} step at the $k$-th iteration. 
By statement (\ref{item:Q-elim-b}) of the condition $\mathcal{Q}_{\textsc{ELIM}}(g, h, j)$ we have that $|\sum_{h'=h+1}^{H}\ME(g,\pi_g,h')|\le 6(H-h)\epsilon_j.$ Since $k\le j$, we have that
\begin{align*}
\E_{x\sim \MD_{g,h}}\left[V^{\pi_g}_h(x)\right]=~&\E_{x\sim \MD_{g,h}}\E_{x'\sim p(\cdot\mid x,\pi_{g}(x))}\left[r(x,\pi_g(x))+V^{\pi_g}_{h+1}(x')\right]\\
\ge~ &\E_{x\sim \MD_{g,h}}\E_{x'\sim p(\cdot\mid x,\pi_{g}(x))}\left[r(x,\pi_g(x))+g(x',\pi_g(x'))\right]-6H\epsilon_k\\
=~&\eta(g,g,h)-6H\epsilon_k.
\end{align*}

On the other hand, we have 
\begin{align*}
\E_{x\sim \MD_{g,h}}\left[V^*_h(x)\right]=~&\E_{x\sim \MD_{g,h}}\E_{x'\sim p(\cdot\mid x,\pi^*(x))}\left[r(x,\pi^*(x))+V^*_{h+1}(x')\right]\\
= ~&\E_{x\sim \MD_{g,h}}\E_{x'\sim p(\cdot\mid x,\pi^*(x))}\left[r(x,\pi^*(x))+f^*(x',\pi^*(x'))\right]\\
=~&\eta(f^*,g,h).
\end{align*}

Since $V_h^*(x)\ge V_h^{\pi_{g}}(x)$ for any policy $g$ and state $x$, we have 
\[
\eta(f^*,g,h)=\E_{x\sim \MD_{g,h}}\left[V^*_h(x)\right]\ge \E_{x\sim \MD_{g,h}}\left[V^{\pi_g}_h(x)\right]\ge \eta(g,g,h)-6H\epsilon_k.
\]

By the event $\mathcal{Z}$ (more specifically, Eq.~\eqref{eq:lem-eliminate_eta}), we have $\tilde{\eta}_{k}(f^*,g,h)\ge \tilde{\eta}_{k}(g,g,h)-(6H+1)\epsilon_k$. Therefore $f^*$ will not be eliminated in \textsc{pesudo-learn} step.

As shown earlier, condition $\mathcal{Q}_{\textsc{ELIM}}(g', h', k'+1)$ holds before making the recursive call. Therefore, by our inductive hypothesis,  $f^*$ will not be eliminated by the recursive call at Line~\ref{line:eliminate-4}.

We finally consider the \textsc{learn} step. Note that by the event $\mathcal{Z}$ (more specifically, Eq.~\eqref{equ:conc_bellman_estimate}) and statement (\ref{item:Q-elim-c}) of the condition $\mathcal{Q}_{\textsc{ELIM}}(g, h, j)$, the assumptions in Lemma~\ref{lem:iteration_complexity} are met. Therefore, when \textsc{learn} step is executed, $f^*$ will not be eliminated.

\paragraph{Proof of statement (\ref{item:eliminate_main-a}).} We first consider the case where $\Eliminate(g,h,j)$ recursively calls $\Eliminate(g',h',k'+1)$. Note that after \Eliminate recursively calls itself, this invocation ends immediately. As shown earlier, the condition $\mathcal{Q}_{\textsc{ELIM}}(g', h', k'+1)$ holds at Line~\ref{line:eliminate-4}. Thus, by inductive hypothesis, \textsc{learn} step is executed exactly once, and  Eq.~\eqref{equ:iteration_condition_1} and Eq.~\eqref{equ:iteration_condition_2} hold during the \textsc{learn} step is executed.

On the other hand, if $\Eliminate(g,h,j)$ does not recursively call itself, Eq.~\eqref{equ:iteration_condition_1} and Eq.~\eqref{equ:iteration_condition_2} hold during the \textsc{learn} step because of the event $\mathcal{Z}$ (more specifically, Eq.~\eqref{equ:conc_bellman_estimate}) and statement (\ref{item:Q-elim-c}) in contition $\mathcal{Q}_{\textsc{ELIM}}(g, h, j)$.
\section{Extension to infinite hypothesis space}\label{sec:infinite}

Our algorithm can be extended to infinite hypothesis space by covering argument. First of all, like \textsf{OLIVE}, our algorithm only access to hypothesis $f\in \MG$ via $f(x,\pi_f(x))$ for some state $x$. Thus, the hypothesis space $\MG$ can be equivalently represented by $\Pi\times \MV,$ where $\Pi\subset \MA^{\MX}$ is a set of policy function and $\MV\subset [0,1]^{\MX}$ is a set of value function, representing $\pi_f(x)$ and $f(x,\pi_f(x))$ respectively. Bellman error can be extended to the policy-value hypothesis naturally. For a policy-value function pair $(\pi,v)$ and a roll-in policy $\pi'$, the Bellman error at layer $h$ is defined as 
\begin{equation}
\ME((\pi,v),\pi',h):= \E_{x_h\sim \MD_{\pi',h}}\E_{x_{h+1}\sim p(\cdot\mid x_h,\pi(x_h))}\left[v(x_h)-r(x_h,\pi(x_h))-v(x_{h+1})\right],
\end{equation}
and $\eta((\pi,v),\pi',h)$ is defined as,
\begin{equation}
\eta((\pi,v),\pi',h):= \E_{x_h\sim \MD_{\pi',h}}\E_{x_{h+1}\sim p(\cdot\mid x_h,\pi(x_h))}\left[r(x_h,\pi(x_h))+v(x_{h+1})\right].
\end{equation}

The dependence on the size of hypothesis space comes from two parts: the uniform convergence bound, and the number of binary search steps in \Identify procedure. Thanks to \cite{agarwal2014taming}, the probability distribution $P_k$ found in Line~\ref{line:find_distribution} of \Identify has small support. Lemma~\ref{lem:small_support} shows that $|\supp(P_k)|\le P_{\rm supp}:= 4\ln(1/A\mu_k)/\mu_k$. Therefore, the number of binary steps is bounded by $\lceil \log_2(P_{\rm supp})\rceil$.  We then re-define parameter $\epsilon_l'$ as,
\begin{align*}
\epsilon_{l}'&:=\epsilon_l/\lceil \log_2(P_{\rm supp})+1\rceil.
\end{align*}

To deal with the uniform convergence bound, we assume that the hypothesis $\Pi$ and $\MV$ have finite statistical complexity dimension. Here we use Natarajan dimension and Pseudo dimension as the complexity measurement for function class $\Pi$ and $\MV$ respectively. The definition of Natarajan dimension and Pseudo dimension is given below.

\begin{definition}[Natarajan dimension \cite{natarajan1989learning}]
Let $\MH\subset \MY^{\MX}$ be a hypothesis class. For a set $S\subset \MX$, we say $\MH$ N-shatters $S$ if there exists $h_1,h_2\in \MH$ such that 
\begin{itemize}
	\item $h_1(x)\neq h_2(x),\forall x\in S$, and
	\item $\forall T\subseteq S,$  $\exists h\in \MH$, such that $h(x)=h_1(x),\forall x\in T$ and $h(x)=h_2(x),\forall x\in S\setminus T.$
\end{itemize}
Natarajan dimension $dim_N(\MH)$ is defined as $dim_N(\MH)=\max_{S\subseteq \MX:S\text{ N-shattered by }\MH}|S|.$
\end{definition}

\begin{definition}[Pseudo dimension \cite{haussler1992decision}]
Let $\MH\subset \R^{\MX}$ be a hypothesis class. For a set $S\subset \MX$, we say $\MH$ P-shatters $S$ if there exists $\xi\in \R^{S}$ such that $\forall T\subseteq S, \exists h\in \MH$, such that $\mathbb{I}[h(x)\ge \xi(x)]=\mathbb{I}[x\in T].$ Pesudo dimension $dim_P(\MH)$ is defined as $dim_P(\MH)=\max_{S\subseteq \MX:S\text{ P-shattered by }\MH}|S|.$
\end{definition}

In Appendix \ref{app:extenstion}, we set new values for $n_i^{\text{eval}}$, $n_i$, $n_i^{\text{cb}}$ and $n_i^{\text{id}}$. We then obtain the the following theorem, which is the infinite hypothesis space version of Theorem~\ref{thm:main}, by replacing the uniform convergence statements in the original proof with the ones for low pseudo dimension spaces.

\begin{theorem}\label{thm:main-infinite}
Suppose $dim_N(\Pi) \leq d_{\Pi}$ and $dim_P(\MV) \leq d_{\MV}$. For any $\epsilon>0$ and $\delta>0$, with probability at least $1-\delta$, the overall regret of running \Main for $n$ trajectories with parameter $\epsilon$ is bounded by
\[
{O}\left(M^2AH^4 \ln(P)(\ln^3(P)+6(d_{\Pi}+d_{\MV})\ln(2eA(d_\Pi+d_{\MV}))\ln(P))\ln(P/\delta)/\epsilon+n\epsilon\right)
\]
\end{theorem}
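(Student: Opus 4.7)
The plan is to follow the roadmap sketched in the preceding paragraphs: reformulate the hypothesis space as $\Pi\times\MV$ using the observation that the algorithm only queries $f(x,\pi_f(x))$, control the two places where $\ln|\MF|$ enters the analysis (uniform convergence of the empirical estimators and the depth of binary search in \Identify), and then run the same argument as in the proof of Theorem~\ref{thm:main} with the sample-size constants rechosen accordingly.

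First, I would recast every estimator in Eqs.~\eqref{equ:main_concentration}, \eqref{equ:eliminate_eta}, \eqref{equ:bellman_estimate}, \eqref{equ:check_mix_concentration}, \eqref{equ:identify_concentration}, and \eqref{equ:ident_final} as averages of bounded random variables indexed by a pair $(\pi,v)\in\Pi\times\MV$. Each integrand is a simple composition: a policy-indicator of the form $\mathbb{I}[\pi(x)=a]$ (whose class has Natarajan dimension at most $d_\Pi$) times a value difference of the form $v(x)-r-v(x')$ or $r+v(x')$ (a sum of at most three real-valued functions each drawn from a class of pseudo dimension at most $d_\MV$), possibly reweighted by the bounded importance-sampling factor $1/W'_{P_k}$. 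Standard bounds on the covering number of indicator classes from Natarajan dimension and real-valued classes from pseudo dimension (as in Haussler's work) then yield a uniform Bernstein/Hoeffding inequality whose effective $\log$-covering factor is $O\!\bigl((d_\Pi+d_\MV)\ln(2eA(d_\Pi+d_\MV))\ln(P)\bigr)$ instead of $\ln|\MF|$. This is the step I expect to be the main obstacle: one must verify that for the importance-sampling estimators (in particular \Eliminate's $\tilde{\eta}_k$ and $\tilde{\ME}_j$) the appropriate composite class has the stated combinatorial complexity, and one must also handle the fact that $P_k$ itself is random by invoking the small-support property so that the sampled indices contribute only polylogarithmic factors.

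Next, I would redefine the concentration event $\MZ$ of Section~\ref{sec:analysis} exactly as before, but with $n_i^{\text{eval}},n_i,n_i^{\text{cb}},n_i^{\text{id}}$ enlarged to match the new uniform-convergence bound, so that $\Pr[\MZ]\ge 1-\delta$ still holds. Because Lemma~\ref{lem:small_support} guarantees $|\supp(P_k)|\le P_{\text{supp}}=4\ln(1/A\mu_k)/\mu_k$, the binary search inside \Identify terminates after $\lceil\log_2 P_{\text{supp}}\rceil$ rounds, which is the right replacement for $\lceil\log|\MF|+1\rceil$ in the definition of $\epsilon_l'$. With these adjustments, Lemmas~\ref{lem:main_identify}, \ref{lem:main_check}, \ref{lem:eliminate_main}, \ref{lem:iteration_complexity}, and \ref{lem:number_of_invoke} go through verbatim, because their only input from the hypothesis space is through (i) the concentration event $\MZ$ and (ii) the number of binary-search levels; the volumetric argument underlying Lemma~\ref{lem:iteration_complexity} depends only on the Bellman rank $M$ and is unaffected.

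Finally, the regret calculation is identical to the one culminating in Theorem~\ref{thm:main}, except that every factor of $\ln^3|\MF|$ appearing in Eqs.~\eqref{eq:final-regret-sample} and \eqref{eq:final-regret-elim} is replaced by $\ln^3(P)+6(d_\Pi+d_\MV)\ln(2eA(d_\Pi+d_\MV))\ln(P)$, reflecting the new covering bound. Summing the three regret contributions (exploration inside \Main, the calls to \Eliminate, and the exploitation phase at Line~\ref{sample:0}) and using the same choice of $\epsilon$ as in Theorem~\ref{thm:main} then yields the bound claimed in Theorem~\ref{thm:main-infinite}. The only place where care is needed beyond bookkeeping is to make sure that the composite function classes appearing in the importance-sampled estimators inherit a combinatorial dimension polynomial in $d_\Pi+d_\MV$, which is why I flagged that step as the principal obstacle.
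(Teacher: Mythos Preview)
Your proposal is correct and matches the paper's approach: replace the $\ln|\MF|$ dependence in the concentration bounds by covering-number arguments for $\Pi\times\MV$, replace it in the binary-search depth of \Identify via the small-support bound on $P_k$, and then rerun the regret accounting of Theorem~\ref{thm:main} verbatim. Two small sharpenings the paper makes that you might incorporate: only the two importance-sampling estimators in \Eliminate (Eqs.~\eqref{equ:eliminate_eta} and~\eqref{equ:bellman_estimate}) actually require uniform convergence over the class---the other four concentration lemmas involve a single fixed $f$ or $G$ and hold unchanged by Azuma--Hoeffding---and the paper relies specifically on a \emph{Bernstein}-style uniform deviation bound (Lemma~\ref{lem:bernstein_uniform_convergence}), since the low-variance constraint~\eqref{equ:low_variance} yields variance $O(A)$ against a range of $O(A/\epsilon_k)$, and a Hoeffding-type bound here would degrade the dependence on $A$.
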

If we choose $\delta = 1/(nH)$ and $\epsilon=\sqrt{\frac{M^2AH^4 \ln^2(P)(\ln^3(P)+6(d_{\Pi}+d_{\MV})\ln(2eA(d_\Pi+d_{\MV}))\ln(P))}{n}}$ for any given $n$, we have the following corollary.
\begin{corollary}\label{cor:main-infinite}
The expected regret of our algorithm for $n$ trajectories is upper bounded by
\[\tilde{O}\left(\sqrt{M^2AH^4n(d_{\Pi}+d_{\MV})}\right), \]
where the $\tilde{O}(\cdot)$ hides poly-logarithmic factors in $M, A, H, \zeta, d_{\Pi}, d_{\MV}$, and $n$. 
\end{corollary}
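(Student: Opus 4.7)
My plan is to reuse the entire regret decomposition that led to Theorem~\ref{thm:main}, and only swap in new sample-size parameters $n_i^{\text{eval}}, n_i, n_i^{\text{cb}}, n_i^{\text{id}}$ chosen so that the concentration event $\MZ$ still holds with probability at least $1 - \delta$. The dependence on $|\MF|$ in Theorem~\ref{thm:main} enters only through two channels, and I will treat them separately: (i) the uniform concentration inequalities packaged into $\MZ$, which previously use a Hoeffding + union bound over $\MF$, and (ii) the binary-search depth $\lceil \log_2 |\MF|\rceil$ built into $\epsilon_l'$ inside \Identify.

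For channel (ii), I will invoke the small-support guarantee of \cite{agarwal2014taming}: the distribution $P_k$ produced at Line~\ref{line:find_distribution} of \Eliminate can be chosen with $|\supp(P_k)| \le P_{\text{supp}} := 4\ln(1/(A\mu_k))/\mu_k$ (this is the content of the claimed Lemma~\ref{lem:small_support}). Because every call to \Identify operates inside the support of some such $P_k$, the binary search halves a set of size at most $P_{\text{supp}}$, and so the redefinition $\epsilon_l' := \epsilon_l / \lceil \log_2(P_{\text{supp}}) + 1\rceil$ preserves every threshold comparison in the proofs of Lemmas~\ref{lem:main_identify}, \ref{lem:main_check}, \ref{lem:eliminate_main} while contributing only a $\mathrm{poly}\log(P)$ factor where $\ln|\MF|$ previously appeared.

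For channel (i), I will replace each union-bound-over-$\MF$ concentration claim inside $\MZ$ by a uniform-convergence statement over the product class $\Pi \times \MV$. The relevant empirical processes in Eqs.~\eqref{equ:main_concentration}, \eqref{equ:eliminate_eta}, \eqref{equ:bellman_estimate}, \eqref{equ:check_mix_concentration}, \eqref{equ:identify_concentration}, and \eqref{equ:ident_final} are all bounded sample averages whose integrands are built from $\pi_f(x) \in \Pi$ (entering only through indicators $\mathbb{I}[\pi_f(x)=a]$) and from value evaluations $f(x,a), f(x',\pi_f(x')) \in \MV$. Standard VC/pseudo-dimension closure results imply that the induced class of integrands has pseudo-dimension at most $O\bigl((d_\Pi + d_{\MV})\ln(A(d_\Pi + d_{\MV}))\bigr)$: indicators indexed by a Natarajan-$d_\Pi$ policy class have VC dimension $O(d_\Pi \ln A)$, bounded sums and products preserve pseudo-dimension up to constants, and the importance weights $1/W'_{P_k}$ are themselves bounded since we enforce $\mu_k = \epsilon_k/A$. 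I will then invoke the standard uniform Bernstein/pseudo-dimension bound to set each $n_i^{\cdot}$ so that the corresponding estimate concentrates within its required precision with probability $1 - \delta/\mathrm{poly}(P)$, exactly as Appendix~\ref{app:concentration} did in the finite case, but now with $\ln|\MF|$ replaced by $(d_\Pi + d_{\MV})\ln(A(d_\Pi + d_{\MV}))$.

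With these two substitutions made, nothing downstream changes: every step of the regret analysis in Section~\ref{sec:analysis} refers to $\MF$ only via the event $\MZ$ or via the $\epsilon_l'$ schedule, so the bounds in Eqs.~\eqref{eq:final-regret-sample}, \eqref{eq:final-regret-elim}, \eqref{eq:final-regret-finalrun} hold verbatim after replacing $\ln^3(|\MF|)$ by $\ln^3(P) + 6(d_\Pi + d_{\MV})\ln(2eA(d_\Pi + d_{\MV}))\ln(P)$, yielding the bound claimed in Theorem~\ref{thm:main-infinite}. The main obstacle I expect is constructing the uniform convergence bound cleanly for the \emph{importance-sampling} estimators $\tilde{\eta}_k$ and $\tilde{\ME}_j$, since their integrands couple the data-dependent normalizer $W'_{P_k}$ with indicators indexed by $\Pi$ and with values indexed by $\MV$; handling this requires first fixing $P_k$ (conditioning on the history used to construct it), then applying the pseudo-dimension bound to the resulting class, and finally taking a union bound over the at-most $L \cdot C$ possible choices of $(k, g, h)$ under which such estimators are ever formed. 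The rest of the argument is a routine retracing of Section~\ref{sec:analysis} with the new concentration constants.
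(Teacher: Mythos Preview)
Your proposal is correct and mirrors the paper's approach in Section~\ref{sec:infinite} and Appendix~\ref{app:extenstion}: replace the $\ln|\MF|$ appearing in the binary-search depth via the small-support guarantee for $P_k$ (Lemma~\ref{lem:small_support}), and replace the union bound over $\MF$ in the concentration event $\MZ$ by a Bernstein-style uniform convergence over a class of bounded pseudo-dimension (the paper's Lemma~\ref{lem:bernstein_uniform_convergence}, adapted from \cite{massart1986rates}, together with Lemma~\ref{lem:lemma21}). The corollary itself then follows from the resulting Theorem~\ref{thm:main-infinite} by setting $\delta = 1/(nH)$ and optimizing over $\epsilon$, a routine step you left implicit.
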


Proof of Theorem \ref{thm:main-infinite} is deferred to Appendix~\ref{app:extenstion}. Note that the $\ln^3|\MF|$ term in Theorem \ref{thm:main} in the original bound of Theorem~\ref{thm:main} is eliminated because of the following changes.
\begin{itemize}
\item The number of binary search steps is bounded by $\lceil\log_2|P_{\rm supp}|\rceil$. By re-defining the parameter $\epsilon_l'$, we replace term $\ln^3|\MF|$ with $\ln^3(P_{\rm supp})\lesssim \ln^3(P)$. And,
\item The uniform convergence result for low pseudo dimension spaces is used, which replaces a $\ln|\MF|$ term with statistical complexity dimension $d_{\Pi}$ and $d_{\MV}$. 
\end{itemize}

A key technical ingredient in our proof of Theorem~\ref{thm:main-infinite} is a Bernstein-style uniform concentration theorem (namely Lemma~\ref{lem:bernstein_uniform_convergence}) adapted from \cite{massart1986rates}. Lemma~\ref{lem:bernstein_uniform_convergence} is crucial in our analysis for a regret bound that polynomially depends on $M$, $A$ and $H$. It also helps to achieve sharper dependence on $A$. Observe that by Corollary~\ref{cor:main-infinite}, our algorithm can produce an $\epsilon$-optimal with probability at least $0.99$ using $\tilde{O}(M^2 AH^4 (d_{\Pi} + d_{\MV})/\epsilon^2)$ samples. In contrast, the sample complexity of \textsf{OLIVE} analyzed in \cite{Jiang2017ContextualDP} for infinite hypothesis space and constant failure probability is $\tilde{O}(M^2A^2H^3(d_{\Pi}+d_{\MV})/\epsilon^2)$. Our analysis gives a better dependence on $A$, which is due to the help of Lemma~\ref{lem:bernstein_uniform_convergence}. This observation also suggests that Lemma~\ref{lem:bernstein_uniform_convergence} may help to improve the dependence on $A$ in the analysis of \textsf{OLIVE}.

\section{Conclusion}
In this paper we presented \textsf{AVE}, a $\sqrt{n}$-regret algorithm for learning in low-Bellman rank Markov Decision Processes with function approximation. Our algorithm employs sophisticated estimation and elimination techniques, borrows tools from contextual bandit literature, and extends the volumetric argument by \cite{Jiang2017ContextualDP}. We also generalize our algorithm to infinite hypothesis classes, thanks to the proof of a Bernstein-style uniform deviation bound, which also helps to improve the dependence on the action space size compared to \textsf{OLIVE}. For future work, it is worthwhile to design computationally efficient algorithms for learning in MDPs with low Bellman  rank.

\section*{Acknowledgement}
We thank Akshay Krishnamurthy and Zhizhou Ren for valuable discussions.

\bibliography{ref}
\bibliographystyle{unsrt}

\clearpage
\appendix

\section{Low variance estimation}\label{app:low_variance}
Line \ref{line:find_distribution} of \Eliminate finds a distribution that achieves the low variance condition (i.e., Eq.~\eqref{equ:low_variance}). In this section, we show that the distribution $P_k$ exists and can be computed efficiently. This low variance estimation method is adapted from the contextual bandit literature \cite{dudik2011efficient, agarwal2014taming}. The algorithm is described in Algorithm~\ref{alg:distribution}.

\begin{algorithm}[h]
	\caption{$\textsc{Find-Distribution}(g, h, k)$} \label{alg:distribution}
	\label{alg:find_distribution}
	\begin{algorithmic}[1]
		\State Take independent samples and let $\mathcal{H}_{k-1}=\{x^{i}\}_{i=1}^{n_{k-1}^{\text{cb}}}$ where  $x^{i}\sim \MD_{g,h}$.
		\State Find distribution $P_k$ over $\mathcal{G}$, such that $\forall f\in \mathcal{G}$, \label{line:distribution-2}
			\begin{equation}
				\mathop{\E}_{x\sim \mathcal{H}_{k-1}}\left[\frac{1}{(1-A\mu_k)W_{P_k}(x,\pi_f(x))+\mu_k}\right]\le 2A.\label{equ:low_emperical_variance}
			\end{equation}
		\State \textbf{return} $P_k$.
	\end{algorithmic}
\end{algorithm}

Since we do not have access to the distribution $\MD_{g,h}$, we use an empirical estimation $\mathcal{H}_k$ instead. At Line~\ref{line:distribution-2}, we turn to calculate the distribution $P_k$ based on $\mathcal{H}_k$.

The existence of $P_k$ is derives from Sion's Minimax Theorem~\cite{sion1958general}. We have the following lemma regarding Line~\ref{line:distribution-2}.
\begin{lemma}
The set of distributions that satisfies Eq.~\eqref{equ:low_emperical_variance} is non-empty.
\end{lemma}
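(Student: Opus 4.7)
The plan is to follow the standard argument from the contextual bandit literature (\cite{dudik2011efficient,agarwal2014taming}) and use a minimax duality to establish existence of the distribution $P_k$. I set up a two-player zero-sum game whose value bounds the worst-case importance-sampling variance uniformly over $f \in \mathcal{G}$.

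Concretely, I would consider the payoff
\[
V(P,Q) \;:=\; \mathop{\E}_{x\sim \mathcal{H}_{k-1}} \mathop{\E}_{f\sim Q}\left[\frac{1}{(1-A\mu_k)W_P(x,\pi_f(x))+\mu_k}\right],
\]
where the minimizer chooses $P \in \Delta(\mathcal{G})$ and the maximizer chooses $Q \in \Delta(\mathcal{G})$. Since $\mathcal{H}_{k-1}$ is finite and $\mathcal{G}$ can be grouped by the finitely many policies $\pi_f$ it induces on $\mathcal{H}_{k-1}$, the effective simplex is finite-dimensional, compact and convex, $V$ is linear in $Q$, and a direct computation shows $V$ is convex in $P$ (each term $p \mapsto 1/((1-A\mu_k)p+\mu_k)$ is convex on $[0,1]$, and $W_P$ is linear in $P$). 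Hence Sion's minimax theorem applies, giving
\[
\min_{P}\max_{Q} V(P,Q) \;=\; \max_{Q}\min_{P} V(P,Q).
\]

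The key step is to bound the right-hand side by $2A$. I would bound $\min_P V(P,Q) \leq V(Q,Q)$ by taking $P=Q$, which yields
\[
V(Q,Q) \;=\; \mathop{\E}_{x\sim \mathcal{H}_{k-1}}\sum_{a\in\mathcal{A}} \frac{q_x(a)}{(1-A\mu_k)\,q_x(a)+\mu_k},
\]
where $q_x(a) := \Pr_{f\sim Q}[\pi_f(x)=a]$. Since $\tfrac{q}{(1-A\mu_k)q+\mu_k} \leq \tfrac{1}{1-A\mu_k}$ pointwise in $q\in[0,1]$ and $\sum_a q_x(a)=1$, each inner sum is at most $A/(1-A\mu_k)$. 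Using $\mu_k = \epsilon_k/A \leq 1/(2A)$ for the relevant range of $k$, this gives $V(Q,Q) \leq 2A$, hence $\max_Q \min_P V(P,Q) \leq 2A$.

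Therefore by minimax there exists $P_k^*$ with $\max_Q V(P_k^*,Q)\leq 2A$. Specializing $Q$ to a point mass at each $f\in\mathcal{G}$ recovers exactly Eq.~\eqref{equ:low_emperical_variance}, so the desired set is non-empty. The only subtleties I anticipate are verifying the hypotheses of Sion's theorem (compactness/convexity, continuity of $V$), which is routine once one restricts attention to the finitely many equivalence classes of $f$ that behave identically on the sample $\mathcal{H}_{k-1}$, and checking the numerical condition $A\mu_k\leq 1/2$ so that the factor $1/(1-A\mu_k) \leq 2$ kicks in. The argument is constructive up to the minimax solution, and indeed \cite{agarwal2014taming} provides an efficient implementation producing a $P_k^*$ with small support, which I expect to invoke separately when bounding $|\supp(P_k)|$ in the infinite hypothesis extension.
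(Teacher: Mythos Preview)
Your argument is correct and is precisely the approach the paper intends: the paper's own proof is a one-line citation (``See Corollary 2 of \cite{dudik2011efficient}''), and the surrounding text explicitly notes that existence follows from Sion's minimax theorem, which is exactly the duality you set up. Your bound $V(Q,Q)\le A/(1-A\mu_k)\le 2A$ is slightly looser than the $V(Q,Q)\le A$ one gets via Jensen (using concavity of $q\mapsto q/((1-A\mu_k)q+\mu_k)$), but since the target inequality only requires $2A$ and $A\mu_k=\epsilon_k\le 1/2$ for $k\ge 1$ in \Eliminate, your version suffices.
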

\begin{proof}
See Corollary 2 of \cite{dudik2011efficient}.
\end{proof}

To compute distribution $P_k$ which satisfies low variance condition, we can use Coordinate Descent Algorithm in \cite{agarwal2014taming}. 
\begin{lemma}\label{lem:small_support}
Distribution $P_k$ that satisfies Eq.~\eqref{equ:low_emperical_variance} can be computed efficiently. Besides, $P_k$ computed by Coordinate Descent Algorithm has support size $|\supp(P_k)|\le 4\ln(1/A\mu_k)/\mu_k.$
\end{lemma}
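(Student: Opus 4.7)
The plan is to reduce the claim directly to the coordinate descent construction of Agarwal \emph{et al.}~\cite{agarwal2014taming}, treating the hypothesis class $\mathcal{G}$ as the policy class and the empirical context distribution $\mathcal{H}_{k-1}$ as the i.i.d.\ context sample. Under this identification, the low-variance constraint \eqref{equ:low_emperical_variance} is of exactly the form handled in their Section~3-4: a finite set of linear inequalities in the distribution $P_k$ over $\mathcal{G}$, one per hypothesis $f$, and with feasibility guaranteed by the preceding lemma (via Sion's minimax). The computational claim then follows from invoking their coordinate descent routine essentially as a black box, and the support-size bound is obtained by bounding the number of iterations of coordinate descent.

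First I would set up the correspondence explicitly: each hypothesis $f \in \mathcal{G}$ induces a greedy policy $\pi_f$, and the weight $W_{P_k}(x,a) = \sum_{f \in \mathcal{G}} \mathbb{I}[\pi_f(x)=a]\,P_k(f)$ together with the smoothing $(1-A\mu_k)W_{P_k}(x,a)+\mu_k$ is identical to the smoothed action-probability used in \cite{agarwal2014taming}. The feasibility set is therefore non-empty by the previous lemma. Next, I would import the coordinate descent algorithm verbatim: start with $P_k \equiv 0$ (or any initial distribution) and, while some $f \in \mathcal{G}$ violates \eqref{equ:low_emperical_variance}, add mass to $f$ and renormalize, following the update rule of~\cite{agarwal2014taming}.

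The support bound follows from the potential function argument: each coordinate descent step adds at most one new hypothesis to $\supp(P_k)$, and the potential function (a relative-entropy-like quantity against the smoothed action distribution) decreases by at least $\Omega(\mu_k)$ per iteration while remaining bounded above by $\ln(1/(A\mu_k))$. This is exactly the argument in Theorem~2 of~\cite{agarwal2014taming}, which yields an iteration count of $O(\ln(1/(A\mu_k))/\mu_k)$; plugging in the constants gives the claimed $4\ln(1/(A\mu_k))/\mu_k$ bound. Efficiency of each step reduces to being able to find a violating $f$, which can be done by enumerating $\mathcal{G}$ (in the finite case) or by an ERM-style oracle (in the infinite case treated in Section~\ref{sec:infinite}), matching the oracle complexity in~\cite{agarwal2014taming}.

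The main obstacle is essentially bookkeeping rather than new ideas: I would need to verify that our constant $2A$ on the right-hand side of \eqref{equ:low_emperical_variance} is at least as loose as the constant the coordinate descent analysis in~\cite{agarwal2014taming} requires, so that the potential decrease per step and the termination bound go through with our parameters. Once that numerical check is done, the statement is immediate from the cited results, and no genuinely new argument is needed beyond the translation from the contextual bandit vocabulary to our MDP hypothesis class vocabulary.
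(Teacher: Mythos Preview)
Your proposal is correct and takes essentially the same approach as the paper: the paper's entire proof is a one-line citation ``See Theorem~3 of \cite{agarwal2014taming},'' and your plan is exactly to spell out the identification of our setup with the contextual bandit coordinate descent framework of Agarwal \emph{et al.} and then invoke their iteration/support bound as a black box. The only discrepancy is that you refer to Theorem~2 whereas the paper points to Theorem~3 of \cite{agarwal2014taming}; otherwise the arguments coincide.
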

\begin{proof}
See Theorem 3 of \cite{agarwal2014taming}.
\end{proof}

Now we only need to show that Eq.~\eqref{equ:low_variance} also holds, given that we have Eq.~\eqref{equ:low_emperical_variance}. For an invocation of $\Identify(g,h,j),$ we define 
\begin{align}
V_{P,f}&:= \E_{x\sim \MD_{g,h}}\left[\frac{1}{(1-A\mu_k)W_{P}(x,\pi_f(x))+\mu_k}\right],\\
\hat{V}_{P,f,n}&:= \E_{\{x_1,\cdots,x_{n}\}\sim \MD_{g,h}^{n}}\frac{1}{n}\sum_{i=1}^{n}\left[\frac{1}{(1-A\mu_k)W_{P}(x_i,\pi_f(x_i))+\mu_k}\right].
\end{align}
The following lemma provides a one-sided deviation bound for $\hat{V}_{P,\pi}$.
\begin{lemma}[Lemma 10 of \cite{agarwal2014taming}] \label{lemma:find-distribution-succeed}
Fix any $\mu_k\in [0,1/A]$. For any $\delta\in (0,1)$, if 
$$\mu_k\ge \sqrt{\frac{\ln(2|\MF|/\delta)}{An}},\quad n\ge 4A\ln(2|\MF|/\delta),$$ then with probability at least $1-\delta$, $$V_{P,f}\le 6.4\hat{V}_{P,f,n}+81.3A$$ for all probability distribution $P$ over $\MF$, and all $f\in \MF$.
\end{lemma}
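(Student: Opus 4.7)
The plan is to combine a single-pair Bernstein estimate with a self-bounding reduction and then extend uniformly. Let $Z_{P,f}(x) := 1/\rho_{P,f}(x)$ where $\rho_{P,f}(x) := (1-A\mu_k) W_P(x,\pi_f(x)) + \mu_k$. Since $W_P(x,\cdot)$ takes values in $[0,1]$ and $\mu_k \in [0,1/A]$, we have $\rho_{P,f}(x) \in [\mu_k, 1]$ and hence $Z_{P,f}(x) \in [1, 1/\mu_k]$. By construction $V_{P,f} = \E_{x\sim\MD_{g,h}}[Z_{P,f}(x)]$ and $\hat V_{P,f,n}$ is the empirical mean of $n$ i.i.d.\ copies. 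The boundedness yields the self-bounded second-moment estimate
\[
\Var[Z_{P,f}(x)] \le \E[Z_{P,f}(x)^2] \le \frac{1}{\mu_k}\,\E[Z_{P,f}(x)] = \frac{V_{P,f}}{\mu_k},
\]
which is the crucial ingredient that couples the noise to the very quantity we are trying to bound.

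For fixed $(P,f)$, Bernstein's inequality applied with this variance proxy gives, with probability at least $1-\delta'$,
\[
V_{P,f} - \hat V_{P,f,n} \;\le\; \sqrt{\frac{2\,V_{P,f}\,\ln(1/\delta')}{n\,\mu_k}} \;+\; \frac{2\ln(1/\delta')}{3\,n\,\mu_k}.
\]
The hypotheses $\mu_k \ge \sqrt{\ln(2|\MF|/\delta)/(An)}$ and $n \ge 4A\ln(2|\MF|/\delta)$ imply $\ln(1/\delta')/(n\mu_k) \lesssim A\mu_k \le 1$ (with $\delta' := \delta/|\MF|$), so both deviation terms are controlled in terms of $A$. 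I would then absorb $\sqrt{V_{P,f}}$ from the right-hand side into the left via the standard weighted AM-GM $\sqrt{ab}\le a/C + Cb/4$ and rearrange, obtaining a bound of the desired multiplicative+additive form $V_{P,f} \le 6.4\hat V_{P,f,n} + O(A)$. The particular numerical constants $6.4$ and $81.3$ simply track how aggressively one splits in AM-GM.

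To promote this to the uniform statement, a union bound over $f\in\MF$ introduces the factor $\ln(2|\MF|/\delta)$, which is already accounted for by the hypothesis on $\mu_k$. The main obstacle is the uniformity over all $P\in\Delta(\MF)$, since $\Delta(\MF)$ is continuous and a naive $\ell_\infty$ cover would cost $|\MF|$ in the exponent. The structural observation I would exploit is that, for fixed $f$, the random variable $Z_{P,f}(x)$ depends on $P$ only through the scalar $W_P(x,\pi_f(x))\in[0,1]$, and the map $w\mapsto 1/((1-A\mu_k)w+\mu_k)$ is monotone and Lipschitz with constant at most $1/\mu_k^2$. Consequently one can cover the $n$-dimensional vector $(W_P(x^i,\pi_f(x^i)))_{i\le n}$ to precision $1/n$ at essentially no cost, because it is a linear image of $\Delta(\MF)$ whose range is a polytope with at most $|\MF|^n$ vertices induced by the agreement patterns; the incurred pointwise error $O(1/(n\mu_k^2))$ is dominated by the other noise terms under the parameter conditions. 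An alternative and perhaps cleaner route is a Rademacher/Talagrand-type uniform deviation bound for the monotone linear class $\{x\mapsto Z_{P,f}(x):P\in\Delta(\MF)\}$, which has Rademacher complexity $O(1/\sqrt{n})$. Either route loses only constant factors, which is why the final statement carries only the single $\ln(2|\MF|/\delta)$ log factor associated with the union bound over $f$.
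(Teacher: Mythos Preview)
The paper does not give its own proof of this lemma; it is imported verbatim from \cite{agarwal2014taming}. Your fixed-$(P,f)$ argument via the self-bounding variance $\Var[Z_{P,f}]\le V_{P,f}/\mu_k$ and Bernstein, followed by AM--GM to obtain the multiplicative form, is exactly the right engine and matches what is done there. You also correctly isolate the real difficulty: uniformity over the continuum $P\in\Delta(\MF)$.

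Both of the mechanisms you propose for that uniformity, however, do not close. For the covering route, the image $\{(W_P(x^i,\pi_f(x^i)))_{i\le n}:P\in\Delta(\MF)\}$ is the convex hull of at most $|\MF|$ points (the $0/1$ agreement vectors of the individual $f'$), not $|\MF|^n$; but more importantly, a polytope with $|\MF|$ vertices is $(|\MF|-1)$-dimensional, so its $\ell_\infty$ covering number at scale $1/n$ is of order $(Cn)^{|\MF|-1}$. This puts $|\MF|\log n$ in the exponent, whereas the lemma's hypotheses only budget a single $\ln(2|\MF|/\delta)$. For the Rademacher route, the linear class $\{x\mapsto W_P(x,\pi_f(x))\}$ does have Rademacher complexity $O(\sqrt{\ln|\MF|/n})$ by Massart, but Ledoux--Talagrand contraction for $w\mapsto 1/((1-A\mu_k)w+\mu_k)$ costs the Lipschitz constant $1/\mu_k^2$. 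Plugging in $\mu_k^2\ge \ln(2|\MF|/\delta)/(An)$ gives a uniform deviation bound of order $A\sqrt{n/\ln|\MF|}$, which diverges with $n$; the claimed $O(1/\sqrt{n})$ is not what contraction yields here. A crude Lipschitz/contraction step throws away precisely the self-bounding structure that made the pointwise Bernstein work.

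The proof in \cite{agarwal2014taming} gets uniformity over $P$ not by covering $\Delta(\MF)$ or by contraction, but by a pointwise, data-dependent comparison that reduces the supremum over $P$ to a finite union bound of size $|\MF|$ (hence only one $\ln|\MF|$). Concretely, they exploit that $W_P(x,\pi_f(x))=\E_{f'\sim P}\mathbb{I}[\pi_{f'}(x)=\pi_f(x)]$ and pass through multiplicative Chernoff bounds for the empirical agreement indicators, which are $\{0,1\}$-valued and therefore enjoy the same self-bounding variance; the convex combination over $f'$ can then be pushed through after the concentration step rather than before. If you want to repair your argument, that is the structural hook to use: prove a multiplicative one-sided bound for the building blocks indexed by $(f,f')\in\MF\times\MF$ and then aggregate over $P$ deterministically, rather than trying to control the class $\{Z_{P,f}:P\}$ in one shot.
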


Note that $|\MH_{k-1}|=n_{k-1}^{\text{cb}}=\frac{c_2 A \ln(14L^2C|\MF|/\delta)}{4\epsilon_k^2}$ and $\mu_k=\frac{\epsilon_k}{A}$ for some large enough $c_2.$ By union bound we have, with probability $1-\delta/7$, for the first $LC$ invocations of $\Identify(g,h,j)$,$$V_{P,f}\le 6.4\hat{V}_{P,f,|\MH_{k-1}|}+81.3A$$ for all probability distribution $P$ over $\MF$, all $f\in \MF$ and all $k\in [j].$ Combining with Eq.~\eqref{equ:low_emperical_variance}, we have $V_{P,f}\le 110A$, for the first $LC$ invocations of $\Identify(g,h,j)$, all probability distribution $P$ over $\MF$, all $f\in \MF$ and all $k\in [j].$

\section{High probability events}\label{app:concentration}
In this section we set the parameters for the empirical estimations, and prove the desired events, under which we prove the regret upper bound, happens with high probability. 

The parameters for the empirical estimations are set as follows.
\begin{align}
n_{i}^{\text{eval}}&:=\frac{c_1\ln \left(14L^2C/\delta\right)}{\epsilon_i^2},\\
n_{i}^{\text{cb}}&:=\frac{c_2A\ln \left(14L^2C|\MF|/\delta\right)}{\epsilon_i^2},\\
n_{i}&:=\frac{c_3AM\ln \left(14L^2C|\MF|/\delta\right)}{\epsilon_i ^2},\\
n_{i}^{\text{id}}&:=\frac{c_4\lceil \log_2|\MF|\rceil^2\ln \left(14L^2C/\delta\right)}{\epsilon_i^2},
\end{align}
where $c_i\;(1\le i\le 4)$ are large enough universal constants.

Lemmas \ref{lem:conc_main}, \ref{lem:conc1}, and \ref{lem:conc10} follow directly from Azuma-Hoeffding inequality.

\begin{lemma} [Concentration for Eq.~\eqref{equ:main_concentration}]\label{lem:conc_main}
With probability at least $1-\delta/7,$ for the first $C$ times that Line~\ref{line:main-estimate} of \Main is executed, 
\begin{equation}
\left|\tilde{\ME}_{k}(f,\pi_f,h)-\ME(f,\pi_f,h)\right|\le \epsilon_k/2.\label{equ:conc_main}
\end{equation}
\end{lemma}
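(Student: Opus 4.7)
The plan is a textbook Hoeffding bound followed by a union bound over executions and layers. I fix a single execution of Line~\ref{line:main-estimate}, determined by the current value function $f$, the index $k$, and a layer $h' \in [H]$ (the estimation is carried out for every $h'$ from the same batch). The $n_k^{\text{eval}}$ rollouts collected at Line~\ref{sample:1} are i.i.d.\ trajectories of $\pi_f$, so the summands $f(x_{h'}^p, a_{h'}^p) - r_{h'}^p - f(x_{h'+1}^p, a_{h'+1}^p)$ that appear in Eq.~\eqref{equ:main_concentration} are i.i.d., contained in an interval of width at most a small constant (since $f, r$ take values in $[0,1]$ by the convention underlying Lemma~\ref{lem:bellman_error}), and have common mean equal to $\ME(f, \pi_f, h')$ by the very definition of the Bellman error with roll-in policy $\pi_f$.

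Applying Hoeffding's inequality to the sample mean of these bounded i.i.d.\ summands yields
\[
\Pr\!\Bigl[\bigl|\tilde{\ME}_k(f, \pi_f, h') - \ME(f, \pi_f, h')\bigr| > \epsilon_k/2\Bigr] \le 2\exp\!\bigl(-c\, n_k^{\text{eval}}\, \epsilon_k^2\bigr)
\]
for some universal constant $c > 0$. Substituting $n_k^{\text{eval}} = c_1 \ln(14 L^2 C/\delta)/\epsilon_k^2$ and taking $c_1$ large enough relative to $1/c$, this failure probability is bounded by $\delta/(7 H L^2 C)$.

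The closing step is the union bound. Since the lemma only asks about the \emph{first} $C$ times Line~\ref{line:main-estimate} is executed, this is an a priori combinatorial count (not a random one), so no circular reference to Lemma~\ref{lem:number_of_invoke} is needed. Within those $C$ executions, the index $k$ ranges over at most $L$ possible values and the estimate is computed for each of the $H$ layers, giving at most $C L H \le L^2 C H$ events to control; multiplying through gives the desired overall failure probability $\delta/7$.

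The entire argument is deliberately routine; the only thing that needs calibration is the universal constant $c_1$ hidden in the definition of $n_k^{\text{eval}}$, and this calibration is exactly what motivates that definition. The main (minor) subtlety is keeping track of the logarithmic factor in $n_k^{\text{eval}}$, which is chosen as $\log(14 L^2 C/\delta)$ precisely so that after the final union bound over up to $L^2 C H$ events, a factor of $7$ remains in the denominator of $\delta$, reserved for the seven separate concentration lemmas whose conjunction defines the global good event $\mathcal{Z}$.
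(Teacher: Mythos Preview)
Your proposal is correct and matches the paper's own proof, which consists of the single sentence ``Lemmas~\ref{lem:conc_main}, \ref{lem:conc1}, and \ref{lem:conc10} follow directly from Azuma--Hoeffding inequality.'' Your write-up simply fills in the details (bounded i.i.d.\ summands, per-event Hoeffding bound, union bound over executions and layers, calibration of $c_1$) that the paper leaves implicit; using plain Hoeffding rather than Azuma--Hoeffding is fine here since the $n_k^{\text{eval}}$ trajectories at Line~\ref{sample:1} are genuinely i.i.d.
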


\begin{lemma}[Concentration for Eq.~\eqref{equ:check_mix_concentration}] \label{lem:conc1}
With probability at least $1-\delta/7,$  for the first $L^2C$ times that \textsc{Check}$(G,h,j)$ is called,

\begin{equation}\label{equ:conc1}
\left|\hat{\ME}_k(G,h')-\ME(G,h')\right|< \epsilon_k/2,
\end{equation}
for all $h'\in \{h+1,\cdots, H\},k\in [j]$.
\end{lemma}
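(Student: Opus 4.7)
The argument is a direct application of Hoeffding's inequality plus a union bound, in the same spirit as Lemma~\ref{lem:conc_main}. First I would fix an invocation of $\Chk(G,h,j)$, an iteration index $k$, and a layer $h'\in\{h+1,\dots,H\}$, and condition on the $\sigma$-algebra of all randomness preceding this invocation (so that $G$, $h$, $j$, and $n_k^{\text{eval}}$ are deterministic from this point on). The samples collected at Line~\ref{sample:5} are then i.i.d.: each draws $f^p\sim G$ independently and then executes $\pi_{f^p}$ to produce a full trajectory. Define the summands
\[
Y^p \defeq f^p(x_{h'}^p,a_{h'}^p) - r_{h'}^p - f^p(x_{h'+1}^p,a_{h'+1}^p),
\]
and observe that since $a_q^p=\pi_{f^p}(x_q^p)$ by construction, $\E[Y^p\mid f^p=f]=\ME(f,\pi_f,h')$, and averaging over $f\sim G$ yields $\E[Y^p]=\ME(G,h')$ in the sense of Eq.~\eqref{equ:average_bellman_error}. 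Since function values and rewards all lie in $[0,1]$, each $Y^p$ is bounded in $[-2,1]$ and therefore has range at most $3$.

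Next, Hoeffding's inequality applied to the $n_k^{\text{eval}}$ i.i.d.\ bounded summands gives
\[
\Pr\!\left[\,\left|\hat{\ME}_k(G,h')-\ME(G,h')\right|\ge \epsilon_k/2\,\right] \;\le\; 2\exp\!\left(-\,n_k^{\text{eval}}\,\epsilon_k^2/18\right).
\]
Plugging in $n_k^{\text{eval}}=c_1\ln(14L^2C/\delta)/\epsilon_k^2$ and choosing the universal constant $c_1$ sufficiently large makes this probability at most $\delta/(7\,L^3 C H)$. The first $L^2 C$ invocations of \Chk each run the inner loop for at most $L$ iterations (since $j\le L$), and each iteration involves at most $H$ layers $h'$, so the total number of estimation events that must concentrate is at most $L^2C\cdot L\cdot H = L^3 C H$. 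A union bound over these events then yields Eq.~\eqref{equ:conc1} simultaneously with probability at least $1-\delta/7$, as claimed.

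The only subtlety worth flagging is that the distribution $G$ supplied to each call of \Chk is itself random and depends adaptively on the algorithm's history; I handle this by invoking Hoeffding conditionally on the history present at the moment of each invocation, and then union-bounding unconditionally over the (deterministic) cap of $L^2 C$ invocations. Because all samples within a single estimation step are i.i.d.\ given this history, no martingale machinery beyond plain Hoeffding is required, which is why the proof mirrors that of Lemma~\ref{lem:conc_main} almost verbatim. I do not expect any genuine obstacle; the only bookkeeping to get right is ensuring the union-bound count matches the choice of the universal constant $c_1$ inside $n_k^{\text{eval}}$.
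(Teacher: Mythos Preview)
Your proposal is correct and matches the paper's own approach: the paper simply states that Lemma~\ref{lem:conc1} (together with Lemmas~\ref{lem:conc_main} and~\ref{lem:conc10}) ``follow[s] directly from Azuma-Hoeffding inequality,'' and your write-up supplies exactly those routine details (bounded i.i.d.\ summands conditional on history, Hoeffding, then a union bound over invocations, iterations $k$, and layers $h'$). The only cosmetic difference is that you invoke plain Hoeffding after conditioning rather than phrasing it as Azuma--Hoeffding, which is equivalent here since the samples within each estimation step are i.i.d.\ given the past.
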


\begin{lemma} [Concentration for Eq.~\eqref{equ:identify_concentration}] \label{lem:conc10}
With probability at least $1-\delta/7,$  for the first $L^2C$ times that \textsc{Identify}$(G,h,k)$ is called, 
\begin{equation}
\left|\hat{\ME}_l(G_1,h')-\ME(G_1,h')\right|< \epsilon_{l+2}'/2,\label{equ:conc10}
\end{equation}
for all $l\in [k]$, where $\epsilon_{l+2}'=\epsilon_{l+2}/\lceil \log_2|\MF|+1\rceil.$
\end{lemma}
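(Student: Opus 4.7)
The proof is a standard application of Hoeffding's inequality combined with a careful union bound that respects the adaptivity of the algorithm. First I would observe that for any fixed distribution $G_1$ over $\MF$, fixed layer $h'$, and fixed loop index $l$, the per-sample quantity $f^p(x_{h'}^p,a_{h'}^p)-r_{h'}^p-f^p(x_{h'+1}^p,a_{h'+1}^p)$ is an unbiased estimator of $\ME(G_1,h')$ (this follows directly from the definition of $\ME(G_1,h')$ as the $f\sim G_1$ expectation of $\ME(f,\pi_f,h')$, since $f^p\sim G_1$ and the trajectory is rolled out under $\pi_{f^p}$), and is bounded by a universal constant because $f$ and $r$ take values in bounded ranges.

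Next, since the $n_l^{\text{id}}$ samples collected at Line~\ref{sample:6} are drawn \emph{after} $G_1$ has been determined by the algorithm's prior execution, I would condition on the algorithm's history up through the choice of $G_1$ and apply Hoeffding's inequality to the fresh i.i.d.\ samples, yielding
\[
\Prno\!\left[\left|\hat{\ME}_l(G_1,h')-\ME(G_1,h')\right|\ge \epsilon'_{l+2}/2\ \big|\ G_1\right]\le 2\exp\!\left(-\Omega\!\bigl(n_l^{\text{id}}(\epsilon'_{l+2})^2\bigr)\right).
\]
Substituting $n_l^{\text{id}}=c_4\lceil\log_2|\MF|\rceil^2\ln(14L^2C/\delta)/\epsilon_l^2$ together with $\epsilon'_{l+2}=\epsilon_l/(4\lceil\log_2|\MF|+1\rceil)$, one checks that $n_l^{\text{id}}(\epsilon'_{l+2})^2\gtrsim \ln(14L^2C/\delta)$, so for $c_4$ a sufficiently large universal constant the conditional failure probability is at most $\delta/(14\, L^2 C\cdot L\cdot H\cdot \lceil\log_2|\MF|\rceil)$.

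Finally, I would take a union bound over the enumeration of all places the event could fail: (i) the at most $L^2 C$ invocations of \Identify; (ii) the at most $\lceil\log_2|\MF|\rceil$ binary-search iterations of the \textbf{while} loop within each invocation (since $|\supp(G)|$ halves at every step and starts at most $|\MF|$); (iii) the at most $L$ values of the inner loop variable $l$; and (iv) the at most $H$ layers $h'\in\{h+1,\dots,H\}$. Because the Hoeffding bound holds conditionally on the particular realization of $G_1$, the union bound applies uniformly over whichever $G_1$ actually arises along each execution path, yielding the claimed $\delta/7$ total failure probability.

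The main obstacle, and essentially the only subtle point, is handling the fact that $G_1$ is itself an adaptive quantity: it is formed by projecting $G$ onto a subset $\MG'$ that may depend on past samples (Line~\ref{line:find_subset}), so one cannot naively apply Hoeffding ``for all $G_1$'' via a static union bound over exponentially many subsets. The resolution is the observation made above: the $n_l^{\text{id}}$ samples used to form $\hat{\ME}_l(G_1,h')$ are drawn i.i.d.\ from $G_1$ only \emph{after} $G_1$ is fixed, so conditionally on $G_1$ the deviation event is controlled by Hoeffding, and the additional $\lceil\log_2|\MF|\rceil$ factor inside $n_l^{\text{id}}$ exactly absorbs the union bound over the possible $G_1$'s encountered along the binary-search path.
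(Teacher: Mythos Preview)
Your proposal is correct and essentially matches the paper's approach: the paper simply states that the lemma ``follows directly from Azuma--Hoeffding inequality,'' and your argument---Hoeffding applied to the fresh samples conditioned on the algorithm's history, followed by a union bound over invocations, binary-search steps, inner-loop indices $l$, and layers $h'$---is exactly the unpacking of that claim. The only cosmetic difference is that the paper names Azuma--Hoeffding (the martingale version), whereas you invoke ordinary Hoeffding after conditioning on $G_1$; since the $n_l^{\text{id}}$ samples are indeed drawn i.i.d.\ after $G_1$ is fixed, the two formulations are equivalent here, and your explicit enumeration of the union-bound factors is in fact more detailed than what the paper provides.
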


\begin{lemma} [Concentration for Eq.~\eqref{equ:ident_final}] \label{lem:conc_ident_1}
With probability at least $1-\delta/7,$  for the first $L^2C$ times that \textsc{Identify}$(G,h,k)$ is called, 
\begin{equation}
\left|\tilde{\ME}_l(g,\pi_g,h')-\ME(g,\pi_g,h')\right|< \epsilon'_{l+2}/2,\label{equ:conc11}
\end{equation}
for all $l\in [k]$ and $h'\in \{h,h+1,\cdots,H\}.$
\end{lemma}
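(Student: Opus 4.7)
The estimator $\tilde{\ME}_l(g_r,\pi_{g_r},h')$ defined in Eq.~\eqref{equ:ident_final} is the empirical average of $n_l^{\text{id}}$ quantities of the form $g_r(x_{h'}^p,a_{h'}^p)-r_{h'}^p-g_r(x_{h'+1}^p,a_{h'+1}^p)$ computed along fresh trajectories sampled at Line~\ref{sample:7} by running $\pi_{g_r}$. Since $g_r\in[0,1]^{\MX\times\MA}$ and $r\in[0,1]$, each summand lies in $[-2,2]$, and by the definition of the Bellman error its conditional expectation (given $g_r$) equals $\ME(g_r,\pi_{g_r},h')$. So the plan is a routine application of Hoeffding's inequality followed by a union bound.

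The first step is to condition on the history up to the beginning of the second \textbf{for} loop (Line~\ref{line:identify_for}), at which point $g_r$ is already fixed because the while loop from Line~\ref{line:identify_while} to Line~\ref{line:identify_while_end} has terminated. Conditioned on this history, the $n_l^{\text{id}}$ trajectories sampled at Line~\ref{sample:7} are i.i.d.\ rollouts of $\pi_{g_r}$, so the terms in the sum forming $\tilde{\ME}_l(g_r,\pi_{g_r},h')$ are i.i.d.\ bounded random variables with mean $\ME(g_r,\pi_{g_r},h')$. Hoeffding's inequality then yields, for any fixed $l$ and $h'$,
\[
\Pr\left[\left|\tilde{\ME}_l(g_r,\pi_{g_r},h')-\ME(g_r,\pi_{g_r},h')\right|\ge \epsilon'_{l+2}/2\right]\le 2\exp\!\left(-\frac{n_l^{\text{id}}(\epsilon'_{l+2})^2}{32}\right).
\]

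The second step is to verify that the chosen $n_l^{\text{id}}=c_4\lceil\log_2|\MF|\rceil^2\ln(14L^2C/\delta)/\epsilon_l^2$ makes the right-hand side small enough. Since $\epsilon'_{l+2}=\epsilon_{l+2}/\lceil\log|\MF|+1\rceil$ and $\epsilon_{l+2}=\epsilon_l/4$, we have $(\epsilon'_{l+2})^2\asymp \epsilon_l^2/\lceil\log|\MF|+1\rceil^2$, so $n_l^{\text{id}}(\epsilon'_{l+2})^2\gtrsim c_4\ln(14L^2C/\delta)$, and choosing $c_4$ sufficiently large makes the per-event failure probability at most $\delta/(14L^2C\cdot L\cdot H)$.

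The third step is the union bound. Within a single invocation there are at most $L$ values of $l$ and at most $H$ values of $h'$; restricting to the first $L^2C$ invocations of \Identify gives a total of at most $L^2C\cdot L\cdot H$ events, all absorbed into $\delta/7$ by the previous choice of constants. Note that we do not need a union bound over $g_r\in\MF$, because conditioning on the history freezes $g_r$ before the samples are drawn; this is the same reason Lemmas~\ref{lem:conc_main}, \ref{lem:conc1}, and \ref{lem:conc10} are said to follow directly from Azuma-Hoeffding. There is no real obstacle in this proof; the only point requiring care is the conditioning argument that treats $g_r$ as deterministic once its data-dependent identity has been selected by the binary-search phase, so that the freshly drawn trajectories at Line~\ref{sample:7} are genuinely i.i.d.\ rollouts of $\pi_{g_r}$.
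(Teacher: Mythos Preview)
Your proposal is correct and matches the paper's implied approach: the paper groups this lemma with Lemmas~\ref{lem:conc_main}, \ref{lem:conc1}, and \ref{lem:conc10} as following ``directly from Azuma--Hoeffding,'' and your conditioning-then-Hoeffding argument is exactly that. The key observation you highlight---that $g_r$ is fixed by the history before the fresh trajectories at Line~\ref{sample:7} are drawn, so no union bound over $\MF$ is needed---is precisely why the sample size $n_l^{\text{id}}$ carries only a $\lceil\log_2|\MF|\rceil^2$ factor (from $\epsilon_l'$) rather than a $\ln|\MF|$ factor.
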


Lemma \ref{lem:eliminate_eta} and \ref{lem:eliminate_phi} follows from the following Freedman-style inequality.
\begin{theorem}[Freedman-style Inequality, Theorem 13 of \cite{dudik2011efficient}] \label{thm:freedman} Let $y_1,\cdots,y_n$ be a sequence of independent random variables, where $\sum_{i=1}^{n}\E\left[\Var(y_i)\right]\le V$ and $y_i-\E[y_i]\le R$ for all $1\le i\le n$. For any $\delta>0$, if $R<\sqrt{V/\ln(2/\delta)}$, then with probability at least $1-\delta$
$$\left|\sum_{i=1}^{n}y_i-\sum_{i=1}^{n}\E[y_i]\right|\le 2\sqrt{V\ln(2/\delta)}.$$
\end{theorem}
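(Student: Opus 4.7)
The plan is to establish this Freedman-style concentration inequality via the classical moment generating function (MGF) method that underlies all Bernstein-type bounds. I would first set $Z_i := y_i - \E[y_i]$, so the $Z_i$'s are independent, mean-zero, satisfy $Z_i \le R$, and have total variance $\sum_i \E[Z_i^2] \le V$. The core estimate is the one-sided MGF bound
\[
\E[e^{\lambda Z_i}] \;\le\; \exp\!\left(\frac{\lambda^2 \, \E[Z_i^2]}{2\,(1 - \lambda R/3)}\right) \quad \text{for every } 0 < \lambda < 3/R,
\]
which follows from the pointwise inequality $e^z \le 1 + z + \tfrac{z^2}{2}\phi(\lambda R)$ on $\{z \le R\}$ with $\phi(x) = 2(e^x - 1 - x)/x^2$ being non-decreasing and bounded by $1/(1 - x/3)$, combined with $\E[Z_i] = 0$ and $1 + u \le e^u$.

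Next, independence lets me multiply the MGFs, and a standard Chernoff bound gives
\[
\Pr\!\left[\sum_{i=1}^{n} Z_i > t\right] \;\le\; \exp\!\left(\frac{\lambda^2 V}{2\,(1 - \lambda R/3)} - \lambda t\right).
\]
Optimizing over $\lambda$ (the optimum is $\lambda^\star = t/(V + Rt/3)$) collapses this to the familiar Bernstein tail $\exp\!\bigl(-t^2/(2V + 2Rt/3)\bigr)$. For the target $t = 2\sqrt{V \ln(2/\delta)}$, the hypothesis $R\sqrt{\ln(2/\delta)} < \sqrt{V}$ implies $Rt < 2V$ and hence $2V + 2Rt/3 < 10V/3$, so the upper tail probability is at most $\exp(-\tfrac{6}{5}\ln(2/\delta)) \le \delta/2$. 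A union bound with the analogous lower-tail estimate then delivers the two-sided inequality with failure probability $\delta$.

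The main obstacle is exactly the asymmetry of the hypothesis: only the upper bound $Z_i \le R$ is available, whereas the conclusion is two-sided. For the lower tail one must control $\E[e^{-\lambda Z_i}]$ for $\lambda > 0$, which typically requires an upper bound on $-Z_i$. The standard workaround is to split on the sign of $Z_i$: on $\{Z_i \ge 0\}$ the argument $-\lambda Z_i$ is non-positive, so $\phi(-\lambda Z_i) \le \phi(0) = 1/2$ yields a Gaussian-type MGF bound; on $\{Z_i < 0\}$ one truncates $Z_i$ at an appropriate level and absorbs the residual mass into the variance term using Chebyshev-type reasoning. Careful bookkeeping then recovers the same Bernstein tail for the lower direction, and the remainder of the argument is routine algebra matching the precise constants in the stated inequality.
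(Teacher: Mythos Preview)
The paper does not prove this theorem; it is quoted from \cite{dudik2011efficient} and used as a black box. So there is no ``paper's proof'' to compare against, and the relevant question is whether your argument stands on its own.

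Your upper-tail argument is the standard Bernstein/Bennett MGF calculation and is correct. You also correctly isolate the real difficulty: the hypothesis gives only the one-sided bound $Z_i \le R$, yet the conclusion is two-sided. The gap is in your proposed fix for the lower tail. The ``split on the sign of $Z_i$ and truncate the negative part, absorbing the residual via Chebyshev'' sketch is not carried out, and in fact it \emph{cannot} be carried out under the stated hypotheses alone, because the two-sided statement is false as written. Take $n=1$ and let $Z_1$ equal $-M$ with probability $p$ and $Mp/(1-p)$ with probability $1-p$; then $\E[Z_1]=0$, $R = Mp/(1-p)$, and $V = \Var(Z_1) = M^2 p/(1-p)$. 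With $p = 0.04$ and $\delta = 0.01$ one checks $R < \sqrt{V/\ln(2/\delta)}$, yet $2\sqrt{V\ln(2/\delta)} \approx 0.94\,M < M$, so $\Pr\bigl[|Z_1| > 2\sqrt{V\ln(2/\delta)}\bigr] = p = 0.04 > \delta$. No amount of ``careful bookkeeping'' will close this.

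What is going on is that the version in \cite{dudik2011efficient} is stated for random variables that are bounded on both sides (or, equivalently, gives only the one-sided tail from a one-sided increment bound); the transcription here appears to have dropped a hypothesis. This is harmless for the paper's purposes because in every application (Lemmas~\ref{lem:eliminate_eta} and~\ref{lem:eliminate_phi}) the variables $y_p$ are nonnegative importance-weighted rewards taking values in $[0,\,2/\mu_k]$, so $|y_p - \E[y_p]| \le 2/\mu_k$ and the symmetric Bernstein argument goes through directly. If you want a self-contained proof, simply add the assumption $|y_i - \E[y_i]| \le R$ (or $y_i \ge 0$), after which your MGF computation applies verbatim to $-Z_i$ as well and the union bound finishes the job.
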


\begin{lemma} [Concentration for Eq.~\eqref{equ:eliminate_eta}] \label{lem:eliminate_eta}
Under the desired event of Lemma~\ref{lemma:find-distribution-succeed}, with probability at least $1-\delta/7,$  for the first $LC$ times that \textsc{Eliminate}$(g,h,j)$ is called, we have
\begin{equation}\label{eq:lem-eliminate_eta}
\left|\tilde{\eta}_k(f,g,h)-\eta(f,g,h)\right|<\epsilon_k/2,
\end{equation}
for all $f\in \MF, 1\le k\le j.$
\end{lemma}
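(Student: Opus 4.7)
}

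The plan is to apply the Freedman-style bound in Theorem~\ref{thm:freedman} to the importance-weighted summands defining $\tilde{\eta}_k(f,g,h)$, fixing $g,h,j$ and the loop index $k$, and then union-bound over the relevant indices. Concretely, for each sample $p$ let
\[
y_p := \bigl(r^p_h+f(x^p_{h+1},\pi_{f}(x^p_{h+1}))\bigr)\frac{\mathbb{I}[\pi_{f}(x^p_h)=a^p_h]}{W'_{P_k}(x_h^p,a_h^p)}.
\]
The sampling rule in Line~\ref{line:run_policy} of Algorithm~\ref{alg:eliminate} generates $(x^p_h,a^p_h,r^p_h,x^p_{h+1})$ with $x^p_h\sim\MD_{g,h}$ and, conditional on $x^p_h$, the action is distributed as $\pi_k(a\mid x^p_h)=W'_{P_k}(x^p_h,a)$. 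A direct computation then gives unbiasedness:
\[
\E[y_p]=\E_{x\sim \MD_{g,h}}\E_{x'\sim p(\cdot\mid x,\pi_f(x))}\bigl[r(x,\pi_f(x))+f(x',\pi_f(x'))\bigr]=\eta(f,g,h),
\]
since the indicator forces $a_h^p=\pi_f(x_h^p)$ and the ratio $1/W'_{P_k}$ exactly undoes the propensity.

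Next I would bound the second moment and the almost-sure range. Using $|r^p_h+f(x^p_{h+1},\pi_{f}(x^p_{h+1}))|\lesssim 1$ (rewards and values lie in $[0,1]$) and again eliminating $a_h^p$ via the indicator,
\[
\E[y_p^2]\lesssim \E_{x\sim \MD_{g,h}}\!\left[\frac{1}{W'_{P_k}(x,\pi_f(x))}\right]\le 110 A,
\]
where the last inequality is exactly Eq.~\eqref{equ:low_variance}, which holds on the desired event of Lemma~\ref{lemma:find-distribution-succeed}. For the range bound, $W'_{P_k}(x,a)\ge \mu_k=\epsilon_k/A$ yields $|y_p|\le R:=O(A/\epsilon_k)$. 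Summing over the $n_k^{\text{cb}}$ i.i.d.\ samples gives the variance parameter $V:=n_k^{\text{cb}}\cdot O(A)$ required in Theorem~\ref{thm:freedman}.

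It remains to check the side condition $R<\sqrt{V/\ln(2/\delta')}$ and to calibrate $\delta'$. With $n_k^{\text{cb}}=c_2 A\ln(14 L^2 C|\MF|/\delta)/\epsilon_k^2$ for a sufficiently large constant $c_2$, the inequality reduces to $c_2\gtrsim \ln(2/\delta')$, which is satisfied once we set $\delta':=\delta/(7\cdot LC\cdot L\cdot |\MF|)$ to union-bound over the $\le LC$ invocations of \Eliminate, the $\le L$ iterations of the inner $k$-loop within each invocation, and the $|\MF|$ choices of $f\in\MF$. Theorem~\ref{thm:freedman} then yields
\[
|\tilde{\eta}_k(f,g,h)-\eta(f,g,h)|\le \frac{2}{n_k^{\text{cb}}}\sqrt{V\ln(2/\delta')}\lesssim \sqrt{\frac{A\ln(2/\delta')}{n_k^{\text{cb}}}}\le \epsilon_k/2,
\]
where the last step uses the definition of $n_k^{\text{cb}}$ for $c_2$ large enough. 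The union bound gives the claimed probability $1-\delta/7$ uniformly over all $f\in\MF$, all $k\le j$, and the first $LC$ invocations of \Eliminate.

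The main technical subtlety (and the only non-routine part) is ensuring a second-moment bound of order $A$ rather than the naive $1/\mu_k^2=A^2/\epsilon_k^2$ that a worst-case range argument would give; this is precisely what Eq.~\eqref{equ:low_variance} buys us, and is the reason the low-variance distribution $P_k$ produced by Algorithm~\ref{alg:find_distribution} is required. Bernstein-type concentration with this $O(A)$ variance (as opposed to Hoeffding with the $O(A^2/\epsilon_k^2)$ range) is what drives $n_k^{\text{cb}}$ down to its stated value and ultimately the $\sqrt{n}$ regret. Everything else is bookkeeping on the union bound and plugging in the parameters.
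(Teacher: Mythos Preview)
Your proposal is correct and follows essentially the same approach as the paper: define the importance-weighted summands $y_p$, bound their second moment by $O(A)$ via the low-variance guarantee Eq.~\eqref{equ:low_variance}, bound the range by $O(1/\mu_k)=O(A/\epsilon_k)$, verify the side condition of Theorem~\ref{thm:freedman}, and union-bound over $f\in\MF$, the inner-loop index $k$, and the invocations of \Eliminate. The paper's proof is slightly terser (it omits the explicit unbiasedness verification and the discussion of why the $O(A)$ variance is the crux), but the argument and the constants line up.
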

\begin{proof}
Let $y_p=(r^p_h+f(x^p_{h+1},\pi_{f}(x^p_{h+1})))\frac{\mathbb{I}[\pi_{f}(x^p_h)=a^p_h]}{W'_{P_k}(x_h^P,a_h^P)}$ for $1\le p \le n_k^{\text{cb}}$. Then, we have $y_t-\E[y_t]\le 2/\mu_{k}.$ On the other hand, $$\E[\Var(y_p)]\le \E[y_p^2]\le 4\E\left[\frac{\mathbb{I}[\pi_{f}(x^p_h)=a^p_h]}{W'_{P_k}(x_h^P,a_h^P)^2}\right]\le \E\left[\frac{4}{W'_{P_k}(x_h^P,\pi_f(x_h))}\right].$$
Combining with Eq.~\eqref{equ:low_variance} (which holds because of the desired event of Lemma~\ref{lemma:find-distribution-succeed}), we have $\sum_{p=1}^{n_k^{\text{cb}}}\E[\Var(y_p)]\le 440n_k^{\text{cb}}A.$ By the definition of $\mu_k$ and $n_{k}^{\text{cb}}$, we have $$\frac{2}{\mu_k}=\frac{A}{\epsilon_k}=A\sqrt{\frac{n_k^{\text{cb}}}{c_2A\ln(2C|\MF|/\delta)}}\le \sqrt{440n_k^{\text{cb}}A/\ln(2LC|\MF|/\delta)}.$$ Applying Theorem \ref{thm:freedman} we have $$\left|\frac{1}{n_k^{\text{cb}}}\sum_{i=1}^{n_k^{\text{cb}}}y_i-\eta(f,g,h)\right|\le 2\sqrt{440A\ln(2LC|\MF|/\delta)/n_k^{\text{cb}}}$$ with probability at least $1-\delta/(LC|\MF|).$ The result follows from setting $c_2=14080$, and applying union bound for all $f\in \MF$, $k\in [j]$ and the first $C$ invocations.
\end{proof}
Similarly, we have the following concentration result for Eq.~\eqref{equ:bellman_estimate}.
\begin{lemma} [Concentration for Eq.~\eqref{equ:bellman_estimate}] \label{lem:eliminate_phi}
Under the desired event of Lemma~\ref{lemma:find-distribution-succeed}, with probability at least $1-\delta/7$,  for the first $LC$ times that \textsc{Eliminate}$(g,h,j)$ is called, we have
\begin{equation}\label{equ:conc_bellman_estimate}
\left|\tilde{\ME}_j(f,\pi_{g},h')-\ME(f,\pi_{g},h')\right|<\phi_j,
\end{equation}
for all $f\in \MF$.
\end{lemma}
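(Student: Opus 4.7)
The plan is to mirror the proof of Lemma~\ref{lem:eliminate_eta} almost verbatim, using Freedman's inequality (Theorem~\ref{thm:freedman}) applied to the per-trajectory summands of $\tilde{\ME}_j(f,\pi_g,h)$, and then union-bound over $f\in\MF$ and over the first $LC$ invocations. The only real differences are (i) the summand involves the extra Bellman-error term $f(x_h^p,a_h^p)-r_h^p-f(x_{h+1}^p,\pi_f(x_{h+1}^p))$ instead of $r_h^p+f(x_{h+1}^p,\pi_f(x_{h+1}^p))$, which only changes the absolute-value constants, and (ii) the target precision is $\phi_j=\epsilon_j/(12\sqrt{M})$ instead of $\epsilon_k/2$, which is exactly why $n_j$ carries the extra factor of $M$ in its definition.

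First I would fix an invocation of $\Eliminate(g,h,j)$ and an $f\in\MF$, and set $y_p:=\frac{\mathbb{I}[a_h^p=\pi_f(x_h^p)]}{W'_{P_j}(x_h^p,a_h^p)}\bigl(f(x_h^p,a_h^p)-r_h^p-f(x_{h+1}^p,\pi_f(x_{h+1}^p))\bigr)$ for $p=1,\dots,n_j$. Since samples at Line~\ref{sample:4} of Algorithm~\ref{alg:eliminate} are produced by rolling in with $\pi_g$ up to layer $h$ and then taking action from the mixture whose marginal is exactly $W'_{P_j}(x_h,\cdot)$, a standard importance-sampling computation shows $\E[y_p]=\ME(f,\pi_g,h)$. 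Because $f\in[0,1]$ and $r\in[0,1]$, the numerator is bounded in absolute value by $3$, while $1/W'_{P_j}\le 1/\mu_j=A/\epsilon_j$, giving the range bound $y_p-\E[y_p]\le R:=6A/\epsilon_j$. For the variance, $\E[\Var(y_p)]\le\E[y_p^2]\le 9\,\E\bigl[\tfrac{\mathbb{I}[\pi_f(x_h)=a_h]}{W'_{P_j}(x_h,a_h)^2}\bigr]\le 9\,\E\bigl[\tfrac{1}{W'_{P_j}(x_h,\pi_f(x_h))}\bigr]\le 990A$, where the last step invokes Eq.~\eqref{equ:low_variance} — exactly where we need to be inside the desired event of Lemma~\ref{lemma:find-distribution-succeed}. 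Hence $V:=\sum_p\E[\Var(y_p)]\le 990 A n_j$.

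With these estimates I would verify Freedman's side condition $R<\sqrt{V/\ln(2/\delta')}$ for $\delta'=\delta/(7L^2C|\MF|)$, which reduces to $\tfrac{36A^2}{\epsilon_j^2}<\tfrac{990An_j}{\ln(2/\delta')}$; this is immediate for $c_3$ a sufficiently large universal constant since $M\ge 1$. Applying Theorem~\ref{thm:freedman} then yields
\[
\bigl|\tilde{\ME}_j(f,\pi_g,h)-\ME(f,\pi_g,h)\bigr|\le \frac{2\sqrt{V\ln(2/\delta')}}{n_j}\le 2\sqrt{\tfrac{990A\ln(2/\delta')}{n_j}},
\]
with probability at least $1-\delta'$. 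Plugging in $n_j=c_3AM\ln(14L^2C|\MF|/\delta)/\epsilon_j^2$ and choosing $c_3\ge 990\cdot 576$ makes the right-hand side at most $\epsilon_j/(12\sqrt{M})=\phi_j$, as required. Finally, a union bound over all $f\in\MF$, all $j\le L$, and the first $LC$ invocations of \Eliminate gives total failure probability at most $LC|\MF|\cdot\delta'\le\delta/7$.

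The main obstacle is purely bookkeeping: one has to be careful that the unbiasedness of $y_p$ uses the fact that the mixed exploration policy at layer $h$ really induces the marginal $W'_{P_j}$ (the $A\mu_j$-uniform component is crucial here so that $W'_{P_j}\ge\mu_j>0$ and the estimator is well defined), and that the variance bound relies precisely on Eq.~\eqref{equ:low_variance}, which in turn requires conditioning on the desired event of Lemma~\ref{lemma:find-distribution-succeed}. Once these two ingredients are in place, everything else is a straightforward tuning of the universal constant $c_3$ so that the Freedman deviation matches the target $\phi_j$, mirroring the analogous calibration done for Lemma~\ref{lem:eliminate_eta}.
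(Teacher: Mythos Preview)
Your proposal is correct and takes essentially the same approach as the paper, which simply says the lemma follows ``similarly'' to Lemma~\ref{lem:eliminate_eta} via the Freedman-style inequality (Theorem~\ref{thm:freedman}); you have correctly filled in the details, including the variance bound via Eq.~\eqref{equ:low_variance} and the calibration of $c_3$ so that the deviation hits the target $\phi_j=\epsilon_j/(12\sqrt{M})$. One minor remark: the union bound over ``all $j\le L$'' is redundant since $j$ is fixed per invocation, but this only costs you an extra $L$ factor inside the log (which the definition of $n_j$ already accommodates), so no harm is done.
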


We now define $\mathcal{Z}$ to be the conjunction of the desired events in Lemmas~\ref{lemma:find-distribution-succeed}, \ref{lem:conc_main}, \ref{lem:conc1}, \ref{lem:conc10}, \ref{lem:conc_ident_1}, \ref{lem:eliminate_eta}, and \ref{lem:eliminate_phi}. We have 
\begin{align} \label{eq:calM-high-probability}
\Pr[\mathcal{Z}]\ge 1-\delta.
\end{align}

\section{The volumetric argument}\label{app:volume}
In this section we prove Lemma~\ref{lem:iteration_complexity}. We use the volumetric argument that is adapted from \cite{Jiang2017ContextualDP}. For readers who are not familiar with \textsf{OLIVE} algorithm, we first give a high-level idea of the volumetric argument. To help better understanding the underlying principle, we first ignore approximation error cause by finite sampling. That is, we assume the algorithm has access to the value $\ME(f,\pi_{g},h)$. In this case, step~\ref{item:olive-elimiate} of \textsc{OLIVE} algorithm (see Section~\ref{sec:prelim}) can also ignore approximation error:
\begin{itemize}
	\item[3'.] Remove all $f\in\mathcal G$ from $G$ with $|\ME(f,\pi_g,h)|> 0.$\label{item:olive-eliminate-toy}
\end{itemize}

Recall that Bellman factorization gives $$\ME(f,\pi_g,h)=\left<\nu_h(g),\xi_h(f)\right>.$$ Consider the set $V=\{\xi_h(f):f\in \MG\}.$ For any $g\in \MG$, Elimination criteria of \textsc{OLIVE} remove all $f$ such that $\left<\nu_h(g),\xi_h(f)\right>\neq 0.$ Therefore the rank of set $V$ is reduced by 1 whenever elimination step of \textsc{OLIVE} is executed. As a result, the number of elimination steps is bounded by the Bellman rank $M$.

When approximation error is considered, the elimination step also shrinks the set $V$ significantly. However, the linear algebraic dimension is not stable with respect to errors. Instead, volume of the minimal covering ellipsoid of $V$ is used as a complexity measurement. Similar to the ellipsoid method for solving linear programming, every elimination step reduces the volume of $V$ significantly. In fact, the number of elimination steps is still bounded by the Bellman rank $M$ upto logarithmic factors.

The proof of Lemma~\ref{lem:iteration_complexity} is similar to that of \textsc{OLIVE} algorithm, except that \Eliminate algorithm is done in multiple level. For every error level $j$ and layer $h$, we use the volumetric argument respectively. Then the overall number of elimination steps of \textsc{AVE} algorithm is also bounded. Below we present the rigorous proof.

\begin{proof}[Proof of Lemma~\ref{lem:iteration_complexity}]
Let us consider a fixed pair of $j$ and $h$.

Recall that the Bellman factorization implies that 
\[
\ME(f,\pi_g,h)=\left<\nu_h(g),\xi_h(f)\right>,\quad \forall f,g\in \MF, h\in [H]\]
where $\|\nu_v(g)\|_2\|\xi_h(f)\|_2\le \zeta<\infty$.

Let $\mathcal{G}_i$ be the hypothesis space after $i$-th execution of \textsc{learn} step with layer $h$ and precision $\phi_j$. Let $\mathcal{G}_0 = \mathcal{F}$ be the original hypothesis space. We define $V_i=\{\xi_h(f):f\in \mathcal{G}_i\}$ and $B_i$ be minimum covering ellipsoid of $V_i$. For any $i$, suppose the $i$-th \textsc{learn} step with layer $h$ and precision $\phi_j$ is executed in an invocation of \Eliminate with parameters $(g,h,j)$. Let $p_i=\nu_h(g)$. We will show that there exists $v\in V_{i-1}$ such that $|p_i^{\top}v|>\epsilon_j/2$.

The existence is trivial if $g\in \mathcal{G}_{i-1}$. Otherwise, the current \Eliminate must be recursively invoked by the \Eliminate procedure with parameter $(g',h',j')$, where $h'<h$ and $g=g'\circ_{h'}f$ for some $f\in \mathcal{G}_{i-1}$. Consider the vector $v=\xi_h(f)$, where $f\in \mathcal{G}_{i-1}$ implies that $v\in V_{i-1}$. Note that by Assumption~\ref{asmp:closedness}, despite the fact that $g$ is the concatenation of functions in $\MF$, we still have $g\in \MF.$ Therefore, we have the following Bellman factorization $\left|p_i^{\top}v\right|=\left|\ME(f,\pi_g,h)\right|.$ By $g=g'\circ_{h'}f$ and Eq.~\eqref{equ:iteration_condition_2}, we have 
\[
\left|p_i^{\top}v\right|=\left|\ME(f,\pi_g,h)\right|=\left|\ME(g,\pi_g,h)\right|\ge \epsilon_j/2.
\]

Let $V_+=\{v\in B_{i-1}:|p_i^{\top}v|<2\phi_j\}$ and $B_+$ the minimum covering ellipsoid of $V_+$. Then by the elimination criteria (Line~\ref{line:bellman_elimination} of Alg.~\ref{alg:eliminate}) and Eq.~\eqref{equ:bellman_estimate} we have $V_i\subseteq V_+$, which implies that $V_i\subseteq B_+$. Since $B_i$ is the minimum covering ellipsoid, $\vol(B_i)\le \vol(B_+).$ Since $2\phi_j/(\epsilon_j/2)=1/2\sqrt{M},$ by Corollary~\ref{cor:iteration} we have $\vol(B_+)\le 0.6\vol(B_{i-1}).$ Therefore, if the \textsc{learn} step is executed for $t$ times with layer $h$ and precision $\phi_j$, we have that $\vol(B_t)\le 0.6^t\vol(B_0).$

Let $L_{\xi}=\sup_{f\in \MF}\|\xi_h(f)\|_2$ and $L_{\nu}=\sup_{f\in \MF}\|\nu_h(f)\|_2.$ Then we have $V_0\subseteq \{v\in \R^M:\|v\|\le L_{\xi}\},$ which implies $\vol(B_0)\le L_{\xi}^{M}\mathfrak{B}_{M}$, where $\mathfrak{B}_{M}$ denotes the volume of a unit ball in $\R^{M}.$ On the other hand, we have $\{v\in \R^{M}:\|v\|\le 0.5\phi_j/ L_{\nu}\}\subseteq V_{t}$, which means that $\vol(B_t)\ge \left(1/(2\phi_j L_{\nu})\right)^M\mathfrak{B}_{M}.$ Therefore, by basic algebra we get $$t\le \log_{5/3}\left(\frac{\vol(B_0)}{\vol(B_t)}\right)\le M\log_{5/3}(L_{\xi}L_{\nu}/(2\phi_j)).$$

The lemma then follows because of Assumption~\ref{assumption:bellman_rank} which states that $L_{\xi}L_{\nu}\le \zeta.$
\end{proof}

In the remaining part of this section, we present technique tools used in the proof above. The following result is an adaption from the work of \cite{todd1982minimum}.
\begin{lemma}[Lemma 11 of \cite{Jiang2017ContextualDP}]
Let $V$ be an closed and bounded subset of $\R^{d}$, let $B=\{v\in \R^{d}:\|Mv\|\le 1\}$ be an ellipsoid containing $V$. Suppose there exists $v\in V$ such that $|p^\top v|\ge \kappa.$ Define $B_+$ to be the minimum covering ellipsoid of set $\{v\in B:|p^{\top}v|\le \gamma\}.$ If $\gamma/\kappa\le 1/\sqrt{d},$ then
$$\frac{\vol(B)}{\vol(B_+)}\le \sqrt{d}\frac{\gamma}{\kappa}\left(\frac{d}{d-1}\right)^{(d-1)/2}\left(1-\frac{\gamma^2}{\kappa^2}\right)^{(d-1)/2}.$$
\end{lemma}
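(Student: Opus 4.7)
The plan is to reduce the statement to the classical closed-form bound for the minimum covering ellipsoid of a unit ball intersected with a symmetric slab, and then optimize explicitly. First, I would apply the affine change of variables $u = Mv$. Under this map $B$ becomes the unit Euclidean ball $B' = \{u : \|u\|_2 \le 1\}$ in $\R^d$, volume ratios are preserved because the Jacobian $|\det M|$ cancels, and setting $\tilde{p} = M^{-\top} p$ the slab $\{v \in B : |p^\top v| \le \gamma\}$ transforms into $\{u \in B' : |\tilde{p}^\top u| \le \gamma\}$. The hypothesis that some $v \in V \subseteq B$ satisfies $|p^\top v| \ge \kappa$ becomes the existence of some $u \in B'$ with $|\tilde{p}^\top u| \ge \kappa$, which via Cauchy--Schwarz forces $\|\tilde{p}\|_2 \ge \kappa$.

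Second, after rotating so that $\tilde{p}$ points along the first coordinate, the set to cover is $S = \{u : \|u\|_2 \le 1,\ |u_1| \le \alpha\}$ where $\alpha := \gamma/\|\tilde{p}\|_2 \le \gamma/\kappa \le 1/\sqrt{d}$. By the symmetry of $S$ under reflections through coordinate hyperplanes and under rotations in the last $d-1$ coordinates, together with uniqueness of the L\"owner--John minimum-volume covering ellipsoid, the optimal ellipsoid must be axis-aligned of the form $u_1^2/a^2 + (u_2^2 + \cdots + u_d^2)/b^2 \le 1$. The binding constraint $\alpha^2/a^2 + (1-\alpha^2)/b^2 \le 1$ arises from the worst points on the boundary of $S$, and minimizing $\vol(E) \propto a\,b^{d-1}$ subject to this is a one-variable problem after eliminating $b$. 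Setting the derivative to zero gives $a^2 = d\alpha^2$ and $b^2 = d(1-\alpha^2)/(d-1)$, which yields
\[
\frac{\vol(B_+)}{\vol(B)} \le \sqrt{d}\,\alpha \left(\frac{d}{d-1}\right)^{(d-1)/2}(1-\alpha^2)^{(d-1)/2}.
\]

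Finally, to replace $\alpha$ by $\gamma/\kappa$, I would verify that $\alpha \mapsto \alpha(1-\alpha^2)^{(d-1)/2}$ is nondecreasing on $[0, 1/\sqrt{d}]$: its derivative is $(1-\alpha^2)^{(d-3)/2}(1 - d\alpha^2)$, nonnegative precisely on that interval. Since $\alpha \le \gamma/\kappa \le 1/\sqrt{d}$, the bound lifts to the form stated in the lemma (up to interpreting the left-hand side as the natural ratio $\vol(B_+)/\vol(B)$). The main obstacle I anticipate is the symmetry argument that justifies the axis-aligned ansatz; I plan to handle it by invoking uniqueness of the L\"owner--John ellipsoid together with the fact that applying any symmetry of $S$ to an optimal covering ellipsoid yields another optimal one, forcing the optimizer to be invariant under the full symmetry group of $S$. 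The remaining calculations are routine Lagrangian optimization.
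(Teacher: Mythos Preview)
The paper does not prove this lemma: it is quoted verbatim as Lemma~11 of \cite{Jiang2017ContextualDP}, with a pointer to \cite{todd1982minimum} as the underlying source, and is then used as a black box in the volumetric argument. So there is no ``paper's own proof'' to compare against.

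That said, your proposal is correct and is essentially the standard derivation one finds in those references. The affine change of variables to the unit ball, the Cauchy--Schwarz step giving $\|\tilde p\|\ge\kappa$ and hence slab half-width $\alpha\le\gamma/\kappa$, the symmetry/uniqueness argument forcing an axis-aligned L\"owner--John ellipsoid, and the one-variable optimization yielding $a^2=d\alpha^2$, $b^2=d(1-\alpha^2)/(d-1)$ are exactly the steps in Todd's analysis of the symmetric slab cut. Your final monotonicity check on $\alpha\mapsto\alpha(1-\alpha^2)^{(d-1)/2}$ over $[0,1/\sqrt d]$ is the clean way to pass from $\alpha$ to $\gamma/\kappa$.

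You have also correctly flagged that the displayed ratio in the lemma statement is inverted: the bound is on $\vol(B_+)/\vol(B)$, not $\vol(B)/\vol(B_+)$. This is consistent with how the paper actually uses the result (``$\vol(B_+)\le 0.6\,\vol(B_{i-1})$'') and with Corollary~\ref{cor:iteration}, whose left-hand side carries the same typo. Nothing in your argument needs to change; just state the inequality in the correct direction.
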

\begin{corollary}[Fact 4 of \cite{Jiang2017ContextualDP}]\label{cor:iteration}
When $\gamma/\kappa=1/3\sqrt{d}$, we have that 
\[
\frac{\vol(B)}{\vol(B_+)}\le 3/5.
\]
\end{corollary}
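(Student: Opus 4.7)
The plan is to obtain the corollary by direct substitution of $\gamma/\kappa = 1/(3\sqrt{d})$ into the bound provided by the preceding Lemma (with the understanding that the relevant ratio is $\vol(B_+)/\vol(B)$, consistent with how the corollary is invoked in the proof of Lemma~\ref{lem:iteration_complexity}). First I would verify the hypothesis: $\gamma/\kappa = 1/(3\sqrt d) \le 1/\sqrt d$, so the lemma applies.

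Next I would substitute. The prefactor $\sqrt d\cdot(\gamma/\kappa)$ collapses to $1/3$, and the remaining factor becomes
\[
\left(\frac{d}{d-1}\right)^{(d-1)/2}\left(1-\frac{1}{9d}\right)^{(d-1)/2}
= \left(\frac{d(9d-1)}{9d(d-1)}\right)^{(d-1)/2}
= \left(1+\frac{8}{9(d-1)}\right)^{(d-1)/2}.
\]
Thus
\[
\frac{\vol(B_+)}{\vol(B)} \le \frac{1}{3}\left(1+\frac{8}{9(d-1)}\right)^{(d-1)/2}.
\]

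The only real work is bounding the bracketed quantity. I would apply the elementary inequality $1+x \le e^{x}$ to obtain
\[
\left(1+\frac{8}{9(d-1)}\right)^{(d-1)/2} \le \exp\!\left(\frac{d-1}{2}\cdot\frac{8}{9(d-1)}\right) = e^{4/9}.
\]
Since $e^{4/9} < 1.56$, this gives $\vol(B_+)/\vol(B) \le e^{4/9}/3 < 0.52 < 3/5$, establishing the claim for $d\ge 2$. The boundary case $d=1$ should be handled separately: the exponent $(d-1)/2=0$ makes the bracketed factor equal to $1$, so the bound becomes $1/3 \le 3/5$ trivially.

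The main obstacle is essentially bookkeeping rather than mathematics: one must match up the orientation of the ratio between the lemma statement and the corollary (the lemma as stated reads $\vol(B)/\vol(B_+)$, but its proof and the intended usage in Lemma~\ref{lem:iteration_complexity} both require the reciprocal orientation, so the corollary is properly interpreted as giving $\vol(B_+)/\vol(B) \le 3/5$). Beyond that, the chain $1/3 \cdot e^{4/9} < 3/5$ is a single numerical check, so the proof fits in a few lines and requires no further ingredients.
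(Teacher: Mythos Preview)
Your proof is correct. The algebra is right: substituting $\gamma/\kappa = 1/(3\sqrt{d})$ collapses the prefactor to $1/3$, the remaining factor simplifies to $\bigl(1+\tfrac{8}{9(d-1)}\bigr)^{(d-1)/2}$, and the bound $1+x\le e^x$ gives $e^{4/9}/3 < 3/5$. Your handling of $d=1$ is fine, and your identification of the reversed ratio in the statement (it should read $\vol(B_+)/\vol(B)$, as confirmed by its use in Lemma~\ref{lem:iteration_complexity}) is also correct.

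There is nothing to compare against: the paper does not give its own proof of this corollary but simply cites it as Fact~4 of \cite{Jiang2017ContextualDP}. Your argument is the standard one-line derivation from the preceding lemma and is exactly what one would expect.
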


\section{Omitted proofs in Section \ref{sec:analysis}}\label{app:analysis}

\subsection{Proof of Lemma~\ref{lem:main_identify}}
We condition on the event $\mathcal{Z}$ throughout the proof.

\paragraph{Proof of statement (\ref{item:identify-a}).}
We prove by induction on $\textsc{step}$ that, whenever at Line~\ref{line:identify_while} of \Identify (i.e., before the condition that $|\supp(G)|>1$ is checked), we have that 
\begin{align}\label{eq:proof-identify-a-1}
\left|\sum_{h'=h+1}^{H}\ME(G,h')\right|\ge (H-h)\left(\epsilon_{k+1}-\textsc{step}\cdot \epsilon'_{k+2}\right).
\end{align}

The base case is that when $\textsc{step}=0$. The condition $\mathcal{Q}_{\textsc{ID}}(G, h, k)$ implies that 
\[
\left|\sum_{h'=h+1}^{H}\ME_{k}(G,h')\right|>(H-h)\epsilon_{k+1}.
\]


Now suppose Eq.~\eqref{eq:proof-identify-a-1} is true for $\textsc{step}=s$. We prove the same equation for $\textsc{step}=s+1$. First consider the case when  $c=\textsc{true}$ at Line~\ref{line:identify_while} with $\textsc{step} = s + 1$. In this case, Line~\ref{line:case_1} was executed in the previous iteration of the \textbf{while}-loop. The \textbf{if}-condition at Line~\ref{line:identify_if_1} implies that after executing Line~\ref{line:case_1}, we have
\[
\left|\sum_{h'=h+1}^{H}\hat{\ME}_{k}(G,h')\right|>(H-h)\left(\epsilon_{k+1}-((s + 1) -0.5)\epsilon'_{k+2}\right).
\]
By the event $\mathcal{Z}$ (more specifically, Eq.~\eqref{equ:conc10}),  we have that
\[
\left|\sum_{h'=h+1}^{H}\ME(G,h')\right|>(H-h)\left(\epsilon_{k+1}-(s+1) \epsilon'_{k+2}\right).
\]

Now consider the case when $c=\textsc{false}$ at Line~\ref{line:identify_while} with $\textsc{step} = s + 1$. In this case the \textbf{if}-condition did not hold for iteration $l=k$ in the previous \textbf{while}-loop, which implies that 
\[
\left|\sum_{h'=h+1}^{H}\hat{\ME}_{k}(G_1,h')\right|\le (H-h)\left(\epsilon_{k+1}-((s + 1)-0.5)\cdot \epsilon'_{k+2}\right).
\]
By the event $\mathcal{Z}$ (more specifically, Eq.~\eqref{equ:conc10}), we have that 
\begin{align}\label{eq:proof-identify-a-2}
\left|\sum_{h'=h+1}^{H}\ME(G_1,h')\right|\le (H-h)\left(\epsilon_{k+1}-s\cdot \epsilon'_{k+2}\right).
\end{align}

Let $\MF_1=\MF'$ and $\MF_2=\supp(G)\setminus \MF'$. It follows from the definition of $\ME(G,h')$ that for all $h'\in \{h+1,h+2,\cdots,H\}$,
\begin{equation}\ME(G,h')=G(\MF_1)\cdot \ME(G_1,h')+G(\MF_2) \cdot \ME(G_2,h'),\label{equ:average_bellman}\end{equation} where $G(\MF_i)=\Pr_{f \sim G} [f \in \MF_i]$, and we have $G(\MF_1)+G(\MF_2)=1$. Eq.~\eqref{equ:average_bellman} further implies that
\begin{align}\label{eq:proof-identify-a-4}
\left|\sum_{h'=h+1}^{H}\ME(G,h')\right| \leq (1 - G(\mathcal{F}_2)) \cdot \left|\sum_{h'=h+1}^{H}\ME(G_1,h')\right| + G(\mathcal{F}_2) \cdot \left|\sum_{h'=h+1}^{H}\ME(G_2,h')\right| .
\end{align}
Our induction hypothesis implies that
\begin{align}\label{eq:proof-identify-a-3}
\left|\sum_{h'=h+1}^{H}\ME(G,h')\right|\ge (H-h)\left(\epsilon_{k+1}-s \cdot \epsilon_{k+2}'\right).
\end{align}
Combining Eq.~\eqref{eq:proof-identify-a-2},  Eq.~\eqref{eq:proof-identify-a-3}, and  Eq.~\eqref{eq:proof-identify-a-4}, we have that
\[
\left|\sum_{h'=h+1}^{H}\ME(G_2,h')\right|\ge (H-h)\left(\epsilon_{k+1}- s \cdot \epsilon_{k+2}'\right)>(H-h)\left(\epsilon_{k+1}-(s + 1) \cdot \epsilon_{k+2}'\right),
\]
which establishes Eq.~\eqref{eq:proof-identify-a-1} since we let $G\gets G_2$.

Since the size of $\supp(G)$ is halved in every iteration, the \text{while}-loop terminates after at most $\lceil \log_2 |\MF|\rceil$ iterations. By Eq.~\eqref{eq:proof-identify-a-1}, when the \textbf{while}-loop ends, we have that 
\begin{equation}\label{equ:identify_induction}
\left|\sum_{h'=h+1}^{H}\ME(g,\pi_g,h')\right|=\left|\sum_{h'=h+1}^{H}\ME(G,h')\right|\ge (H-h)\left(\epsilon_{k+1}-\lceil \log_2 |\MF|\rceil\cdot \epsilon'_{k+2}\right)>(H-h)\left(\epsilon_{k+2}+\epsilon_{k+2}'\right).
\end{equation}

Now we prove that \Identify must return at Line~\ref{line:identify_return}. Eq.~\eqref{equ:identify_induction} implies that 
\[
\max_{h':h+1\le h'\le H}\left|\ME(g,\pi_g,h')\right|\ge \epsilon_{k+2}+\epsilon_{k+2}'.
\] 
It follows from the event $\mathcal{Z}$ (more specifically, Eq.~\eqref{equ:conc11}) that, 
\[
\max_{h':h+1\le h'\le H}\left|\tilde{\ME}_{k}(g,\pi_g,h')\right|\ge \epsilon_{k+2}+0.5\epsilon_{k+2}',
\]
which satisfies the \textbf{if}-condition at Line~\ref{line:identify_return_if} for the iteration $l=k$. The remaining claims of statement (\ref{item:identify-a}) directly follow from the description of the algorithm.

\paragraph{Proof of statement (\ref{item:identify-b}).}
When \Identify returns, the \textbf{if}-condition at Line~\ref{line:identify_return_if} is satisfied. I.e., 
\[
\left|\tilde{\ME}_{l}(g_r,\pi_{g_r},h_r)\right|>\epsilon_{l+2}+0.5\epsilon'_{l+2}.
\]
By  the event $\mathcal{Z}$ (more specifically, Eq.~\eqref{equ:conc11}), we have that
\[
\left|\ME(g_r,\pi_{g_r},h_r)\right|>\epsilon_{l+2},
\]
which proves statement (\ref{item:identify-b}).

\paragraph{Proof of statement (\ref{item:identify-c}).}
Note that when \Identify returns at iteration $l$, the \textbf{if}-condition at Line~\ref{line:identify_return_if} did not hold for iteration $(l-1)$. Thus, $\left|\tilde{\ME}_{l-1}(g,\pi_g,h')\right|\le \epsilon_{l+1}+0.5\epsilon_{l+1}'$ for all $h'\in \{h+1,h+2,\cdots, H\}.$ Combining with  the event $\mathcal{Z}$ (more specifically, Eq.~\eqref{equ:conc11}), we have that
\[
\left|\ME(g,\pi_g,h')\right|\le \left(\epsilon_{l+1}+\epsilon_{l+1}'\right)\le 1.5\epsilon_{l+1}=3\epsilon_l,\forall h'\in \{h+1,h+2,\cdots, H\}.
\]

\paragraph{Proof of statement (\ref{item:identify-d}).}
First we focus on the regret incurred by the \textbf{while}-loop starting from Line~\ref{line:identify_while}. Let $\pi = \pi_f$ where $f \sim G_1$ at Line~\ref{sample:6} of $\Identify(G,h,k)$. Since $\supp(G_1)\subseteq\supp(G)$, statement (\ref{item:identify-d}) in the condition $\mathcal{P}(G, h, \epsilon_k)$ implies that
\begin{equation}\label{equ:ident:key_condition_1}
\eta(f,g,h)\ge \eta(g,g,h)-(12H+4)\epsilon_k, \forall f\in \supp(G_1).
\end{equation}
Statement (\ref{item:calP-c}) in the condition $\mathcal{P}(G, h, \epsilon_k)$ implies that
\begin{equation}\label{equ:ident:key_condition_3}
\left|\E_{x'\sim \MD_{g,h+1}}\left[V^{\pi_g}_{h+1}(x')\right]-\E_{x'\sim \MD_{g,h+1}}\left[g(x',\pi_f(x'))\right]\right|\le 6H\epsilon_k,
\end{equation}

If the algorithm proceeds to iteration $l$, the \textbf{if}-condition at Line~\ref{line:identify_if_1} did not hold at iteration $(l-1)$, which means that $\left|\sum_{h'=h+1}^{H}\hat{\ME}_{l-1}(G_1,h)\right|\le (H-h)\left(\epsilon_{l}-(\textsc{step}-0.5)\cdot \epsilon'_{l+1}\right)\le (H-h)\epsilon_{l}.$ By the event $\mathcal{Z}$ (more specifically, Eq.~\eqref{equ:conc10}) and Eq.~\eqref{equ:expected_bellman_error}, we have that
\begin{equation}\label{equ:ident:key_condition_2}
\left|\E_{f\sim G_1}\E_{x'\sim \MD_{f,h+1}}\left[V^{\pi_f}_{h+1}(x')\right]-\E_{f\sim G_1}\E_{x'\sim \MD_{f,h+1}}\left[f(x',\pi_f(x'))\right]\right|\le 6H\epsilon_{l}.
\end{equation}

Note that $l\le k$. For the distribution $G_1$, Eq.~\eqref{equ:ident:key_condition_1}, \eqref{equ:ident:key_condition_2}, and Eq.~\eqref{equ:ident:key_condition_3} establish statement (\ref{item:key-a}), (\ref{item:key-b}), and (\ref{item:key-c}) in Lemma~\ref{lem:sub_key} respectively, where $C_1=(12H+4)\epsilon_{l}, C_2=C_3=6H\epsilon_{l}.$ Therefore we get,
\[
\E [V^{\pi}] \ge V^{\pi_g}-(24H+4)\epsilon_{l}.
\]
Combining with statement (\ref{item:calP-b}) in the condition $\mathcal{P}(G, h, \epsilon_k)$, we deduce that $\E[V^{\pi}] \ge V^*-(24H+4)H\epsilon_l$. Therefore, the expected regret incurred by a single iteration of the \textbf{while}-loop starting from Line~\ref{line:identify_while} is upper bounded by
\[
\sum_{l=1}^{k}n_{l}^{\text{id}}(24H+4)H\epsilon_l\lesssim H^2\ln^2|\MF|\ln(P/\delta)/\epsilon_k.
\]
As shown before, the \textbf{while}-loop terminates after $\lceil\log_2 |\MF|\rceil$ steps. Thus the total regret incurred by the \textbf{while}-loop is upper bounded by $O\left(H^2\ln^3|\MF|\ln(P/\delta)/\epsilon_k\right).$

Similarly, we can upper bound the regret for policy $\pi_{g_r}$ by
$O\left(H^2\ln^2|\MF|\ln(P/\delta)/\epsilon_k\right)$. Combining the two parts together, we prove statement (\ref{item:identify-d}) for Lemma~\ref{lem:main_identify}.

\subsection{Proof of Lemma~\ref{lem:main_check}}
We condition on the event $\mathcal{Z}$ throughout the proof.

\paragraph{Proof of statement (\ref{item:lem-main-check-a}).}
When $\Chk(G,h,j)$ returns \textsc{true}, for the iteration $k=j$ we have 
\[
\left|\sum_{h'=h+1}^{H}\hat{\ME}_{k}(G,h')\right|\le (H-h)\epsilon_j.
\]
Therefore, together with the event $\mathcal{Z}$ (more specifically, Eq.~\eqref{equ:conc1}), we have that
\[
\left|\sum_{h'=h+1}^{H}\ME(G,h')\right|\le 1.5(H-h)\epsilon_j.
\]

\paragraph{Proof of statement (\ref{item:lem-main-check-b}).}
When $\Chk(G,h,j)$ returns $(\textsc{false},g_r,h_r,k_r)$ during the $k$-th iteration, the tuple $(g_r,h_r,k_r)$ is returned from $\Identify(G,h,k)$. By the \textbf{if}-condition at Line~\ref{line:check_if} we get
\[
\left|\sum_{h'=h+1}^{H}\hat{\ME}_{k}(G,h')\right|\ge \epsilon_k.
\]
Combining with the event $\mathcal{Z}$ (more specifically, Eq.~\eqref{equ:conc1}), we have that
\[
\left|\sum_{h'=h+1}^{H}\ME(G,h')\right|\ge 0.5\epsilon_k=\epsilon_{k+1},
\]
which verifies the condition $\mathcal{Q}_{\textsc{ID}}(G, h, k).$ Then, statement (\ref{item:lem-main-check-b}) follows from Lemma~\ref{lem:main_identify}.

\paragraph{Proof of statement (\ref{item:lem-main-check-c}).}
As shown above, when $\Chk(G,h,j)$ calls $\Identify(G,h,k)$, the condition $\mathcal{Q}_{\textsc{ID}}(G, h, k)$ holds. Note that $k\le j$. By Lemma~\ref{lem:main_identify} (\ref{item:identify-d}), the regret incurred by calling \Identify is upper bounded by 
\begin{align} \label{eq:lem-main-check-c-1}
O(H^2\ln^3(|\MF|)\ln(P/\delta)/\epsilon_j) .
\end{align}

We now focus on the regret incurred by Line~\ref{line:check_mix_run} during each iteration. Suppose we are at the $k$-th iteration. Let $\pi = \pi_{f}$ where $f \sim G$.  By statement (\ref{item:calP-a}) in the condition $\mathcal{P}(G, h, \epsilon_j)$, $G$ has the form $G=\{(g\circ_h f, P(f))\}_{f\in \MF}$ for some distribution $P$ and policy $g$. Statement (\ref{item:calP-d}) in the condition $\mathcal{P}(G, h, \epsilon_j)$ implies that
\begin{equation}\label{equ:check:key_condition_1}
\eta(f,g,h)\ge \eta(g,g,h)-(12H+4)\epsilon_j, \forall f\in \supp(G).
\end{equation}
Statement (\ref{item:calP-c}) in the condition $\mathcal{P}(G, h, \epsilon_j)$ implies that
\begin{equation}\label{equ:check:key_condition_3}
\left|\E_{x'\sim \MD_{g,h+1}}\left[V^{\pi_g}_{h+1}(x')\right]-\E_{x'\sim \MD_{g,h+1}}\left[g(x',\pi_f(x'))\right]\right|\le 6H\epsilon_j,
\end{equation}

Since \textbf{if}-condition at iteration $(k-1)$ was not met, we have that
\[
\left|\sum_{h'=h+1}^{H}\hat{\ME}_{k}(G,h')\right|\le (H-h)\epsilon_{k-1}.
\]
Combining with the event $\mathcal{Z}$ (more specifically, Eq.~\eqref{equ:conc1}), we have that
\[
\left|\sum_{h'=h+1}^{H}\ME(G,h')\right|\le 1.5(H-h)\epsilon_{k-1}.
\]

It follows from Lemma~\ref{lem:expected_bellman_error} that
\begin{equation}\label{equ:check:key_condition_2}
\left|\E_{f\sim G}\E_{x'\sim \MD_{f,h+1}}\left[V^{\pi_f}_{h+1}(x')\right]-\E_{f\sim G}\E_{x'\sim \MD_{f,h+1}}\left[f(x',\pi_f(x'))\right]\right|\le 1.5(H-h)\epsilon_{k-1}\le 3(H-h)\epsilon_{k}.
\end{equation}

Note that $k\le j$. The assumptions for Lemma~\ref{lem:sub_key} follow from Eq.~\eqref{equ:check:key_condition_1}, Eq.~\eqref{equ:check:key_condition_2}, and Eq.~\eqref{equ:check:key_condition_3} respectively, with $C_1=(12H+4)\epsilon_k, C_2=3(H-h)\epsilon_k$ and $C_3=6H\epsilon_k.$ Therefore, we have $\E [V^{\pi}] \ge V^{\pi_g}-(21H+4)\epsilon_k$. Combining with statement (\ref{item:calP-b}) in the condition $\mathcal{P}(G, h, \epsilon_j)$, we have that
\[
\E[V^{\pi}] \ge V^{*}-(24H+4)(H+1)\epsilon_k.
\]

Therefore, the expected regret incurred by Line~\ref{line:check_mix_run} during iteration $k$ is upper bounded by 
\[
n_{k}^{\text{eval}}(24H+4)(H+1)\epsilon_k\lesssim H^2\ln(P/\delta)/\epsilon_k.
\]
It follows that the overall regret introduced Line~\ref{line:check_mix_run} is upper bounded by
\begin{align} \label{eq:lem-main-check-c-2}
O\left(\sum_{k=1}^{j}H^2\ln(P/\delta)/\epsilon_k\right) \lesssim H^2\ln(P/\delta)/\epsilon_k,
\end{align}
Statement (\ref{item:lem-main-check-c}) follows by combining the regret bounds in Eq.~\eqref{eq:lem-main-check-c-1} and Eq.~\eqref{eq:lem-main-check-c-2}.

\section{Extension to infinite hypothesis space}\label{app:extenstion}
Suppose the hypothesis $\Pi$ has finite Natarajan dimension $d_\Pi$, and $\MV$ has finite 
Pseudo dimension $d_{\MV}.$ Then, the parameter of our algorithm is set as following, where $d:= 6(d_{\Pi}+d_{\MV})\ln(2eA(d_\Pi+d_{\MV})).$
\begin{align*}
n_{i}^{\text{eval}}&:=\frac{c_1\ln \left(14L^2C/\delta\right)}{\epsilon_i^2},\\
n_{i}^{\text{cb}}&:=\frac{c_5A(d\ln(A/\epsilon_i)+\ln \left(140L^2C/\delta\right))}{\epsilon_i^2},\\
n_{i}&:=\frac{c_6AM(d(\ln(A/\epsilon_i)+\ln \left(140L^2C/\delta\right))}{\epsilon_i ^2},\\
n_{i}^{\text{id}}&:=\frac{c_7\lceil \log_2(4\ln(1/A\mu_k)/\mu_k)\rceil^2\ln \left(14L^2C/\delta\right)}{\epsilon_i^2},
\end{align*}
where $c_i\;(5\le i\le 7)$ are large enough universal constants.

\subsection{Uniform convergence}
In this section, we show that the high probability event $\MZ$ holds with probability at least $1-\delta.$

Note that Lemmas~\ref{lem:conc_main}, \ref{lem:conc1}, \ref{lem:conc10}, \ref{lem:conc_ident_1} holds for infinite hypothesis without modification. Now we present the proof of Lemma~\ref{lem:eliminate_eta} for the infinite hypothesis setting. The followings are standard results in statistical learning literature.

\begin{definition}[Covering number] For a hypothesis class $\MH:\MX\to \R$, and any $\epsilon>0$, we say a set $\MC\subseteq \MH$ is a \emph{proper} $\epsilon$-covering set for $X=\{x_1,x_2,\cdots,x_n\}\in \MX^n,$ for any $g\in \MH$, there exists $g'\in \MC$, such that $\frac{1}{n}\sum_{i=1}^{n}|g(x_i)-g'(x_i)|\le \epsilon.$ The covering number is defined as,$$\MN(\epsilon,\MH,X):= \min\nolimits_{\MC:\MC\text{ is a proper $\epsilon$-covering set for }X}|\MC|.$$
\end{definition}
We also define $\MN(\epsilon,\MH,n):=\max_{X\in \MX^n}\MN(\epsilon,\MH,X).$

\begin{lemma}[Covering number for hypothesis with finite pseudo dimension \cite{Haussler1995SpherePN}] For a hypothesis $\MH\subseteq \R^{\MX}$ with pseudo dimension $d$, we have
$$\MN(\epsilon,\MH,n)\le e(d+1)\left(\frac{2e}{\epsilon}\right)^d\le \left(\frac{4e^2}{\epsilon}\right)^d.$$
\end{lemma}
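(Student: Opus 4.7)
Since this is Haussler's classical covering-number bound, the plan is to mirror his probabilistic packing argument. I would begin by fixing an arbitrary $X=(x_1,\dots,x_n)\in\MX^n$, setting $N:=\MN(\epsilon,\MH,X)$, and choosing a maximal empirical $\epsilon$-packing $f_1,\dots,f_N\in\MH$ under the normalized $L^1$ metric on $X$; since a maximal packing is automatically an $\epsilon$-covering (of possibly suboptimal size), it suffices to upper bound the packing number $N$.

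The key randomization step is to draw thresholds $t_1,\dots,t_n$ independently and uniformly from $[0,1]$ (assuming without loss of generality that $\MH\subseteq[0,1]^\MX$ after rescaling), and associate to each $f\in\MH$ the binary pattern $\sigma(f):=(\mathbb{I}[f(x_i)\ge t_i])_{i=1}^n\in\{0,1\}^n$. For any pair $j\neq k$, the expected Hamming distance between $\sigma(f_j)$ and $\sigma(f_k)$ equals $\sum_i|f_j(x_i)-f_k(x_i)|>n\epsilon$, and a Chernoff bound together with a union bound over the $\binom{N}{2}$ pairs shows that, with positive probability, all $N$ patterns $\sigma(f_j)$ are distinct. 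The combinatorial input then enters: the subgraph class $\{(x,t)\mapsto\mathbb{I}[f(x)\ge t]:f\in\MH\}$ has VC dimension equal to the pseudo dimension $d$ of $\MH$, so the Sauer-Shelah lemma applied to the $n$ sample points $(x_i,t_i)$ gives at most $\sum_{k=0}^d\binom{n}{k}\le(en/d)^d$ distinct patterns, yielding an initial bound of the form $N\lesssim(en/d)^d$.

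The main obstacle is that the bound just obtained grows with $n$, whereas the statement is $n$-free. Haussler's resolution is a double-sampling (``ghost sample'') refinement: one considers a second independent sample of size $n$, discretizes each $f(x_i)$ to a grid of resolution $O(\epsilon)$, and then argues that the effective ``sample size per surviving coordinate'' is $O(1/\epsilon)$, thereby replacing the $en/d$ factor by $O(1/\epsilon)$ inside the exponentiation. I expect this sharpening, which essentially trades sample size for scale, to be the technically delicate step, since one must simultaneously control the packing and the discretization error without losing the combinatorial savings from Sauer-Shelah. Once the $(2e/\epsilon)^d$ factor is in place, the prefactor $e(d+1)$ arises from the standard refinement $\sum_{k=0}^d\binom{n}{k}\le (d+1)\cdot(en/d)^d$ together with cleanup constants, and the second inequality $e(d+1)(2e/\epsilon)^d\le(4e^2/\epsilon)^d$ reduces to the routine check $e(d+1)\le(2e)^d$ for $d\ge 1$. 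Taking the supremum over $X\in\MX^n$ converts the sample-dependent bound into the bound on $\MN(\epsilon,\MH,n)$ stated in the lemma.
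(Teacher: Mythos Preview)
The paper does not prove this lemma at all: it is quoted verbatim as a classical result of Haussler and simply cited, with no accompanying argument. So there is no ``paper's proof'' to compare against; your proposal already goes further than the authors by attempting to reconstruct the argument.

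As for the sketch itself, the overall shape---replace covering by maximal packing, randomize thresholds to pass to the binary subgraph class, and invoke Sauer--Shelah via the identification of pseudo dimension with the VC dimension of the subgraph class---is faithful to the standard route. The one place where your description drifts is the step you yourself flag as delicate: the removal of the $n$-dependence is not really a ``double-sampling / ghost sample'' argument. In Haussler's paper the $n$-free bound $e(d+1)(2e/\epsilon)^d$ is the main theorem, proved by a direct combinatorial sphere-packing result for subsets of the Boolean $n$-cube of bounded VC dimension (via a probabilistic/shifting argument), and the real-valued case is obtained by discretizing the range into $O(1/\epsilon)$ levels and applying that Boolean packing bound. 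Symmetrization with a ghost sample is the device used for uniform deviation inequalities (as in Lemma~\ref{lem:covering} of this paper), not for the packing bound itself. If you want to actually carry out the proof rather than cite it, that Boolean packing lemma is the ingredient you would need to supply; the rest of your outline, including the final elementary inequality $e(d+1)\le (2e)^d$ for $d\ge 1$, is fine.
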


\begin{lemma}[Lemma 21 of \cite{Jiang2017ContextualDP}]\label{lem:lemma21} Let $\MY$ be label space with $|\MY|=A.$ Suppose hypothesis $\Pi\subseteq \MY^{\MX}$ has Natarajan dimension $d_\Pi$ and $\MV\subseteq [0,1]^{\MX}$ has Pseudo dimension $d_{\MV},$ where $d_{\Pi}\ge 6$ and $d_{\MV}\ge 6.$ Then the hypothesis $\MH=\{(x,a,x')\to \mathbb{I}[a=\pi(x)]v(x'):\pi\in \Pi,v\in \MV\}$ has pseudo dimension $dim_P(\MH)\le 6(d_{\Pi}+d_{\MV})\ln(2eA(d_\Pi+d_{\MV}).$
\end{lemma}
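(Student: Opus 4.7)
}
The plan is the standard growth-function argument: I will fix an arbitrary candidate shattered set of size $n$, upper bound the number of sign patterns that $\MH$ can realize on it, and then show this bound is strictly less than $2^n$ once $n>6(d_{\Pi}+d_{\MV})\ln(2eA(d_{\Pi}+d_{\MV}))$, forcing $dim_P(\MH)\le 6(d_{\Pi}+d_{\MV})\ln(2eA(d_{\Pi}+d_{\MV}))$.

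Fix $n$ points $\{(x_i,a_i,x_i')\}_{i=1}^n$ and an arbitrary witness vector $\xi\in\R^n$. For any $(\pi,v)\in \Pi\times \MV$ the induced sign pattern $\sigma_i=\mathbb{I}[\mathbb{I}[a_i=\pi(x_i)]\,v(x_i')\ge \xi_i]$ splits along the binary vector $\beta_{\pi}:=(\mathbb{I}[a_i=\pi(x_i)])_{i=1}^n$: on coordinates with $\beta_{\pi,i}=0$ the sign is forced by $\xi_i$ alone, while on coordinates with $\beta_{\pi,i}=1$ the sign equals $\mathbb{I}[v(x_i')\ge \xi_i]$. Thus the number of realizable sign patterns is at most
\[
|\{\beta_{\pi}:\pi\in\Pi\}|\cdot \max_{\beta\in\{0,1\}^n}|\{(\mathbb{I}[v(x_i')\ge \xi_i])_{i:\beta_i=1}:v\in \MV\}|.
\]
The first factor is controlled by Natarajan's lemma applied to $\Pi$: since $\beta_{\pi}$ is determined by $(\pi(x_1),\dots,\pi(x_n))$, the number of distinct $\beta_\pi$ is at most $n^{d_{\Pi}}\binom{A}{2}^{d_{\Pi}}\le (nA^2)^{d_{\Pi}}$. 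The second factor is controlled by Sauer--Shelah on the subgraph class of $\MV$, which has VC dimension equal to $dim_P(\MV)\le d_{\MV}$; the growth function on any subset of at most $n$ points is bounded by $(en/d_{\MV})^{d_{\MV}}$.

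Multiplying the two, the number of realizable sign patterns on our $n$ points is at most $(nA^2)^{d_{\Pi}}(en/d_{\MV})^{d_{\MV}}\le (enA^2)^{d_{\Pi}+d_{\MV}}$ (using $d_{\MV}\ge 1$). Shattering demands this be at least $2^n$, i.e. $(d_{\Pi}+d_{\MV})\ln(enA^2)\ge n\ln 2$. Writing $d:=d_{\Pi}+d_{\MV}$ and $n=6d\ln(2eAd)$, a direct calculation (using $d\ge 12$ from the hypotheses $d_{\Pi},d_{\MV}\ge 6$, and $\ln(enA^2)\le \ln(6edA^2\ln(2eAd))\le 3\ln(2eAd)$ for this regime) gives $d\ln(enA^2)\le 3d\ln(2eAd)<6d\ln(2eAd)\cdot\ln 2 = n\ln 2$ contradicting shattering. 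Hence $dim_P(\MH)\le 6d\ln(2eAd)$.

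The main obstacle is the last transcendental inequality: verifying that the universal constant $6$ suffices for all $A\ge 1$ and all $d_{\Pi},d_{\MV}\ge 6$. I would handle it by proving the helper inequality $\ln(enA^2)\le 3\ln(2eAd)$ whenever $n\le 6d\ln(2eAd)$, which reduces to the elementary estimate $\ln(6ed\ln(2eAd))\le 2\ln(2eAd)$; this in turn follows from $\ln\ln x\le \tfrac{1}{2}\ln x$ for $x\ge 2e^2$ and the assumption $d\ge 12$. Modulo this elementary bookkeeping the argument is entirely standard growth-function combinatorics, and the hypotheses $d_{\Pi},d_{\MV}\ge 6$ in the lemma statement are precisely what makes the constants work out cleanly.
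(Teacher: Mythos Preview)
The paper does not give its own proof of this lemma; it is quoted directly as Lemma~21 of \cite{Jiang2017ContextualDP} and used as a black box. Your growth-function argument is the standard route and is correct: factor each sign pattern through the binary mask $\beta_\pi=(\mathbb{I}[a_i=\pi(x_i)])_i$, bound the number of masks by Natarajan's lemma on $\Pi$, bound the number of threshold patterns on the active coordinates by Sauer--Shelah applied to the subgraph class of $\MV$ (whose VC dimension equals $dim_P(\MV)$), and then solve the resulting inequality $d\ln(enA^2)\ge n\ln 2$ for the largest admissible $n$. This is exactly the strategy used in the cited reference, so there is no methodological difference to report.

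The only place to be careful is the final transcendental bookkeeping. Your reduction to $\ln(6ed\ln(2eAd))\le 2\ln(2eAd)$ and then to $\ln\ln x\le \tfrac12\ln x$ is fine, but note that the latter in fact holds for all $x>1$ (it is equivalent to $(\ln x)^2\le x$), so you do not need the side condition $x\ge 2e^2$. With $d=d_\Pi+d_{\MV}\ge 12$ the chain $d\ln(enA^2)\le 3d\ln(2eAd)<(6\ln 2)\,d\ln(2eAd)=n\ln 2$ goes through cleanly, and since $n\mapsto n\ln 2-d\ln(enA^2)$ is increasing once $n>d/\ln 2$, the contradiction persists for all larger $n$. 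So the hypotheses $d_\Pi,d_{\MV}\ge 6$ are, as you say, exactly what makes the constant $6$ work.
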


\begin{proof}(Lemma~\ref{lem:eliminate_eta} for infinite hypothesis setting)\textbf{. }Consider when $\Eliminate(g,h,j)$ is at at iteration $k$. Note that given distribution $W'_{P_k}(x,a)$, $\tilde{\eta}$ is defined as $$\tilde{\eta}_k((\pi,v),\pi',h)=\frac{1}{n_k^{\text{cb}}}\sum_{p=1}^{n_k^{\text{cb}}}(r^p_h+v(x^p_{h+1}))\frac{\mathbb{I}[\pi(x^p_h)=a^p_h]}{W'_{P_k}(x_h^P,a_h^P)},$$ which, by importance sampling, is an unbiased empirical estimation of 
$$\eta((\pi,v),\pi',h)=\E_{x\sim \MD_{\pi',h}}\E_{x'\sim p(\cdot\mid x,\pi(x))}\left[\left(r(x,a)+v(x')\right)\right].$$
Define $$g_{\pi,v}(x,a,x'):= v(x')\frac{\mathbb{I}[\pi(x)=a]}{W'_{P_k}(x,a)}.$$ By Lemma~\ref{lem:lemma21}, hypothesis $\MH=\{g_{\pi,v}:\pi\in \Pi,v\in \MV\}$ has pseudo dimension at most $$dim_P(\MH)\le d:= 6(d_{\Pi}+d_{\MV})\log(2eA(d_\Pi+d_{\MV})).$$ Now, since $W'_{P_k}(x,a)\ge \mu_k\ge \epsilon_k/A,$ we have $g(x,a,x')\le A/\epsilon_k$. Under the desired event of Lemma~\ref{lemma:find-distribution-succeed}, we have $\Var[g(x,a,x')]\le 110A$. Invoking Corollary~\ref{cor:bernstein_uniform_convergence}, by setting 
\begin{align*}
n_{k}^{\text{cb}}&=\frac{c_5A}{\epsilon_k^2}\left(d\ln\left(\frac{A}{\epsilon_k}\right)+\ln(140L^2C/\delta)\right),
\end{align*}
for some large enough constant $c_5$, we have, with probability at least $1-\delta/14$, for the first $LC$ times that $\Eliminate(g,h,j)$ is called
\begin{equation}\label{eq:lem-eliminate_eta_1}
\left|\frac{1}{n_k^{\text{cb}}}\sum_{p=1}^{n_k^{\text{cb}}}v(x_{h+1}^{p})\frac{\mathbb{I}[\pi(x_h^{p})=a_h^{p}]}{W'_{P_k}(x_h^p,a_h^p)}-\E_{x\sim \MD_{\pi',h}}\E_{x'\sim p(\cdot\mid x,\pi(x))}[v(x')]\right|<\epsilon_k/4,
\end{equation}
for all $(\pi,v)\in \Pi\times \MV, 1\le k\le j.$ Similarly, by applying Corollary~\ref{cor:bernstein_uniform_convergence} on hypothesis $g_{\pi,v}(x,a,r):= r\cdot \frac{\mathbb{I}[\pi(x)=a]}{W'_{P_k}(x,a)},$ we have with probability at least $1-\delta/14,$ for the first $LC$ times that $\Eliminate(g,h,j)$ is called
\begin{equation}\label{eq:lem-eliminate_eta_2}
\left|\frac{1}{n_k^{\text{cb}}}\sum_{p=1}^{n_k^{\text{cb}}}r_h^p\frac{\mathbb{I}[\pi(x_h^{p})=a_h^{p}]}{W'_{P_k}(x_h^p,a_h^p)}-\E_{x\sim \MD_{\pi',h}}[r(x,\pi(x))]\right|<\epsilon_k/4,
\end{equation}for all $(\pi,v)\in \Pi\times \MV, 1\le k\le j.$ 
Combining Eq.~\eqref{eq:lem-eliminate_eta_1} and Eq.~\eqref{eq:lem-eliminate_eta_2},  we get Lemma~\ref{lem:eliminate_eta}.
\end{proof}
Similarly, by setting $$n_{k}=\frac{c_6AM}{\epsilon_k^2}\left(d\ln\left(\frac{A}{\epsilon_k}\right)+\ln(140L^2C/\delta)\right)$$ for some large enough constant $c_6,$ Lemma~\ref{lem:eliminate_phi} holds for the infinite hypothesis setting.

\subsection{Regret analysis}
In this section we prove Theorem~\ref{thm:main-infinite}.
\begin{proof}[Proof sketch of Theorem~\ref{thm:main-infinite}]
The proof of Theorem~\ref{thm:main-infinite} is exactly the same as proof of Theorem~\ref{thm:main}, except for changing the value of parameters. It can be shown in the same way that,
\begin{itemize}
\item the regret incurred by one invocation of $\Identify(G,h,k)$ is bounded by
\[
O(H^2\ln^3(P)\ln(P/\delta)/\epsilon_k);
\]
\item the regret incurred by one invocation of $\Chk(G,h,j)$ is bounded by
\[
O(H^2\ln^3(P)\ln(P/\delta)/\epsilon_j);
\]
\item the regret incurred by one invocation of $\Eliminate(g,h,j)$ is bounded by
\[
c_{\rm ELIM} MAH^2(\ln^3(P)+d\ln(P))\ln(P/\delta)/\epsilon_j.
\]
\end{itemize}

Therefore, the overall regret our algorithm is bounded by,
\[
{O}\left(M^2AH^4 \ln(P)(\ln^3(P)+6(d_{\Pi}+d_{\MV})\ln(2eA(d_\Pi+d_{\MV}))\ln(P))\ln(P/\delta)/\epsilon+n\epsilon\right) .
\]
\end{proof}

\section{Probabilistic tools}
In this section we provide some probabilistic tools that are used in the proof.

The following lemma is a classical result of uniform convergence. 
\begin{lemma}[Theorem 29.1 of \cite{devroye2013probabilistic}, \cite{pollard2012convergence}]\label{lem:covering}
Let $\MG\subset [0,b]^{\MZ}$ be a function class and $\MD$ a distribution over $\MZ$, where $|g(z)|\le b,\forall g\in \MG,z\in \MZ$. Let $\{z_1,z_2,\cdots,z_n\}\sim \MD^n$ be $n$ i.i.d. samples. For any $n$ and $\epsilon>0$, 
\begin{equation}
\Pr\left\{\sup_{g\in \MG}\left|\frac{1}{n}\sum_{i=1}^{n}g(z_i)-\E_{z\sim \MD}[g(z)]\right|>\epsilon\right\}\le 8\MN(\epsilon/8,\MG,n)\exp\left(-\frac{n\epsilon^2}{128b^2}\right).
\end{equation}
\end{lemma}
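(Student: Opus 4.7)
The plan is to prove this by the classical symmetrization-plus-covering argument of Vapnik--Chervonenkis and Pollard. Write $\hat E_n g = \frac{1}{n}\sum_i g(z_i)$ and $Eg = \E_{z\sim\MD}[g(z)]$, and denote the target event by $A$. Observe first that if $n < 2b^2/\epsilon^2$, the stated upper bound already exceeds $1$ and the lemma is vacuous; so I can restrict to $n \ge 2b^2/\epsilon^2$.

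First, I would symmetrize with a ghost sample. Let $z'_1,\dots,z'_n$ be i.i.d.\ from $\MD$ and independent of the $z_i$, with empirical mean $\hat E'_n g$. Since $g\in[0,b]$, $\Var(\hat E'_n g)\le b^2/(4n)$, so Chebyshev gives $\Pr\{|\hat E'_n g - Eg|\ge \epsilon/2\}\le b^2/(n\epsilon^2)\le 1/2$. Conditioning on a $g$ that witnesses the supremum yields the standard inequality
\[
\Pr(A)\;\le\;2\,\Pr\left\{\sup_{g\in\MG}\bigl|\hat E_n g-\hat E'_n g\bigr|>\epsilon/2\right\}.
\]

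Second, I would introduce Rademacher random signs. Conditioning on $S=(z_1,\dots,z_n,z'_1,\dots,z'_n)$, let $\sigma_1,\dots,\sigma_n$ be i.i.d.\ uniform on $\{\pm 1\}$; since $z_i\leftrightarrow z'_i$ swaps preserve the joint law, for every $g$ the difference $\hat E_n g-\hat E'_n g$ has the same distribution as $\tfrac{1}{n}\sum_i \sigma_i(g(z_i)-g(z'_i))$. Next, fixing $S$, take a proper $(\epsilon/8)$-cover $\MC\subseteq\MG$ of $\MG$ in the empirical $L_1$ metric on $S$, of cardinality at most $\MN(\epsilon/8,\MG,2n)$. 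For any $g\in\MG$ choose $g^\ast\in\MC$ with $\tfrac{1}{2n}\sum_i(|g(z_i)-g^\ast(z_i)|+|g(z'_i)-g^\ast(z'_i)|)\le\epsilon/8$; then by the triangle inequality the inner supremum at $g$ and at $g^\ast$ differ by at most $\epsilon/4$, so it suffices to control $\sup_{g^\ast\in\MC}|\tfrac{1}{n}\sum_i\sigma_i(g^\ast(z_i)-g^\ast(z'_i))|>\epsilon/4$.

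Third, for each fixed $g^\ast\in\MC$ the summands $\sigma_i(g^\ast(z_i)-g^\ast(z'_i))$ are conditionally independent, mean zero and bounded in $[-b,b]$, so Hoeffding's inequality gives $\Pr_\sigma\{\cdot>\epsilon/4\mid S\}\le 2\exp(-n\epsilon^2/(128 b^2))$. A union bound over $\MC$, followed by taking expectation in $S$, combined with the symmetrization factor of $2$, produces the overall factor of $8$ in front and yields the stated inequality.

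The main obstacle will be reconciling the covering number at $2n$ (natural from the ghost-sample symmetrization) with the $\MN(\epsilon/8,\MG,n)$ appearing in the claim: one route is to apply a random-permutation symmetrization that splits a single $n$-sample into two halves (so that the cover is taken on a size-$n$ set), at the cost of trivial constants absorbed into the universal factor $128$; an alternative is to invoke the monotonicity $\MN(\epsilon,\MG,n)\le \MN(\epsilon,\MG,2n)$ in the opposite direction by a standard subsampling bound. Once this discretization subtlety is handled, the remaining work is pure bookkeeping of constants.
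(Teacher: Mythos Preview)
The paper does not supply its own proof of this lemma: it is quoted verbatim as Theorem~29.1 of Devroye--Gy\"orfi--Lugosi (and Pollard) and used as a black box. So there is nothing in the paper to compare against; your sketch is precisely the standard symmetrization/Rademacher/covering/Hoeffding argument that underlies the cited textbook result, and the steps you outline are correct.

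Your one reservation is the right one to flag. The ghost-sample symmetrization naturally produces a cover on the $2n$-point set $(z_1,\dots,z_n,z_1',\dots,z_n')$, hence a factor $\MN(\epsilon/8,\MG,2n)$ rather than $\MN(\epsilon/8,\MG,n)$. Since with the paper's definition $\MN(\epsilon,\MG,n)=\max_{X\in\MZ^n}\MN(\epsilon,\MG,X)$ one has $\MN(\epsilon,\MG,n)\le\MN(\epsilon,\MG,2n)$, this monotonicity goes the wrong way to recover the stated bound directly; your remark to that effect is accurate, and the ``subsampling'' fix you allude to is not a clean one-liner. In the cited source the inequality is in fact stated with the covering number evaluated on the doubled sample (or, equivalently, with a uniform-in-$n$ covering bound), so the discrepancy is a cosmetic misquotation in the paper rather than a gap in your argument. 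For the paper's purposes this is immaterial: the only downstream use is in the proof of Lemma~\ref{lem:bernstein_uniform_convergence}, where the covering number already appears at sample size $N\ge 2n$, so whether one writes $n$ or $2n$ here makes no difference to the final Corollary~\ref{cor:bernstein_uniform_convergence}.
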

Next lemma is an extension of the classical Bernstein inequality.
\begin{lemma}[Lemma 3.1 of \cite{massart1986rates}]\label{lem:Bernstein_without_replacement}
For any $N\ge 1$, let $w$ be an uniformly random permutation over $[N]$. For any $\xi\in \R^{N}$, define 
$$S_N=\sum_{i=1}^{n}\xi_i,\quad \tilde{S}_n=\sum_{i=1}^{n}\xi_{w(i)},\quad \sigma_N^2=\left(\frac{1}{N}\sum_{i=1}^{N}\xi_i^2\right)-\left(\frac{1}{N}\sum_{i=1}^{N}\xi_i\right)^2,$$ and $U_N=\max_{1\le i\le N}\xi_i-\min_{1\le i\le N}\xi_i.$ Then for any $\epsilon>0$,
\begin{equation}
\Pr\left\{\left|\frac{\tilde{S}_n}{n}-\frac{S_N}{N}\right|>\epsilon\right\}\le 2\exp\left(-\frac{n\epsilon^2}{2\sigma_N^2+\epsilon U_N}\right).
\end{equation}
\end{lemma}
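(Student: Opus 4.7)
The plan is to establish the claimed Bernstein-type inequality via the Chernoff (exponential Markov) method, combined with Hoeffding's classical reduction from sampling without replacement to sampling with replacement. First I would reduce to the one-sided bound: by replacing $\xi$ with $-\xi$ (which leaves $\sigma_N^2$ and $U_N$ unchanged), a union bound gives the factor of $2$ in front of the exponential, so it suffices to bound $\Pr[\tilde S_n/n - S_N/N > \epsilon]$.

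Next, center the data: set $\eta_i := \xi_i - S_N/N$, so that $\sum_i \eta_i = 0$, $\frac{1}{N}\sum_i \eta_i^2 = \sigma_N^2$, and $\max_i \eta_i - \min_i \eta_i = U_N$. Writing $\tilde T_n := \sum_{i=1}^n \eta_{w(i)}$, the Chernoff method gives for any $\lambda > 0$
\[
\Pr\!\left[\tilde T_n > n\epsilon\right] \;\le\; e^{-\lambda n \epsilon}\, \E\!\left[e^{\lambda \tilde T_n}\right].
\]
The central step is then Hoeffding's reduction theorem (Hoeffding 1963, Theorem 4): for any convex function $\phi$, the sum without replacement $\tilde T_n$ and the sum with replacement $S^{\rm wr}_n := \sum_{i=1}^{n} \eta_{U_i}$ (with $U_i$ i.i.d.\ uniform on $[N]$) satisfy $\E[\phi(\tilde T_n)] \le \E[\phi(S^{\rm wr}_n)]$. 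Taking $\phi(x) = e^{\lambda x}$ and using independence on the right-hand side, we obtain
\[
\E[e^{\lambda \tilde T_n}] \;\le\; \bigl(\E[e^{\lambda \eta_{U_1}}]\bigr)^n.
\]

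To bound $\E[e^{\lambda \eta_{U_1}}]$, I would invoke the standard Bennett/Bernstein MGF estimate: since $\eta_{U_1}$ has mean $0$, variance $\sigma_N^2$, and lies in an interval of length $U_N$, one has $\E[e^{\lambda \eta_{U_1}}] \le \exp\bigl(\sigma_N^2 (e^{\lambda U_N} - 1 - \lambda U_N)/U_N^2\bigr)$, which in turn is bounded by $\exp\bigl(\lambda^2 \sigma_N^2 / (2(1 - \lambda U_N/3))\bigr)$ for $\lambda U_N < 3$. Plugging this in and optimizing $\lambda$ (the standard Bernstein choice $\lambda = n\epsilon/(n\sigma_N^2 + n\epsilon U_N/3)$ after rescaling) yields a bound of the form $\exp\bigl(-n\epsilon^2 / (2\sigma_N^2 + c\,\epsilon U_N)\bigr)$; a more careful choice of the intermediate MGF bound recovers the exact constant stated in the lemma.

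The main obstacle is the Hoeffding reduction step itself: proving $\E[\phi(\tilde T_n)] \le \E[\phi(S^{\rm wr}_n)]$ requires exhibiting a coupling (or equivalently, a doubly stochastic matrix representation) that realizes $\tilde T_n$ as a conditional expectation of $S^{\rm wr}_n$ so Jensen's inequality applies. Given that, the remainder is a routine Chernoff--Bernstein computation. Since the statement is invoked verbatim from Massart's paper, I would be comfortable citing Hoeffding's reduction as a black box and devoting my effort to verifying that the denominator $2\sigma_N^2 + \epsilon U_N$ (rather than the more common $2\sigma_N^2 + \frac{2}{3}\epsilon U_N$) is what drops out of the MGF optimization in this formulation.
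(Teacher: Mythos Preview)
The paper does not prove this lemma at all: it is stated verbatim as ``Lemma 3.1 of \cite{massart1986rates}'' and used as a black box in the proof of Lemma~\ref{lem:bernstein_uniform_convergence}. There is therefore no paper proof to compare your proposal against. Your outline (Chernoff method, Hoeffding's reduction from sampling without replacement to with replacement, then the Bennett/Bernstein MGF bound and optimization over $\lambda$) is the standard route to this kind of inequality and is sound in structure; your caveat about the constant in the denominator is well placed, since the usual Bernstein derivation naturally produces $2\sigma_N^2+\tfrac{2}{3}\epsilon U_N$ rather than $2\sigma_N^2+\epsilon U_N$, and matching Massart's exact constant would require tracking his specific MGF bound rather than the generic one.
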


The following lemma is an adaption of Theorem 3.3 in \cite{massart1986rates}.
\begin{lemma}[Bernstein version of Lemma~\ref{lem:covering}]\label{lem:bernstein_uniform_convergence}
Let $\MG\subset [0,b]^{\MZ}$ be a function class and $\MD$ a distribution over $\MZ$, where $\Var_{z\sim \MD}\left[g(z)\right]\le a^2,\forall g\in \MG$ and $|g(z)|\le b,\forall g\in \MG,z\in \MZ$. Then, for any $\epsilon > 0, $, 
\begin{align}
&\Pr_{\{z_1,\cdots,z_n\}\sim \MD^n}\left\{\sup_{g\in \MG}\left|\frac{1}{n}\sum_{i=1}^{n}g(z_i)-\E_{z\sim \MD}\left[g(z)\right]\right|>\epsilon\right\}\notag\\
\le ~&\inf_{N\ge 2n+8b^2/\epsilon^2} \left(16\MN(a^2/16b,\MG,N)
	\exp\left(-\frac{Na^4}{128b^4}\right)+4\MN\left(\frac{\epsilon n}{16N},\MG,N\right)\exp\left(-\frac{n\epsilon^2}{256a^2+64\epsilon b}\right)\right).
\end{align}
\end{lemma}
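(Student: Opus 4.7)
}

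The plan is to follow the classical Pollard/Massart recipe for Bernstein-style uniform deviation, consisting of three ingredients: (i) a ghost-sample symmetrization step that turns the population expectation into an empirical average on an independent copy, (ii) a conditional randomization step that exchanges i.i.d.\ sampling for a random permutation, and (iii) a covering-based discretization of $\MG$ together with a union bound. The parameter $N$ in the final bound will be the size of the \emph{combined} sample after symmetrization.

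First I would introduce an independent ghost sample $\{z'_1,\dots,z'_{N-n}\}\sim\MD^{N-n}$ and write
\[
\sup_{g\in \MG}\left|\tfrac{1}{n}\sum_{i=1}^{n}g(z_i)-\E g\right|
\;\le\;
\sup_{g\in \MG}\left|\tfrac{1}{n}\sum_{i=1}^{n}g(z_i)-\tfrac{1}{N-n}\sum_{j=1}^{N-n}g(z'_j)\right|
\;+\;\sup_{g\in \MG}\left|\tfrac{1}{N-n}\sum_{j=1}^{N-n}g(z'_j)-\E g\right|.
\]
The second term is controlled by Chebyshev (or more precisely by a covering argument using Lemma~\ref{lem:covering}) using that $\Var(g)\le a^2$ and the condition $N-n\ge n+8b^2/\epsilon^2$, giving a deviation below $\epsilon/2$ with sufficient probability. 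Hence the probability in question is at most (twice) the probability that the \emph{two-sample} deviation exceeds $\epsilon/2$.

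Next I would condition on the combined multiset $Z=\{z_1,\dots,z_n,z'_1,\dots,z'_{N-n}\}$ of size $N$; by exchangeability the conditional distribution of the partition into the two halves is that of a uniformly random $n$-subset, i.e.\ sampling without replacement from $Z$. For a \emph{fixed} $g$, I would then apply Lemma~\ref{lem:Bernstein_without_replacement} with $\xi_i=g(z_i)$, which produces a subgaussian/Bernstein bound of the form $2\exp(-n\epsilon^2/(2\sigma_N^2(g)+\epsilon U_N(g)))$, where $\sigma_N^2(g)$ is the empirical variance of $g$ over $Z$ and $U_N(g)\le b$. To move from a single $g$ to $\sup_{g\in\MG}$ I would discretize $\MG$ at scale $\epsilon n/(16 N)$ on the (random) set $Z$: such a cover has cardinality at most $\MN(\epsilon n/(16N),\MG,N)$, and the cover error translates into at most $\epsilon/8$ loss in the empirical averages on both halves simultaneously. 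Taking a union bound over this cover yields the second term $4\MN(\epsilon n/(16N),\MG,N)\exp(-n\epsilon^2/(256a^2+64\epsilon b))$, provided we can replace $\sigma_N^2(g)$ by $2a^2$ (say) uniformly in $g$.

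The main obstacle, and the reason for the \emph{first} covering-number term in the bound, is precisely this uniform control of the empirical variance: $\sigma_N^2(g)$ is random and can a priori exceed $a^2$. To handle it I would apply Lemma~\ref{lem:covering} to the auxiliary class $\MG^2=\{g^2:g\in\MG\}$, whose covering numbers at scale $a^2/16$ are dominated by $\MN(a^2/16 b,\MG,N)$ (since $|g^2-\tilde g^2|\le 2b|g-\tilde g|$), to conclude that with probability at least $1-16\MN(a^2/16 b,\MG,N)\exp(-Na^4/(128 b^4))$ the empirical second moment of every $g\in\MG$ over $Z$ is within $a^2/8$ of its expectation, hence $\sigma_N^2(g)\le 2a^2$ uniformly. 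Combining this high-probability variance control with the conditional Bernstein-for-sampling-without-replacement bound above, and choosing the cover scale carefully so that discretization error is absorbed into $\epsilon/8$ slacks on both the mean and variance parts, gives exactly the two-term infimum stated. Finally I would take the infimum over admissible $N\ge 2n+8b^2/\epsilon^2$ to obtain the displayed bound.
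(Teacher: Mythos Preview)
Your overall architecture---ghost symmetrization, reduction to sampling without replacement via a random permutation, covering of $\MG$ plus Lemma~\ref{lem:Bernstein_without_replacement}, and uniform control of the empirical variance via Lemma~\ref{lem:covering} applied to $\MG^2$---matches the paper exactly, and your identification of the variance-control step as the source of the first covering-number term is spot on.

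There is, however, a genuine gap in your first step. You write a triangle-inequality decomposition
\[
\sup_{g}\Bigl|\hat\E_n g-\E g\Bigr|\le \sup_{g}\Bigl|\hat\E_n g-\hat\E_{n'} g\Bigr|+\sup_{g}\Bigl|\hat\E_{n'} g-\E g\Bigr|
\]
and then propose to control the second supremum by ``Chebyshev (or more precisely by a covering argument using Lemma~\ref{lem:covering})''. But that second supremum is precisely the quantity you are trying to bound, only with a larger sample; invoking Lemma~\ref{lem:covering} here would produce an \emph{additional} Hoeffding-type covering term (scaling with $\exp(-n'\epsilon^2/b^2)$) that does not appear in the stated bound, and certainly does not yield a clean factor of ``twice''. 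The paper---following the classical Vapnik--Chervonenkis symmetrization---avoids this circularity: one conditions on the event $\{\sup_g|\hat\E_n g-\E g|>\epsilon\}$, which fixes a single witness $g^*$ depending only on the first sample, and then applies \emph{pointwise} Hoeffding to $g^*$ on the independent ghost sample of size $n'\ge 8b^2/\epsilon^2$ to get $|\hat\E_{n'}g^*-\E g^*|\le\epsilon/2$ with probability at least $1/2$. This is what produces the inequality
\[
\Pr\Bigl\{\sup_g|\hat\E_n g-\E g|>\epsilon\Bigr\}\le 2\,\Pr\Bigl\{\sup_g|\hat\E_n g-\hat\E_{n'} g|>\epsilon/2\Bigr\}
\]
with no extra covering-number cost. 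Once you replace your triangle-inequality step by this conditioning argument, the rest of your outline goes through and reproduces the paper's bound.
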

\begin{proof} The lemma is proved in three steps.
\paragraph{Step 1: Ghost sampling.} Let $n'=N-n$. Let $\{z_1,\cdots,z_N\}\sim \MD^{N}$ be $N$ i.i.d.~random samples. For any $g\in \MG$, we define the following shorthand, 
\[
\hat{\E}_{n}[g(z)] := \frac{1}{n}\sum_{i=1}^{n}g(z_i),\quad \hat{\E}_{n'}[g(z)]:= \frac{1}{n'}\sum_{i=n+1}^{N}g(z_i),\quad \hat{\E}_{N}[g(z)]:= \frac{1}{N}\sum_{i=1}^{N}g(z_i).
\]
In this step, we prove that for $N\ge 2n+8b^2/\epsilon^2$, 
\begin{equation}
\Pr\left\{\sup_{g\in \MG}\left|\hat{\E}_n[g(z)]-\E_{x\sim \MD}[g(z)]\right|>\epsilon\right\}\le 2\Pr\left\{\sup_{g\in \MG}\left|\hat{\E}_n[g(z)]-\hat{\E}_{n'}[g(z)]\right|>\epsilon/2\right\}.
\end{equation}

Let $Z_n=\{x_1,\cdots,x_n\}$ and $Z_{n'}=\{x_{n+1},\cdots,x_{N}\}$ Given $Z_n$, let $U(Z_{n})$ be the event that $$\sup_{g\in \MG}\left|\hat{\E}_n[g(x)]-\E_{x\sim \MD}[g(x)]\right|>\epsilon.$$ If $U(Z_n)$ occurs, then there exists $g^*\in \MG$ such that $\left|\hat{\E}_n[g^*(x)]-\E_{x\sim \MD}[g^*(x)]\right|>\epsilon.$ Then we have,
\begin{align}
&\Pr\left\{\sup_{g\in \MG}\left|\hat{\E}_n[g(z)]-\hat{\E}_{n'}[g(z)]\right|>\epsilon/2\right\}\\
\ge ~&\Pr\left\{U(Z_{n})\right\}\Pr\left\{\sup_{g\in \MG}\left|\hat{\E}_{n}[g(x)]-\hat{\E}_{n'}[g(x)]\right|>\epsilon/2 \mid U(Z_{n})\right\}\\
\ge ~&\Pr\left\{U(Z_{n})\right\}\Pr\left\{\left|\hat{\E}_{n}[g^*(x)]-\hat{\E}_{n'}[g^*(x)]\right|>\epsilon/2 \mid U(Z_{n})\right\}\\
\ge ~&\Pr\left\{U(Z_{n})\right\}\Pr\left\{\left|\hat{\E}_{n}[g^*(x)]-\E_{x\sim \MD}[g^*(x)]\right|>\epsilon , \left|\hat{\E}_{n'}[g^*(x)]-\E_{x\sim \MD}[g^*(x)]\right|\le \epsilon/2 \mid U(Z_{n})\right\}\\
\ge ~&\frac{1}{2}\Pr\left\{U(Z_{n})\right\}\Pr\left\{\left|\hat{\E}_{n}[g^*(x)]-\E_{x\sim \MD}[g^*(x)]\right|>\epsilon \mid U(Z_{n})\right\}
\label{equ:step1_hoeffding}\\
\ge ~&\frac{1}{2}\Pr\left\{U(Z_{n})\right\},
\end{align}
where Eq.~\eqref{equ:step1_hoeffding} comes from Hoeffding inequality for $n'=N-n\ge 8b^2/\epsilon^2.$

\paragraph{Step 2: Symmetrization.} Let $w$ be a random permutation over $[N],$ independent of the choice of $\{z_1,\cdots,z_{N}\}.$ Define $$\hat{\E}_{w,n}[g(z)]:= \frac{1}{n}\sum_{i=1}^{n}g(z_{w(i)}),\quad \hat{\E}_{w,n'}[g(z)]:= \frac{1}{n'}\sum_{i=n+1}^{N}g(z_{w(i)})$$ In this step we prove that 
\begin{equation}
\Pr\left\{\sup_{g\in \MG}\left|\hat{\E}_n[g(z)]-\hat{\E}_{n'}[g(z)]\right|>\epsilon/2\right\}\le \Pr\left\{\sup_{g\in \MG}\left|\hat{\E}_{w,n}[g(z)]-\hat{\E}_{N}[g(z)]\right|>\epsilon/4\right\}.
\end{equation}

Note that since $\{z_1,z_2,\cdots,z_N\}$ has the same distribution as $\{z_{w(1)},z_{w(2)},\cdots,z_{w(N)}\},$ we have
$$\Pr\left\{\sup_{g\in \MG}\left|\hat{\E}_n[g(z)]-\hat{\E}_{n'}[g(z)]\right|>\epsilon/2\right\}=\Pr\left\{\sup_{g\in \MG}\left|\hat{\E}_{w,n}[g(z)]-\hat{\E}_{w,n'}[g(z)]\right|>\epsilon/2\right\}.$$

It follows from basic algebra that, 
\begin{align*}
&\frac{n'}{N}\left|\hat{\E}_{n,w}[g(z)]-\hat{\E}_{n',w}[g(z)]\right|
=\left|\frac{n'}{N}\hat{\E}_{n,w}[g(z)]-\frac{1}{N}\sum_{i=n+1}^{N}[g(z_{w(i)})]\right|\\
& \qquad\qquad =\left|\left(\frac{n'}{N}+\frac{n}{N}\right)\hat{\E}_{n,w}[g(z)]-\frac{1}{N}\sum_{i=1}^{N}[g(z_{w(i)})]\right|
=\left|\hat{\E}_{n,w}[g(z)]-\hat{\E}_{N}[g(z)]\right|.
\end{align*}
When $N>2n$ we get, $$\frac{1}{2}\left|\hat{\E}_{n,w}[g(z)]-\hat{\E}_{n',w}[g(z)]\right|\le \left|\hat{\E}_{n,w}[g(z)]-\hat{\E}_{N}[g(z)]\right|.$$ Therefore,
$$\Pr\left\{\sup_{g\in \MG}\left|\hat{\E}_{w,n}[g(z)]-\hat{\E}_{w,n'}[g(z)]\right|>\epsilon/2\right\}\le \Pr\left\{\sup_{g\in \MG}\left|\hat{\E}_{w,n}[g(z)]-\hat{\E}_{N}[g(z)]\right|>\epsilon/4\right\}.$$

\paragraph{Step 3: Covering Argument.} Define $$\hat{\Var}_{N}[g(z)]:= \frac{1}{N}\sum_{i=1}^{N}g(z_i)^2-\left(\frac{1}{N}\sum_{i=1}^{N}g(z_i)\right)^2.$$ First we show that,
\begin{equation}\label{equ:step3_part1}
\Pr\left\{\sup_{g\in \MG}\hat{\Var}_N[g(z)]>2a^2\right\}\le 8\MN\left(\frac{a^2}{16b},\MG,N\right)\exp\left(-\frac{Na^4}{128b^4}\right).
\end{equation}
Let $\MG^2=\{g^2\mid g\in \MG\}.$ Invoking Lemma~\ref{lem:covering}, we have
$$\Pr\left\{\sup_{g\in \MG}\left|\hat{\Var}_N[g(z)]-\Var[g(z)]\right|>a^2\right\}\le 8\MN\left(a^2/8,\MG^2,N\right)\exp\left(-\frac{Na^4}{128b^4}\right).$$

Note that, since $\left|g_1(z)^2-g_2(z)^2\right|=\left|(g_1(z)-g_2(z))(g_1(z)+g_2(z))\right|\le 2b\left|g_1(z)-g_2(z)\right|$ for all $g\in \MG,z\in \MZ$, we have $\MN\left(a^2/8,\MG^2,N\right)\le \MN\left(a^2/16b,\MG,N\right).$ Combining with the fact that $\Var[g(z)]\le a^2$, we get Eq.~\eqref{equ:step3_part1}.

Let $Z_N=\{z_1,\cdots,z_N\}.$ Let $\MC=\{g'_1,\cdots,g'_{|\MC|}\}$ be the minimal $(n\epsilon/16N)$-cover over $\MG\rvert_{Z_N}$. We have $|\MC|\le \MN(n\epsilon/16N,\MG,N),$ and there exists a function $\pi:\MG\to [|\MC|]$ such that, $$\frac{1}{N}\left|\sum_{i=1}^{N}\left(g(z_i)-g'_{\pi(g)}(z_i)\right)\right|\le n\epsilon/16N,\quad\forall g\in \MG.$$ Note that $$\left|\hat{\E}_{w,n}[g(z)]-\hat{\E}_{w,n}[g'_{\pi(g)}(z)]\right|= \frac{1}{n}\left|\sum_{i=1}^{n}\left(g(z_{w(i)})-g'_{\pi(g)}(z_{w(i)})\right)\right|\le \frac{N}{n}\cdot n\epsilon/16N=\epsilon/16.$$ Consequently, 
\begin{align*}
&\sup_{g\in \MG}\left|\hat{\E}_{w,n}[g(z)]-\hat{\E}_{N}[g(z)]\right|\\
\le~ &\sup_{g\in \MG}\left|\hat{\E}_{w,n}[g'_{\pi(g)}(z)]-\hat{\E}_{N}[g'_{\pi(g)}(z)]\right|+\sup_{g\in \MG}\left|\hat{\E}_{w,n}[g(z)]-\hat{\E}_{w,n}[g'_{\pi(g)}(z)]\right|+\sup_{g\in \MG}\left|\hat{\E}_{N}[g(z)]-\hat{\E}_{N}[g'_{\pi(g)}(z)]\right|\\
\le~ &\sup_{g'\in \MC}\left|\hat{\E}_{w,n}[g'(z)]-\hat{\E}_{N}[g'(z)]\right|+\epsilon/16+\epsilon n/16N\\
\le~ &\sup_{g'\in \MC}\left|\hat{\E}_{w,n}[g'(z)]-\hat{\E}_{N}[g'(z)]\right|+\epsilon/8.
\end{align*}
Therefore for any $Z_n\in \MZ^{n}$, $$\Pr\left\{\sup_{g\in \MG}\left|\hat{\E}_{w,n}[g(z)]-\hat{\E}_{N}[g(z)]\right|>\epsilon/4\mid Z_N\right\}\le \Pr\left\{\sup_{g'\in \MC}\left|\hat{\E}_{w,n}[g'(x)]-\hat{\E}_N[g'(x)]\right|>\epsilon/8\mid Z_N\right\}.$$
Given $Z_N$, let $\sigma_{N}^2=\sup_{g'\in \MC}\hat{\Var}_N[g'(z)]$ By Lemma~\ref{lem:Bernstein_without_replacement} and union bound, we get
\begin{equation}
\Pr\left\{\sup_{g'\in \MC}\left|\hat{\E}_{w,n}[g'(x)]-\hat{\E}_N[g'(x)]\right|>\epsilon/8\mid Z_N\right\}\le 2|\MC|\exp\left(-\frac{n\epsilon^2/64}{2\sigma_N^2+\epsilon b}\right).
\end{equation}
Combining with Eq.~\eqref{equ:step3_part1}, we have
\begin{align*}
&\Pr\left\{\sup_{g\in \MG}\left|\hat{\E}_{w,n}[g(x)]-\hat{\E}_N[g(x)]\right|>\epsilon/4\right\}\\
& \qquad\qquad \le 2\MN(n\epsilon/16N,\MG,N)\exp\left(-\frac{n\epsilon^2/64}{4a^2+\epsilon b}\right)+8\MN\left(\frac{a^2}{16b},\MG,N\right)\exp\left(-\frac{Na^4}{128b^4}\right).
\end{align*}

The result follows from combining the three steps together.
\end{proof}
\begin{corollary}\label{cor:bernstein_uniform_convergence}
By setting $N=3nb^4/a^4,$ we get
\begin{equation}\label{equ:bernstein_cor}
\Pr_{\{z_1,\cdots,z_n\}\sim \MD^n}\left\{\sup_{g\in \MG}\left|\frac{1}{n}\sum_{i=1}^{n}g(z_i)-\E_{z\sim \MD}\left[g(z)\right]\right|>\epsilon\right\}\le 20\MN\left(\frac{\epsilon a^4}{48b^4},\MG,\frac{3nb^4}{a^4}\right)\exp\left(-\frac{n\epsilon^2/64}{4a^2+\epsilon b}\right).
\end{equation}
\end{corollary}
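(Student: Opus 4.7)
The plan is to substitute $N = 3nb^4/a^4$ directly into Lemma~\ref{lem:bernstein_uniform_convergence} and then merge the two resulting terms into the single term claimed in the corollary. First I would verify the admissibility condition $N \ge 2n + 8b^2/\epsilon^2$ from the lemma. Since $g(z) \in [0,b]$ forces $a \le b$, we have $N = 3nb^4/a^4 \ge 3n$, which leaves $n$ slack to absorb $8b^2/\epsilon^2$; if instead $\epsilon$ is so small that this slack is inadequate, the corollary's bound exceeds 1 anyway and the statement is vacuous, so it suffices to treat the regime where $\epsilon$ is bounded below by a modest multiple of $b/\sqrt{n}$.

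Next I would carry out the substitution in each of the two terms. In the first term, $\exp(-Na^4/128b^4) = \exp(-3n/128)$, and the tolerance $a^2/16b$ is unchanged, giving $16\,\MN(a^2/16b,\MG,N)\exp(-3n/128)$. In the second term, the tolerance $\epsilon n/(16N)$ simplifies to $\epsilon a^4/(48 b^4)$, so the term becomes $4\,\MN(\epsilon a^4/48 b^4,\MG,N)\exp\!\bigl(-n\epsilon^2/(256a^2 + 64\epsilon b)\bigr)$, which already matches the form in Eq.~\eqref{equ:bernstein_cor} up to the constant $4$ in front.

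The remaining step is to dominate the first term by the second (up to the constant). Using that $\MN(\cdot,\MG,N)$ is non-increasing in its tolerance, I would observe $a^2/16b \ge \epsilon a^4/48 b^4 \iff 3b^3 \ge \epsilon a^2$, which follows from $a \le b$ provided $\epsilon \le 3b$ (again trivially ensured by $g \in [0,b]$). Thus $\MN(a^2/16b,\MG,N) \le \MN(\epsilon a^4/48 b^4,\MG,N)$. For the exponential factor I would compare the rates: we need $3/128 \ge \epsilon^2/(256a^2 + 64\epsilon b)$, equivalently $6a^2 + 1.5\epsilon b \ge \epsilon^2$, which holds whenever $\epsilon \le b$ (since then $1.5\epsilon b \ge 1.5\epsilon^2 \ge \epsilon^2$). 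Hence $\exp(-3n/128) \le \exp\!\bigl(-n\epsilon^2/(256a^2 + 64\epsilon b)\bigr)$, and adding the two constants $16 + 4 = 20$ yields the stated bound.

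The proof is essentially arithmetic once the lemma is in hand; the main thing to be careful about is the regime check, i.e., confirming that the inequalities $a \le b$ and $\epsilon \le b$ (or equivalently, that the target bound is non-trivial) suffice to make the covering-number and exponential comparisons go through. No deeper technical obstacle is expected.
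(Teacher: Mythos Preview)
Your overall approach matches the paper's: substitute $N=3nb^4/a^4$ into Lemma~\ref{lem:bernstein_uniform_convergence}, then use monotonicity of the covering number and of the exponential to merge the two terms into one with constant $16+4=20$. The comparison inequalities you wrote for the covering radii and for the exponents are correct.

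The one genuine gap is your regime check. You argue $N\ge 3n$, hence need $n\ge 8b^2/\epsilon^2$, and claim that otherwise the bound exceeds $1$. That vacuousness claim is false when $a\ll b$: for example $b=1$, $a=0.01$, $\epsilon=10^{-3}$, $n=10^6$ gives $n<8b^2/\epsilon^2=8\times10^6$, yet the exponent in Eq.~\eqref{equ:bernstein_cor} is about $-11$, so the right-hand side can be far below $1$. The issue is that by passing from $N=3nb^4/a^4$ to the crude bound $N\ge 3n$ you threw away exactly the factor $b^4/a^4$ that makes the admissibility condition easy to meet. The paper's fix is to keep the exact value: split instead at $n\le 8a^4/(\epsilon^2 b^2)$. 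In that regime $n\epsilon^2\le 8a^4/b^2\le 8a^2$, so the exponent satisfies $\frac{n\epsilon^2/64}{4a^2+\epsilon b}\le \frac{8a^2/64}{4a^2}=\tfrac{1}{32}$ and the right-hand side is at least $20e^{-1/32}>1$, hence vacuous. In the complementary regime $n>8a^4/(\epsilon^2 b^2)$ one has $nb^4/a^4>8b^2/\epsilon^2$, and since $b\ge a$ also $2nb^4/a^4\ge 2n$, so $N=3nb^4/a^4\ge 2n+8b^2/\epsilon^2$ and the lemma applies. With this corrected threshold your argument goes through verbatim.
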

\begin{proof} Note that since $b^2\ge a^2,$ the inequality holds trivially when $n\le 8a^4/(\epsilon^2 b^2).$ When $n>8a^4/(\epsilon^2 b^2),$ we have $N=3nb^4/a^4>2n+8b^2/\epsilon^2.$ Eq.~\eqref{equ:bernstein_cor} follows by monotonicity of $\MN(\cdot,\MG,N)$ and $\exp(\cdot).$
\end{proof}

\end{document}